\newif\ifarxiv
\DeclareMathOperator*{\E}{\mathbb{E}}
\newcommand{\CCV}{\mathrm{CCV}}
\newcommand{\CBR}{\mathrm{CBR}}
\newcommand{\Reg}{\mathrm{Reg}}
\newcommand{\ext}{\mathrm{ext}}
\newcommand{\swap}{\mathrm{swap}}
\newcommand{\1}{\mathbbm{1}}
\newcommand{\bc}{\mathbf{c}}
\newcommand{\adapt}{\mathrm{adapt}}
\newcommand{\dyn}{\mathrm{dyn}}
\newcommand{\cA}{\mathcal{A}}
\newcommand{\cC}{\mathcal{C}}
\newcommand{\cE}{\mathcal{E}}
\newcommand{\cF}{\mathcal{F}}
\newcommand{\cN}{\mathcal{N}}
\newcommand{\cS}{\mathcal{S}}
\newcommand{\cU}{\mathcal{U}}
\newcommand{\cX}{\mathcal{X}}
\newcommand{\cY}{\mathcal{Y}}
\DeclareMathOperator*{\argmax}{arg\,max}
\DeclareMathOperator*{\argmin}{arg\,min}
\newcommand{\bR}{\mathbb{R}}
\newcommand{\bN}{\mathbb{N}}
\newtheorem{definition}{Definition}
\newtheorem{lemma}{Lemma}
\newtheorem{theorem}{Theorem}
\newtheorem{remark}{Remark}
\newtheorem{corollary}{Corollary}
\newtheorem{assumption}{Assumption}
\newcommand{\fea}{\mathrm{fea}}
\newcommand{\infea}{\mathrm{inf}}
\begin{document}

\title{Dynamic Regret Bounds for Online Omniprediction with Long Term Constraints}

\author[1]{Yahav Bechavod}
\author[2]{Jiuyao Lu}
\author[1]{Aaron Roth}
\affil[1]{Department of Computer and Information Sciences, University of Pennsylvania}
\affil[2]{Department of Statistics and Data Science, University of Pennsylvania}

\date{\today}

\newcommand{\fix}{\marginpar{FIX}}
\newcommand{\new}{\marginpar{NEW}}

\maketitle

\begin{abstract}
We present an algorithm guaranteeing dynamic regret bounds for online omniprediction with long term constraints. The goal in this recently introduced problem is for a learner to generate a sequence of predictions which are broadcast to a collection of downstream decision makers. Each decision maker has their own utility function, as well as a vector of constraint functions, each mapping their actions and an adversarially selected state to reward or constraint violation terms.  The downstream decision makers select actions ``as if'' the state predictions are correct, and the goal of the learner is to produce predictions such that all downstream decision makers choose actions that give them worst-case utility guarantees while minimizing worst-case constraint violation. Within this framework, we give the first algorithm that obtains simultaneous \emph{dynamic regret} guarantees for all of the agents --- where regret for each agent is measured against a potentially changing sequence of actions across rounds of interaction, while also ensuring vanishing constraint violation for each agent. Our results do not require the agents themselves to maintain any state --- they only solve one-round constrained optimization problems defined by the prediction made at that round.
\end{abstract}

\section{Introduction} \label{sec:introduction}

In the problem of learning with long term constraints, there is a decision maker with an action space $\cA$ and an adversary with an outcome space $\cY$. The learner has a utility function $u:\cA\times \cY\rightarrow [0,1]$, and a vector-valued constraint function $c:\cA\times \cY \rightarrow [-1,1]^d$. In rounds $t = 1,\ldots,T$, a \emph{learner} chooses actions $a_t \in \cA$ and an \emph{adversary} chooses outcomes $y_t \in \cY$. The learner then obtains utility $u(a_t,y_t)$, and suffers a marginal constraint increment $c(a_t,y_t)$. The goal of the learner is to satisfy all of the constraints marginally (up to a vanishing regret term) --- i.e. to guarantee that for all sequences of outcomes:
$$\CCV(1:T) \doteq \max_{j} \sum_{t=1}^T c_j(a_t,y_t) \leq o(T),$$
while simultaneously guaranteeing some notion of \emph{regret} to the best action in some benchmark class. It has been known since \cite{MannorTY09} that in adversarial settings it is not possible to compete against the best fixed action in hindsight that satisfies the constraints \emph{marginally}. 
Instead, the  standard benchmark in this literature is the set of actions that in hindsight satisfy the realized constraints \emph{every round}: $$\cA_{1:T}^\bc = \{a\in \cA : c_j(a,y_t) \leq 0   \ \textrm{for every } t \in [T] \ \textrm{and } j \in [J]\}.$$
The corresponding standard goal in this literature (see e.g. \citet{Sun17safety,Castiglioni22,qiu2023gradient,sinha2024optimal}) is to minimize external regret with respect to this benchmark:
$$\Reg_\ext(1:T) \doteq \max_{a \in \cA_{1:T}^\bc} \sum_{t=1}^T \left(u(a,y_t) - u(a_t,y_t) \right) \leq o(T).$$
A more ambitious goal, studied by a sub-thread of this literature, is to compete with a \emph{changing benchmark sequence of actions}, so long as the benchmark sequence does not change too quickly \citep{ChenLG17,chen2018,chen2018bandit,CaoL19,vaze2022dynamic,liu2022,lekeufack2024optimistic}. This is called a \emph{dynamic regret} benchmark. For continuous action spaces, there are a variety of ways to measure ``change'', but we state here a version for discrete categorical action spaces, which is the focus of our paper. First we define a richer benchmark that allows for \emph{changing} sequences of benchmark actions that satisfy the constraints at each round.
$$\cA_{1:T}^\textrm{dyn} = \{\vec{a}\in \cA^T : c_j(\vec{a}_t,y_t) \leq 0   \ \textrm{for every } t \in [T] \ \textrm{and } j \in [J]\}.$$
For a given sequence of actions $\vec{a} \in \cA_{1:T}^\textrm{dyn}$, we write $\Delta (\vec{a}) = |\{t : \vec{a}_t \neq \vec{a}_{t+1}\}|$ for the number of times the action changes in the benchmark sequence. The goal is to obtain diminishing dynamic regret:
$$\max_{\vec{a} \in \cA_{1:T}^\textrm{dyn}}\sum_{t=1}^T \left(u(\vec{a}_t,y_t)-u(a_t,y_t)\right) - \Reg_{\ext-\dyn}(\vec a) \leq 0.$$
Here we want the ``dynamic regret bound'' $\Reg_{\ext-\dyn}(\vec a)$ to be $o(T)$ for all $\vec{a}$ such that $\Delta(\vec{a}) \leq o(T)$.   

Dynamic regret bounds are a very natural objective to strive for. The reason that we study a sequential adversarial environment is that we expect the environment to change (in potentially unpredictable ways) over time. In such environments, we naturally expect the optimal decision to also change over time, and in the constrained optimization setup, we might worry that the static benchmark $\cA_{1:T}^\bc$ is empty, even if there are feasible actions at every time step. Hence, the literature on online learning with long term constraints studying external regret generally make a rather strong assumption regarding the existence of a \emph{single action} that satisfies all of the constraints across \emph{all rounds}. Dynamic regret bounds, on the other hand, overcome this concern for slowly changing environments, by allowing for changes within the benchmark sequence while only requiring \emph{local feasibility}  --- that the action we compare to on each round satisfies the constraints of that specific round (and not across all rounds of the interaction).


While the literature on online learning with long term constraints generally couples the problem of predicting outcomes $y_t$ and choosing actions $a_t$ by focusing on algorithms for a \emph{single decision maker},  \cite{BLR25} recently introduced the \emph{omniprediction} variant of this problem (c.f. \cite{gopalan2021omnipredictors}) in which a single centralized learner broadcasts predictions $p_t$ each round for the outcome $y_t$, and then multiple downstream decision makers (who differ in their utility and constraint functions) choose actions as simple functions of $p_t$. The goal is to make predictions that simultaneously guarantee all such decision makers worst-case regret and constraint violation bounds. \cite{BLR25} shows how to obtain this with respect to the benchmark class $\cA_{1:T}^\bc$, but their algorithm does not extend to give dynamic regret bounds. In this work we show how to make predictions in a way that gives  dynamic regret bounds (in fact stronger dynamic \emph{swap} regret bounds) simultaneously for many downstream decision makers. A detailed discussion of additional related work is deferred to Appendix \ref{app:related-work}.



\subsection{Our Results}

\paragraph{Better Subsequence Regret Bounds}

Previous work on online omniprediction with long term constraints gave regret bounds that held not just marginally over the whole sequence, but simultaneously on an arbitrarily specified collection of \emph{subsequences} of it. In principle dynamic regret bounds can be extracted from subsequence regret bounds like this, by taking the set of subsequences to be the set of all $\approx T^2$ contiguous intervals in $\{1,\ldots,T\}$. Unfortunately  the bounds obtained by \citet{BLR25} depend \emph{linearly} on the number of specified subsequences, which does not yield nontrivial dynamic regret bounds. Our main contribution is a new algorithm giving regret and constraint violation guarantees on arbitrary  subsequences, with a dependence on the number of subsequences scaling only \emph{logarithmically}. Dynamic regret bounds fall out as a special case.

\paragraph{Stronger Notions of Regret}
In fact, since our subsequence regret bounds are stronger \emph{swap regret} bounds, the dynamic regret bounds we obtain are stronger than those that have been previously studied in the literature on learning with long term constraints: we give bounds on what we call \emph{dynamic swap regret}. Our new  benchmark allows each decision maker to compete with a sequence of actions that results from applying a \emph{swap function} remapping the decision maker's realized actions to alternatives. We allow the swap function itself to change with time. The traditional notion of dynamic regret is the special case in which these swap functions are constant valued.

\paragraph{Easier Implementation for Downstream Agents}

Finally, \citet{BLR25} obtained their results by requiring downstream agents to map predictions to actions using an elimination-based algorithm, which required all agents to actively maintain state --- the set of actions that had not yet violated any of their constraints. Our algorithm allows downstream decision makers to map predictions to actions in an entirely stateless way: they simply evaluate both their constraint function and objective function as if our predictions were correct, and take the action that solves the resulting one-round constrained optimization problem.

\section{Model and Preliminaries} \label{sec:preliminaries}
The model largely follows the framework for online omniprediction with long-term constraints introduced in \cite{BLR25}.
Let $\cX$ denote the feature space and $\cY$ denote the outcome/label space. Throughout, we consider $\cY = [0,1]^d$. 
We consider a set of agents $\cN$ with an arbitrary action space $\cA$. Each agent is equipped with a tuple $(u,c_1,\ldots,c_J)$, which includes a utility function $u: \cA \times \cY \to [0,1]$ and $J$ constraint functions $\{c_j: \cA \times \cY \to [-1,1]\}_{j \in [J]}$\footnote{Sometimes online adversarial learning problems are described by an adversary choosing a different utility and/or constraint function at each step. This is equivalent to having a fixed state-dependent utility function/constraint functions, and having an adversary choose state.}.
We also sometimes write the constraint functions as a single vector valued function $\bc = (c_1,\ldots,c_J)$. 
Since agents are uniquely defined by their corresponding tuples, we treat agents and their tuples interchangeably.
We assume that the utility functions are linear and Lipschitz-continuous in $y$. 
\begin{assumption}
    Fix any utility function $u: \cA \times \cY \to [0,1]$. We assume that for every action $a \in \cA$, $u(a, y)$ is linear in $y$, i.e. $u(a, k_1 y_1 + k_2 y_2) = k_1 u(a, y_1) + k_2 u(a, y_2)$ for all $k_1,k_2 \in \bR$, $y_1,y_2 \in \cY$. Moreover, we assume there exists a universal constant $L_\cU$ such that any utility function $u(a,y)$ is $L_\cU$-Lipschitz in $y$ in the $\ell_\infty$ norm: for any $a \in \cA$ and $y_1,y_2 \in \cY$, $|u(a, y_1) - u(a, y_2)| \leq L_\cU\|y_1 - y_2\|_\infty$. 
\end{assumption}

In addition to the above assumption from \cite{BLR25}, we also assume that the constraint functions are linear and Lipschitz-continuous in $y$. This will enable the purely prediction-based decision rule we introduce later, where agents select actions that are predicted to be feasible without needing to track historical constraint violations as was required in \cite{BLR25}.
\begin{assumption}
    Fix any constraint function $c_j: \cA \times \cY \to [-1,1]$. We assume that for every action $a \in \cA$, $c_j(a, y)$ is linear in $y$, i.e. $c_j(a, k_1 y_1 + k_2 y_2) = k_1 c_j(a, y_1) + k_2 c_j(a, y_2)$ for all $k_1,k_2 \in \bR$, $y_1,y_2 \in \cY$. Moreover, we assume there exists a universal constant $L_\cC$ such that any constraint function $c_j(a,y)$ is $L_\cC$-Lipschitz in $y$ in the $\ell_\infty$ norm: for any $a \in \cA$ and $y_1,y_2 \in \cY$, $|c_j(a, y_1) - c_j(a, y_2)| \leq L_\cC\|y_1 - y_2\|_\infty$. 
\end{assumption}

\begin{remark}
    For simplicity we assume that the utility functions and constraint functions are \emph{linear} in $y$, but we can equally well handle the case in which the utility functions are \emph{affine} in $y$, as we can augment the label space with an additional coordinate that takes constant value 1. This preserves the convexity of the label space and allows for arbitrary constant offsets in the utility/constraint of each action. Assuming linear/affine utility functions is only more general than the standard assumption that decision makers are \emph{risk neutral} in the sense that in the face of randomness, decision makers act to maximize their expected utility. If $\cY$ represents the set of probability distributions over outcomes, any risk neutral decision maker has a linear utility function by linearity of expectation. 
\end{remark}

We take the role of an online/sequential forecaster producing predictions that will be consumed by agents. We consider the following repeated interaction between a forecaster, agents, and an adversary. In every round $t \in [T]$:
\begin{enumerate}[(1)]
    \item The adversary selects a feature vector $x_t \in \cX$ and a distribution over outcomes $Y_t \in \Delta \cY$;
    \item The forecaster observes the feature $x_t$, produces a distribution over predictions $\pi_t\in\Delta\cY$, from which a prediction $p_t \in \cY$ is sampled; 
    \item Each agent chooses an action $a_t$ as a function of the prediction $p_t$ and the history;
    \item The adversary reveals an outcome $y_t \sim Y_t$, and the agent obtains utility $u(a_t, y_t)$ and the constraint loss vector $\{c_j(a_t, y_t)\}_{j \in [J]}$.
\end{enumerate}

We will focus on performance over a collection of subsequences $\cS$, where each subsequence $S \in \cS$ is a subset of $[T]$. 
These subsequences need not be fixed in advance but can be defined dynamically. A subsequence $S \in \cS$ is generally characterized by an indicator function $h_S: [T] \times \cX \to \{0,1\}$. For any round $t \in [T]$, the round is part of the subsequence $S$ if and only if $h_S(t, x_t) = 1$. This flexible definition allows subsequences to be based on the round index $t$, the feature $x_t$, or both. 

Agents aim to maximize their cumulative utilities over every subsequence in $\cS$: $\sum_{t \in S} u(a_t,y_t)$
while minimizing their cumulative constraint violation (CCV) over every subsequence in $\cS$:
\[
    \CCV(S) \coloneqq \max_{j \in [J]} \sum_{t \in S} c_j(a_t,y_t) \le o(|S|).
\]
We treat utility maximization as an objective and cumulative constraint violation as a requirement: $\CCV(S)$ must be sublinear in the length of the subsequence, $|S|$. 

We measure performance against different benchmark classes. The fundamental building block for our benchmark classes is the set of actions that are feasible at a specific round $t \in [T]$ with a margin of $\lambda \ge 0$:
\begin{align*}
    \cA_{t}^{\bc,\lambda} = \left\{ a \in \cA: c_j(a,y_t) \le -\lambda \; \text{ for every } j \in [J] \right\}.
\end{align*}
The margin $\lambda$ parameterizes the difficulty of the benchmarks; a smaller $\lambda$ yields a larger and thus more competitive set of actions\footnote{Assuming the existence of such a strongly feasible action is known in the literature on learning with long term constraints as Slater's condition \citep{neelyYu2017,ChenLG17,CaoL19,YuNeely2020,Castiglioni22}). We later additionally give bounds that hold without making this assumption.}.
We note that this margin can be set independently for each subsequence $S \in \cS$, and our final results will be achieved by choosing its value as a function of the subsequence length $|S|$. 

In the literature on learning with long-term constraints, a standard benchmark is the set of actions that are feasible at every round. We generalize this to our multi-subsequence framework by requiring this condition to hold throughout a given subsequence $S \in \cS$:
\begin{align*}
    \cA_{S}^{\bc,\lambda} = \cap_{t \in S} \cA_{t}^{\bc,\lambda} = \left\{ a \in \cA: c_j(a,y_t) \le -\lambda \; \text{ for every } t \in S \text{ and } j \in [J] \right\}.
\end{align*}

Competing with the best fixed action from this class in hindsight leads to the notion of \emph{constrained external regret}.
\begin{definition}[Constrained External Regret over Subsequence $S$] \label{def:constrained-external-regret}
    Fix an agent with a utility function $u: \cA \times \cY \to [0,1]$ and a constraint function $\bc: \cA \times \cY \to [-1,1]^J$. Fix a subsequence $S \subseteq [T]$. 
    For a sequence of actions $a_1,\ldots,a_T$ and outcomes $y_1,\ldots,y_T$, the agent's constrained external regret over the subsequence $S$ is:
    \[
        \Reg_\ext(u,\bc,\lambda,S) = \max_{a \in \cA_{S}^{\bc,\lambda}}\sum_{t \in S} \left( u(a,y_t) - u(a_t,y_t) \right).
    \]
\end{definition}

We will also compete with a more demanding benchmark based on action modification rules. For the benchmark class $\cA_S^{\bc,\lambda}$, an action modification rule is any function $\phi: \cA \to \cA_S^{\bc,\lambda}$ that consistently maps an agent's actions to alternatives within the benchmark class. Competing with the best such rule in hindsight leads to the notion of \emph{constrained swap regret}. This is a stronger notion than constrained external regret, as constrained external regret can be viewed as a special case where the action modification rule is restricted to being a constant function.
\begin{definition}[Constrained Swap Regret over Subsequence $S$] \label{def:constrained-swap-regret}
    Fix an agent with a utility function $u: \cA \times \cY \to [0,1]$ and a constraint function $\bc: \cA \times \cY \to [-1,1]^J$. Fix a subsequence $S \subseteq [T]$. 
    For a sequence of actions $a_1,\ldots,a_T$ and outcomes $y_1,\ldots,y_T$, the agent's constrained swap regret over the subsequence $S$ is:
    \[
        \Reg_\swap(u,\bc,\lambda,S) = \max_{\phi: \cA \to \cA_S^{\bc,\lambda}} \sum_{t \in S} \left( u(\phi(a_t),y_t) - u(a_t,y_t) \right).
    \]
\end{definition}

\ifarxiv
A particularly important special case of our framework is adaptive regret, which guarantees low regret simultaneously over all contiguous intervals. This is achieved by setting the collection of subsequences to be $\cS = \{[t_1,t_2] : 1 \le t_1 \le t_2 \le T\}$.
This gives rise to the following notions of \emph{constrained external adaptive regret} and \emph{constrained swap adaptive regret}.

\begin{definition}[Constrained External Adaptive Regret]
    Fix an agent with a utility function $u: \cA \times \cY \to [0,1]$ and a constraint function $\bc: \cA \times \cY \to [-1,1]^J$. 
    For a sequence of actions $a_1,\ldots,a_T$ and outcomes $y_1,\ldots,y_T$, the agent's constrained external adaptive regret is:
    \[
        \Reg_{\ext-\adapt}(u,\bc,\lambda) = \max_{1 \le t_1 \le t_2 \le T} \max_{a \in \cA_{[t_1,t_2]}^{\bc,\lambda}}\sum_{t=t_1}^{t_2} \left( u(a,y_t) - u(a_t,y_t) \right).
    \]
\end{definition}

\begin{definition}[Constrained Swap Adaptive Regret]
    Fix an agent with a utility function $u: \cA \times \cY \to [0,1]$ and a constraint function $\bc: \cA \times \cY \to [-1,1]^J$. 
    For a sequence of actions $a_1,\ldots,a_T$ and outcomes $y_1,\ldots,y_T$, the agent's constrained swap adaptive regret:
    \[
        \Reg_{\swap-\adapt}(u,\bc,\lambda) = \max_{1 \le t_1 \le t_2 \le T} \max_{\phi: \cA \to \cA_{[t_1,t_2]}^{\bc,\lambda}} \sum_{t=t_1}^{t_2} \left( u(\phi(a_t),y_t) - u(a_t,y_t) \right).
    \]
\end{definition}
\else
When $\cS$ is the set of all contiguous intervals, $\cS = \{[t_1,t_2] : 1 \le t_1 \le t_2 \le T\}$, we refer to the resulting instantiations of Definitions \ref{def:constrained-external-regret} and \ref{def:constrained-swap-regret} as \emph{constrained external adaptive regret} and \emph{constrained swap adaptive regret}, respectively.
\fi

While adaptive regret provides a powerful guarantee over all contiguous intervals, a different but related goal is to measure performance against a dynamic benchmark path that changes over time, known as dynamic regret. We now introduce the external and swap versions of this guarantee under our framework with long-term constraints.

The benchmarks for \emph{constrained external dynamic regret} are changing sequences of benchmark actions that satisfy the constraints (with a fixed margin of $\lambda$) at each round:
\begin{align*}
    \vec\cA_{1:T}^{\bc,\lambda} = \prod_{t=1}^T \cA_t^{\bc,\lambda} = \{\vec a \in \cA^T : c_j(\vec a_t,y_t) \leq -\lambda   \ \textrm{for every } t \in [T] \ \textrm{and } j \in [J]\}.
\end{align*}
The complexity of any such benchmark sequence $\vec a \in \vec\cA_{1:T}^{\bc,\lambda}$ is measured by the number of times the action changes:
$\Delta (\vec{a}) = |\{t \in [T-1] : \vec a_t \neq \vec a_{t+1}\}|$.

\begin{definition}[Constrained External Dynamic Regret]
    Fix an agent with a utility function $u: \cA \times \cY \to [0,1]$ and a constraint function $\bc: \cA \times \cY \to [-1,1]^J$. For a sequence of actions $a_1,\ldots,a_T$ and outcomes $y_1,\ldots,y_T$, the agent's constrained external dynamic regret against a benchmark sequence of actions $\vec a \in \cA^T$ is:
    \[
        \Reg_{\ext-\dyn}(u,\vec a) = \sum_{t=1}^{T} \left( u(\vec a_t,y_t) - u(a_t,y_t) \right).
    \]
    The goal is to guarantee that $\Reg_{\ext-\dyn}(u,\vec a)$ is $o(T)$ for any benchmark sequence $\vec a$ that is (1) dynamically feasible, $\vec a \in \vec\cA_{1:T}^{\bc,\lambda}$, and (2) has a sublinear number of changes, $\Delta(\vec a) = o(T)$.
\end{definition}

Next, we define the more powerful swap-based counterpart. The benchmarks for \emph{constrained swap dynamic regret} are changing sequences of action modification rules $\vec\phi \in (\cA^\cA)^T$. 
The complexity of any such benchmark sequence is likewise measured by the number of times the action modification rule changes:
$\Delta (\vec\phi) = |\{t \in [T-1]: \vec \phi_t \neq \vec \phi_{t+1}\}|$.
A sequence with $\Delta(\vec\phi)$ changes partitions the time horizon $[1, T]$ into $\Delta(\vec\phi) + 1$ contiguous intervals, on each of which the rule is fixed.
We only compete with sequences where, for each interval of constancy, the fixed action modification rule maps to the set of actions that are feasible throughout that entire interval.


\begin{definition}[Constrained Swap Dynamic Regret]
    Fix an agent with a utility function $u: \cA \times \cY \to [0,1]$ and a constraint function $\bc: \cA \times \cY \to [-1,1]^J$. For a sequence of actions $a_1,\ldots,a_T$ and outcomes $y_1,\ldots,y_T$, the agent's constrained swap dynamic regret against a benchmark sequence of action modification rules $\vec\phi \in (\cA^\cA)^T$ is:
    \[
        \Reg_{\swap-\dyn}(u,\vec\phi) = \sum_{t=1}^{T} \left( u(\vec\phi_t(a_t),y_t) - u(a_t,y_t) \right).
    \]
    The goal is to guarantee that $\Reg_{\swap-\dyn}(u,\vec\phi)$ is $o(T)$ for any benchmark sequence $\vec\phi$ that satisfies two properties: (1) It is piecewise feasible. Let the sequence have change points that partition $[1, T]$ into intervals $I_0, \ldots, I_{\Delta(\vec{\phi})}$. The action modification rule on each interval $I_k$ is the constant rule $\psi_k: \cA \to \cA_{I_k}^{\bc,\lambda}$. (2) It has a sublinear number of changes, $\Delta(\vec \phi) = o(T)$.
\end{definition}

Any dynamic benchmark sequence (either $\vec a$ or $\vec\phi$) with $\Delta$ changes naturally partitions the time horizon $[1, T]$ into $\Delta + 1$ contiguous intervals based on its change points. Within each of these intervals, the benchmark is fixed. Since an adaptive regret guarantee ensures low regret over \emph{all} possible intervals, it also ensures low regret over this specific partition. The total dynamic regret can therefore be bounded by summing the regret bounds over these $\Delta + 1$ intervals, hence a low adaptive regret bound implies a low dynamic regret bound.

We will first focus on the strictly feasible benchmark classes with a margin of $\lambda = \tilde\Omega(1/\sqrt{T})$, and obtain $\tilde O(\sqrt{T})$ regret and $\tilde O(\sqrt{T})$ cumulative constraint violation;
We will then apply similar techniques to handle the nominally feasible benchmark classes with zero margin ($\lambda = 0$), and obtain $\tilde O(T^{2/3})$ regret and $\tilde O(T^{2/3})$ cumulative constraint violation.

We note that all benchmark classes discussed are agent-specific as they depend on agents' constraint functions. All guarantees we provide will be stated under the assumption that the corresponding benchmark class is non-empty.

\section{Proposed Approach and Main Guarantees}

\subsection{A Purely Prediction-Based Decision Rule}
We propose a stateless decision rule for the agents that is purely based on the forecaster's predictions. At each round $t \in [T]$, each agent acts as if the prediction $p_t$ were accurate, and chooses the action that offers the highest predicted utility among all actions predicted to be feasible. We say that agents play \emph{constrained best responses} to the predictions.

\begin{definition}[Constrained Best Response] \label{def:constrained-best-response}
    Fix a utility $u: \cA \times \cY \to [0,1]$, $J$ constraints $\{c_j: \cA \times \cY \to [-1,1]\}_{j \in [J]}$, and a prediction $p \in \cY$. The constrained best response to $p$ according to $u$ and $\{c_j\}_{j \in [J]}$, denoted as $\CBR^{u,\bc}(p)$, is the solution to the constrained optimization problem:
    \begin{align*}
        \underset{a \in \cA}{\text{maximize}} &\quad u(a,p_t) \\
        \text{subject to} &\quad c_j(a,p_t) \le 0 \; \text{ for every } j \in [J]
    \end{align*}
\end{definition}

The agent can obtain $\CBR^{u,\bc}(p_t)$ in two steps:
\begin{enumerate}[(1)]
    \item 
    The agent first discards actions that are infeasible according to the prediction.

    For each constraint $j \in [J]$, the set of actions predicted to violate that constraint is denoted as:
    $$\widehat{\cA}_t^{c_j,\infea} = \left\{ a \in \cA: c_j(a,p_t) > 0 \right\}.$$
    
    The agent discards actions that are predicted to violate any of the $J$ constraints, i.e.,
    $$\widehat{\cA}_t^{\bc,\infea} = \cup_{j \in [J]} \widehat{\cA}_t^{c_j,\infea} = \left\{ a \in \cA: \exists j \in [J], c_j(a,p_t) > 0 \right\}.$$
    
    The retained actions that are predicted to be feasible are denoted as $$\widehat{\cA}_t^{\bc,\fea} = \left\{ a \in \cA: \forall j \in [J], c_j(a,p_t) \le 0 \right\}.$$
    \item
    The agent then chooses an action from the retained action set $\widehat{\cA}_t^{\bc,\fea}$ that maximizes the utility function according to the prediction, i.e.,
    \begin{align*}
        a_t = \argmax_{a \in \widehat{\cA}_t^{\bc,\fea}} u(a,p_t).
    \end{align*}
\end{enumerate}

If none of the actions are predicted to be feasible, i.e., $\widehat\cA_t^{\bc,\fea} = \emptyset$, the agent can choose any arbitrary action from $\cA$. As we will formally prove, our predictions ensure this special case rarely occurs, and hence its influence on the cumulative constraint violation and regret is negligible.

\subsection{Conditionally Unbiased Predictions}
To ensure that our predictions are trustworthy so that treating them as the truth is a sensible choice for the agents, we need the predictions to be unbiased --- not only marginally, but also conditionally on various subsequences. Notably, we define the following notions of conditional unbiasedness. The first is a ``constrained'' variant of decision calibration as defined in \cite{noarov2023highdimensional}, which itself is a strengthening of an earlier notion of decision calibration due to \cite{zhao2021calibrating}

\begin{definition}[$(\cN,\cS,\alpha)$-Decision Calibration] \label{def:decision-calibration}
    Let $\cS$ be a collection of subsequences. Let $\cN$ be a set of agents, where each agent is equipped with a utility function $u: \cA \times \cY \to [0,1]$ and $J$ constraint functions $\{c_j: \cA \times \cY \to [-1,1]\}_{j \in [J]}$. We say that a sequence of predictions $p_1,\ldots,p_T$ is $(\cN,\cS,\alpha)$-decision calibrated with respect to a sequence of outcomes $y_1,\ldots,y_T$ if for every $S \in \cS$, $a \in \cA$, and $(u,\bc) \in \cN$:
    \[
        \left\| \sum_{t=1}^T \1\left[t \in S, \CBR^{u,\bc}(p_t) = a \right] (p_t - y_t) \right\|_\infty \leq \alpha(T^{u,\bc,S}(a)),
    \]
    where $T^{u,\bc,S}(a) = \sum_{t=1}^T \1\left[t \in S, \CBR^{u,\bc}(p_t) = a\right]$.
\end{definition}

Decision calibration guarantees that forecasts are unbiased conditional on the decisions of the downstream decision makers. Infeasibility calibration, defined next, requires that the predictions  be unbiased conditional on each action for each downstream decision maker being predicted to be infeasible.

\begin{definition}[$(\cN,\cS,\beta)$-Infeasibility Calibration] \label{def:infeasibility-calibration}
    Let $\cS$ be a collection of subsequences. Let $\cN$ be a set of agents, where each agent is equipped with a utility function $u: \cA \times \cY \to [0,1]$ and $J$ constraint functions $\{c_j: \cA \times \cY \to [-1,1]\}_{j \in [J]}$. We say that a sequence of predictions $p_1,\ldots,p_T$ is $(\cN,\cS,\beta)$-infeasibility calibrated with respect to a sequence of outcomes $y_1,\ldots,y_T$ if for every $S \in \cS$, $a \in \cA$, $(u,\bc) \in \cN$, and $j \in [J]$:
    \[
        \left\| \sum_{t=1}^T \1\left[t \in S, a \in \widehat\cA_t^{c_j,\infea}\right] (p_t - y_t) \right\|_\infty \leq \beta(T^{c_j,S,\infea}(a)),
    \]
    where $T^{c_j,S,\infea}(a) = \sum_{t=1}^T \1\left[t \in S, a \in \widehat\cA_t^{c_j,\infea}\right]$.
\end{definition}


\begin{assumption}
    We assume that $\alpha,\beta: \bR \to \bR$ are concave functions. This will be the case in all the bounds we give; in general, this condition holds for any sublinear error bound $T^r$ for $r<1$.
\end{assumption}

In the sections that follow, We will show how these two unbiasedness constraints lead to bounds on the cumulative constraint violation and regret for all downstream decision makers.

We now address the algorithmic challenge of producing predictions that are decision calibrated and infeasibility calibrated. Our approach builds upon the \textsc{Unbiased-Prediction} algorithm from \citet{noarov2023highdimensional}, which makes conditionally unbiased predictions in the online setting. The algorithm and its guarantees are presented in Appendix \ref{app:unbiased-prediction}; we refer interested readers to the original work for further details.

We will instantiate \textsc{Unbiased-Prediction} to make predictions that simultaneously achieve decision calibration and infeasibility calibration; we will refer to this instantiation as \textsc{Decision-Infeasibility-Calibration}. 
Our guarantees will inherit from the guarantees of \textsc{Unbiased-Prediction}. 

\begin{theorem} \label{thm:unbiased-algorithm}
    Let $\cS$ be a collection of subsequences. Let $\cN$ be a set of agents, where each agent is equipped with a utility function $u: \cA \times \cY \to [0,1]$ and $J$ constraint functions $\{c_j: \cA \times \cY \to [-1,1]\}_{j \in [J]}$.
    There is an instantiation of \textsc{Unbiased-Prediction} \citep{noarov2023highdimensional}
    ---which we call \textsc{Decision-Infeasibility-Calibration}---
    producing predictions $p_1,...,p_T \in \cY$ satisfying that for any sequence of outcomes $y_1,...,y_T \in \cY$, 
    with probability at least $1-\delta$, for any $(u,\bc) \in \cN$, $j \in [J]$, $a \in \cA$, and $S \in \cS$:
        \begin{align*}
            & \left\| \sum_{t=1}^T \1\left[t \in S, \CBR^{u,\bc}(p_t) = a \right] (p_t - y_t) \right\|_\infty \leq O\left( \ln\frac{dJ|\cA||\cN||\cS|T}{\delta} \cdot |S|^{1/4} + \sqrt{\ln\frac{dJ|\cA||\cN||\cS|T}{\delta} \cdot T^{u,\bc,S}(a)} \right), \\
            & \left\| \sum_{t=1}^T \1\left[t \in S, a \in \widehat\cA_t^{c_j,\infea}\right] (p_t - y_t) \right\|_\infty \leq O\left( \ln\frac{dJ|\cA||\cN||\cS|T}{\delta} \cdot |S|^{1/4} + \sqrt{\ln\frac{dJ|\cA||\cN||\cS|T}{\delta} \cdot T^{c_j,S,\infea}(a)} \right).
        \end{align*}
\end{theorem}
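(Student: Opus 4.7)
The plan is to derive Theorem~\ref{thm:unbiased-algorithm} as a direct instantiation of \textsc{Unbiased-Prediction} from \cite{noarov2023highdimensional}, whose black-box guarantee (recalled in Appendix~\ref{app:unbiased-prediction}) produces predictions $p_t$ with bias bounds that hold simultaneously over any user-specified finite collection of events $\{E_i\}$, provided each $E_i$ is a deterministic function of the information available at the beginning of round $t$, namely $(t, x_t, p_t)$, and does not depend on the outcome $y_t$.

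First I would specify the event collection fed to \textsc{Unbiased-Prediction}. For each tuple $(S, (u,\bc), a) \in \cS \times \cN \times \cA$, define the decision-calibration event
\[
E^\cD_{S,u,\bc,a}(t, x_t, p_t) = \1\left[h_S(t, x_t) = 1, \ \CBR^{u,\bc}(p_t) = a\right],
\]
and for each tuple $(S, (u,\bc), j, a) \in \cS \times \cN \times [J] \times \cA$, define the infeasibility-calibration event
\[
E^\cI_{S,c_j,a}(t, x_t, p_t) = \1\left[h_S(t, x_t) = 1, \ c_j(a, p_t) > 0\right].
\]
Both families depend only on $(t, x_t, p_t)$, and their empirical firing counts match $T^{u,\bc,S}(a)$ and $T^{c_j,S,\infea}(a)$ exactly. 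The total number of events is at most $|\cS||\cN||\cA|(J+1) = O(|\cS||\cN|J|\cA|)$.

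Next I would invoke the \textsc{Unbiased-Prediction} guarantee on this event collection, taking a further union bound over the $d$ coordinates of the outcome vector (our bounds are in $\ell_\infty$). The guarantee of \cite{noarov2023highdimensional} yields, with probability $1-\delta$, a per-event bias bound of the shape $O\!\left(\ln(N/\delta)\cdot |S|^{1/4} + \sqrt{\ln(N/\delta)\cdot N(E)}\right)$, where $N$ is the total number of (event, coordinate) combinations being tracked and $N(E)$ is the empirical firing count of event $E$. Substituting $N = O(d\cdot J|\cA||\cN||\cS|T)$ and the firing-count identifications above produces the two displayed inequalities.

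The main obstacle is conceptual rather than computational: one must carefully verify the measurability/admissibility condition that each event is evaluable from $(t, x_t, p_t)$ alone. For the decision-calibration events this reduces to checking that $\CBR^{u,\bc}(p_t)$ depends only on $p_t$, which is immediate from Definition~\ref{def:constrained-best-response} since it is the solution of an optimization problem whose data is $(u, \bc, p_t)$. For the infeasibility events it reduces to checking that $\widehat\cA_t^{c_j,\infea} = \{a : c_j(a, p_t) > 0\}$ is a function of $p_t$ only, which is true by definition. Finally, $h_S(t, x_t)$ depends on quantities observed strictly before $p_t$ is generated. Once these measurability checks are in place, the theorem follows from the black-box guarantee with no further analysis.
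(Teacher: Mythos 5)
Your choice of event collection (one decision-calibration event per $(S,(u,\bc),a)$ and one infeasibility event per $(S,c_j,a)$, each a function of $(t,x_t,p_t)$ only), the measurability checks, and the union bound over the $d$ coordinates all match what the paper does. The gap is in the final step, where you assert that the black box directly yields, with probability $1-\delta$, a bound of the form $O\!\left(\ln(N/\delta)\cdot |S|^{1/4} + \sqrt{\ln(N/\delta)\cdot N(E)}\right)$ on the realized bias. The guarantee of \textsc{Unbiased-Prediction} as stated (Theorem~\ref{thm:unbiased-prediction-algorithm}) is an \emph{in-expectation} bound: it controls $\left\| \sum_{t=1}^T \E_{p_t\sim\psi_t}[E(x_t, p_t)(p_t - y_t)] \right\|_\infty$, where the expectation is over the algorithm's own randomization $\psi_t\in\Delta\cY$, it contains no failure probability $\delta$, and it contains no $|S|^{1/4}$ term. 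The statement you need is about the realized predictions $p_t$ sampled from $\psi_t$, and converting one to the other is the actual content of the proof (Corollary~\ref{cor:unbiased-prediction-algorithm-subsequence}).

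Concretely, that conversion requires two martingale concentration arguments that you have omitted. First, Freedman's inequality is applied to the differences $E(x_t,p_t)(p_{t,m}-y_{t,m}) - \E_{p_t\sim\psi_t}[E(x_t,p_t)(p_{t,m}-y_{t,m})]$, which introduces the $\ln(\cdot/\delta)$ dependence and replaces the in-expectation firing count by a variance proxy. Second, Azuma--Hoeffding is used to relate $\sum_{t\in S}\E_{p_t\sim\psi_t}[E(x_t,p_t)]$ (which appears inside the square root of the black-box bound) to the realized count $\sum_{t\in S}E(x_t,p_t) = T^{u,\bc,S}(a)$; the resulting additive slack of $O\bigl(\sqrt{\ln(\cdot/\delta)\,|S|}\bigr)$, once pushed inside the square root, is precisely what produces the $\ln(\cdot)\cdot|S|^{1/4}$ term in the theorem. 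Without these steps you cannot account for either the $\delta$ or the $|S|^{1/4}$ in the claimed bound, so the assertion that the theorem ``follows from the black-box guarantee with no further analysis'' does not hold; the instantiation is right, but the concentration argument is the missing core of the proof.
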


\subsection{Theoretical Guarantees}

We begin our analysis with a preliminary lemma. Recall that if the set of predicted feasible actions is empty, the agent plays an arbitrary action. The following lemma uses the infeasibility calibration guarantee to show that for any benchmark action $a \in \cA_S^{\bc,\lambda}$, the number of rounds where it is incorrectly predicted to violate a specific constraint $c_j$ is small. As a result, the number of rounds on which no action is predicted to be feasible is small.

First, we establish some notation for simplicity. Let $g_\beta(x) = x / \beta(x)$. For the specific form of $\beta$ provided by the guarantee in Theorem \ref{thm:unbiased-algorithm}, $g_\beta$ is monotone for $x > 0$. We define its inverse function as $f_\beta = g_\beta^{-1}$.
\begin{lemma} \label{lem:infeasibility-calibration}
    Suppose the benchmark class $\cA_S^{\bc,\lambda}$ is non-empty. If the sequence of predictions $p_1,\ldots,p_T$ is $(\cN,\cS,\beta)$-infeasibility calibrated, then for any agent $(u,\bc) \in \cN$, subsequence $S \in \cS$, benchmark action $a \in \cA_S^{\bc,\lambda}$, and constraint $j \in [J]$, the number of rounds $T^{c_j,S,\infea}(a)$ within $S$ on which $a$ is predicted to violate the $j$-th constraint is bounded by:
    \begin{align*}
        T^{c_j,S,\infea}(a) \le f_\beta(L_\cC/\lambda)
    \end{align*}

    Consequently, the number of rounds within $S$ on which no actions are predicted to be feasible is bounded by:
    \begin{align*}
        \left|\left\{ t \in S: \widehat\cA_t^{\bc,\fea} = \emptyset \right\}\right| \le J f_\beta(L_\cC/\lambda).
    \end{align*}

    In particular, plugging in the guarantee from  Theorem \ref{thm:unbiased-algorithm} yields the following concrete form of $f_\beta(L_\cC/\lambda)$, which holds with probability at least $1-\delta$:
    \begin{align*}
        f_\beta(L_\cC/\lambda) = O\left( \left(\frac{L_\cC |S|^{1/4}}{\lambda} + \frac{L_\cC^2}{\lambda^2} \right) \ln(dJ|\cA||\cN||\cS|T/\delta) \right).
    \end{align*}
\end{lemma}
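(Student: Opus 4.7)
The plan is to exploit the gap between the benchmark action's predicted constraint value (strictly positive, since it is in $\widehat\cA_t^{c_j,\infea}$) and its true constraint value (at most $-\lambda$, since $a \in \cA_S^{\bc,\lambda}$). On each round $t \in S$ where $a$ is predicted to violate constraint $j$, we have
\[
    c_j(a,p_t) - c_j(a,y_t) > 0 - (-\lambda) = \lambda.
\]
Summing this strict inequality over all such rounds, using linearity of $c_j$ in its second argument to pull the sum inside, and then applying $L_\cC$-Lipschitz continuity (together with $c_j(a,0)=0$, which follows from linearity) yields
\[
    \lambda \cdot T^{c_j,S,\infea}(a) \;\le\; L_\cC \left\| \sum_{t \in S : a \in \widehat\cA_t^{c_j,\infea}} (p_t - y_t) \right\|_\infty.
\]

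The next step is to invoke the $(\cN,\cS,\beta)$-infeasibility calibration hypothesis directly on the right-hand side, giving $\lambda \cdot T^{c_j,S,\infea}(a) \le L_\cC \cdot \beta(T^{c_j,S,\infea}(a))$, equivalently $g_\beta(T^{c_j,S,\infea}(a)) \le L_\cC/\lambda$. Since $g_\beta$ is monotone on $x > 0$, inverting yields the claimed bound $T^{c_j,S,\infea}(a) \le f_\beta(L_\cC/\lambda)$. For the second part, I would observe that whenever $\widehat\cA_t^{\bc,\fea} = \emptyset$, \emph{every} action --- in particular the fixed benchmark $a \in \cA_S^{\bc,\lambda}$ (which is non-empty by assumption) --- is predicted to violate at least one constraint, so $a \in \bigcup_{j \in [J]} \widehat\cA_t^{c_j,\infea}$ on any such round. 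A union bound over the $J$ constraints then gives $|\{t \in S : \widehat\cA_t^{\bc,\fea} = \emptyset\}| \le \sum_{j \in [J]} T^{c_j,S,\infea}(a) \le J f_\beta(L_\cC/\lambda)$.

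Finally, for the concrete form, I would substitute the $\beta$ from Theorem \ref{thm:unbiased-algorithm}, write $C = \ln(dJ|\cA||\cN||\cS|T/\delta)$, and solve the implicit inequality $x \le (L_\cC/\lambda)(C|S|^{1/4} + \sqrt{Cx})$. Treating this as a quadratic in $\sqrt{x}$ and using $\sqrt{u+v} \le \sqrt{u}+\sqrt{v}$ yields $x = O((L_\cC |S|^{1/4}/\lambda + L_\cC^2/\lambda^2) C)$, matching the stated form.

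The only subtle point is the step that pulls the sum inside $c_j$: one must check that $c_j(a, p_t - y_t) = c_j(a,p_t) - c_j(a,y_t)$ (from linearity in $y$) and that $|c_j(a, v)| \le L_\cC \|v\|_\infty$ for any $v$ of the form $\sum(p_t - y_t)$ (from Lipschitz continuity combined with $c_j(a,0) = 0$). This is routine but is the one place the linearity assumption on $c_j$ is essential; without it the summand-to-sum reduction breaks. Everything else is an algebraic rearrangement and a union bound, so this step is the main --- and really the only --- nontrivial move.
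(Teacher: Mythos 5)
Your proposal is correct and follows essentially the same route as the paper's proof: the same margin-gap argument $c_j(a,p_t)-c_j(a,y_t)>\lambda$ summed over the predicted-infeasible rounds, the same use of linearity plus Lipschitzness to invoke infeasibility calibration, the same inversion of $g_\beta$, and the same union-bound argument for the empty-feasible-set count. No gaps.
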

\newcommand{\proofLemmaInfeasibilityCalibration}{
    On any round $t$ where action $a$ is predicted to violate the $j$-th constraint, we have:
    \begin{align*}
        c_j(a,p_t) > 0
    \end{align*}
    
    Since $a$ is in the benchmark class $\cA_S^{\bc,\lambda}$, we have:
    \begin{align*}
        c_j(a,y_t) \le -\lambda
    \end{align*}
    
    Combining these two facts gives $c_j(a,p_t)-c_j(a,y_t) > \lambda$. Summing this difference over all rounds in $S$ where $a$ is predicted to violate the $j$-th constraint, we get:
    \begin{align*}
        \sum_{t=1}^T \1\left[t \in S, a \in \widehat\cA_t^{c_j,\infea}\right] (c_j(a,p_t)-c_j(a,y_t)) > \lambda T^{c_j,S,\infea}(a)
    \end{align*}
    
    The left-hand side can be bounded using the properties of our predictions. By linearity and $L_\cC$-Lipschitzness of the constraint function, and by $(\cN,\cS,\beta)$-infeasibility calibration, we have that:
    \begin{align*}
        \MoveEqLeft \sum_{t=1}^T \1\left[t \in S, a \in \widehat\cA_t^{c_j,\infea}\right] (c_j(a,p_t)-c_j(a,y_t)) \\
        &= c_j\left(a, \sum_{t=1}^T p_t \1\left[t \in S, a \in \widehat\cA_t^{c_j,\infea}\right]\right) - c_j\left(a, \sum_{t=1}^T y_t \1\left[t \in S, a \in \widehat\cA_t^{c_j,\infea}\right]\right) \\
        &\le L_\cC \left\|\sum_{t=1}^T p_t \1\left[t \in S, a \in \widehat\cA_t^{c_j,\infea}\right] - \sum_{t=1}^T y_t \1\left[t \in S, a \in \widehat\cA_t^{c_j,\infea}\right] \right\|_\infty \\
        &\le L_\cC \beta(T^{c_j,S,\infea}(a))
    \end{align*}

    Combining these inequalities gives the first part of the lemma:    
    \begin{align*}
        \lambda T^{c_j,S,\infea}(a) < L_\cC \beta(T^{c_j,S,\infea}(a))
    \end{align*}

    For the concrete bound, we substitute the explicit form for $\beta$ from Theorem \ref{thm:unbiased-algorithm}:
    \begin{align*}
        \lambda f_\beta(L_\cC / \lambda) = L_\cC O\left( \ln\frac{dJ|\cA||\cN||\cS|T}{\delta} \cdot |S|^{1/4} + \sqrt{\ln\frac{dJ|\cA||\cN||\cS|T}{\delta} \cdot f_\beta(L_\cC / \lambda)} \right)
    \end{align*}

    Solving for $f_\beta(L_\cC / \lambda)$ yields the stated form:
    \begin{align*}
        f_\beta(L_\cC / \lambda) = O\left( \frac{L_\cC |S|^{1/4} \ln(dJ|\cA||\cN||\cS|T/\delta)}{\lambda} + \frac{L_\cC^2 \ln(dJ|\cA||\cN||\cS|T/\delta)}{\lambda^2} \right)
    \end{align*}
}

\newcommand{\proofLemmaEmptySet}{
    Fix any arbitrary benchmark action $a \in \cA_S^{\bc,\lambda}$. If on round $t \in S$, no action is predicted to be feasible, then it must be that $a$ is predicted to be infeasible. Hence, there must exist at least one constraint that $a$ is predicted to violate. As a result, we have:
    \begin{align*}
        \left|\left\{ t \in S: \widehat\cA_t^{\bc,\fea} = \emptyset \right\}\right| &\le \left|\left\{ t \in S: \exists j \in [J], c_j(a,p_t) > 0 \right\}\right| \\
        &\le \sum_{j=1}^J T^{c_j,S,\infea}(a) \\
        &\le J f_\beta(L_\cC / \lambda)
    \end{align*}
}

\ifarxiv
\begin{proof}
    \proofLemmaInfeasibilityCalibration
    \proofLemmaEmptySet
\end{proof}
\else
\fi

\subsubsection{Bounding the Cumulative Constraint Violation}

We now show that the conditional unbiasedness of our predictions guarantees all agents can satisfy the long-term constraints over any subsequences in $\cS$. On a high level, this is because (by definition), downstream agents only play actions that we \emph{predict} to be feasible. On the other hand, our predictions are guaranteed to be unbiased on the subsequence of days on which the downstream agents play each action, and so by linearity of the constraint functions, their cumulative constraint violation cannot be much larger than their predicted cumulative constraint violation (which is non-positive). 
\begin{theorem} \label{thm:CCV}
    Let $\cS$ be a collection of subsequences. 
    Let $\cN$ be a set of agents, where each agent is equipped with a utility function $u: \cA \times \cY \to [0,1]$ and $J$ constraint functions $\{c_j: \cA \times \cY \to [-1,1]\}_{j \in [J]}$. Suppose each agent plays constrained best responses to $p_t$ to choose action $a_t$. Suppose the benchmark class $\cA_S^{\bc,\lambda}$ is non-empty. If the sequence of predictions $p_1,\ldots,p_T$ is $(\cN,\cS,\alpha)$-decision calibrated and $(\cN,\cS,\beta)$-infeasibility calibrated, then the cumulative constraint violation of any agent over any subsequence $S \in \cS$ is bounded by:
    \[
        \CCV(S) \le L_\cC |\cA| \alpha(|S|/|\cA|) + J f_\beta(L_\cC/\lambda).
    \]
    In particular, plugging in the guarantee from  Theorem \ref{thm:unbiased-algorithm} yields the following bound, which holds with probability at least $1-\delta$:
    \[
        \CCV(S) \le O\left( \left( L_\cC |\cA| |S|^{1/4} + L_\cC \sqrt{|\cA||S|} + \frac{J L_\cC |S|^{1/4}}{\lambda} + \frac{J L_\cC^2}{\lambda^2} \right) \ln(dJ|\cA||\cN||\cS|T/\delta) \right).
    \]
\end{theorem}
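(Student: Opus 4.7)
\textbf{Proof plan for Theorem \ref{thm:CCV}.} The plan is to fix any constraint index $j \in [J]$ and any agent $(u,\bc) \in \cN$, and bound $\sum_{t \in S} c_j(a_t, y_t)$ by splitting $S$ according to whether the predicted-feasible set is empty. On rounds $t \in S$ with $\widehat\cA_t^{\bc,\fea} = \emptyset$ the agent plays arbitrarily and each term contributes at most $1$; by Lemma \ref{lem:infeasibility-calibration} there are at most $J f_\beta(L_\cC/\lambda)$ such rounds, giving the second term in the bound. The rest of the work is to handle rounds where $\widehat\cA_t^{\bc,\fea} \neq \emptyset$, where by the constrained best response rule, $a_t = \CBR^{u,\bc}(p_t)$ satisfies $c_j(a_t,p_t) \le 0$.

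On this good set of rounds, the idea is to decompose the sum by conditioning on which action was chosen: $\sum_{t \in S,\,\widehat\cA_t^{\bc,\fea}\neq\emptyset} c_j(a_t,y_t) = \sum_{a \in \cA} \sum_{t \in S} \1[\CBR^{u,\bc}(p_t)=a,\,\widehat\cA_t^{\bc,\fea}\neq\emptyset]\,c_j(a,y_t)$. For each fixed action $a$, linearity of $c_j(a,\cdot)$ in $y$ lets me pull the sum inside: the resulting quantity equals $c_j(a,\sum_{t} p_t \1[\cdot])-c_j(a,\sum_{t}p_t \1[\cdot] - \sum_t y_t \1[\cdot])$ up to rearrangement, i.e.\ I can write
\[
\sum_{t \in S,\CBR=a} c_j(a,y_t) = \sum_{t \in S,\CBR=a} c_j(a,p_t) \;+\; \bigl(c_j(a,\textstyle\sum y_t\1[\cdot]) - c_j(a,\sum p_t\1[\cdot])\bigr).
\]
The first sum on the right is $\le 0$ since each term is nonpositive by the definition of the constrained best response. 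The second term is bounded by $L_\cC \|\sum (y_t - p_t)\1[t \in S, \CBR=a]\|_\infty$ via $L_\cC$-Lipschitzness, which by $(\cN,\cS,\alpha)$-decision calibration is at most $L_\cC \alpha(T^{u,\bc,S}(a))$.

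Summing over actions $a \in \cA$, the contribution from the good rounds is at most $L_\cC \sum_{a \in \cA} \alpha(T^{u,\bc,S}(a))$. Since $\alpha$ is concave and $\sum_a T^{u,\bc,S}(a) \le |S|$, Jensen's inequality yields $\sum_a \alpha(T^{u,\bc,S}(a)) \le |\cA|\,\alpha(|S|/|\cA|)$. Combining this with the $J f_\beta(L_\cC/\lambda)$ bound from the empty-feasible rounds gives the first claimed bound $\CCV(S) \le L_\cC|\cA|\alpha(|S|/|\cA|) + J f_\beta(L_\cC/\lambda)$. For the concrete form, I would plug in $\alpha(n) = \tilde O(|S|^{1/4} + \sqrt{n})$ from Theorem \ref{thm:unbiased-algorithm} and apply Cauchy--Schwarz to $\sum_a \sqrt{T^{u,\bc,S}(a)} \le \sqrt{|\cA||S|}$, then combine with the explicit form of $f_\beta(L_\cC/\lambda)$ from Lemma \ref{lem:infeasibility-calibration}.

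The main subtlety is not any single calculation but the careful bookkeeping in the action-wise decomposition: one must use linearity in $y$ to move from pointwise constraint evaluations to a single evaluation on the conditional sum of predictions (resp.\ outcomes), which is what makes the $\ell_\infty$ bias bound from decision calibration applicable through Lipschitzness. Everything else is a routine combination of Jensen's inequality, Cauchy--Schwarz, and plugging in the explicit rates.
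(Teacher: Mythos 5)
Your proposal is correct and follows essentially the same route as the paper's proof: decompose the realized constraint value into the predicted constraint value (nonpositive on rounds where the predicted-feasible set is nonempty, and bounded via Lemma \ref{lem:infeasibility-calibration} by the count of empty rounds otherwise) plus a per-action bias term controlled by linearity, $L_\cC$-Lipschitzness, decision calibration, and concavity of $\alpha$. The only substantive difference is the order of the two splits: the paper applies the triangle-inequality decomposition over all of $S$ first, so the per-action indicator $\1[t \in S, \CBR^{u,\bc}(p_t)=a]$ exactly matches the decision-calibration event, whereas you restrict the bias sums to rounds with $\widehat\cA_t^{\bc,\fea}\neq\emptyset$, an event for which calibration is not directly stated. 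This is harmless — the discrepancy is supported on at most $J f_\beta(L_\cC/\lambda)$ rounds and perturbs the bound by only an additive $O(L_\cC\, J f_\beta(L_\cC/\lambda))$ — but you should either reorder as the paper does or explicitly absorb that correction.
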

\newcommand{\proofTheoremCCV} {
    Fix any $j \in [J]$. We will bound the cumulative violation against the $j$-th constraint, i.e., $\sum_{t \in S} c_j(a_t, y_t)$. The final result follows by taking the maximum over all $j \in [J]$.

    We use the triangle inequality to decompose the sum:
    \begin{align*}
        \sum_{t \in S} c_j(a_t,y_t) &\le \left| \sum_{t \in S} c_j(a_t,y_t) - \sum_{t \in S} c_j(a_t,p_t) \right| + \sum_{t \in S} c_j(a_t,p_t)
    \end{align*}
    We bound each of the two terms on the right-hand side separately.

    For the first term, we partition the sum based on the action played and use the linearity of the constraint functions to derive that:
    \begin{align*}
        \left| \sum_{t \in S} c_j(a_t,y_t) - \sum_{t \in S} c_j(a_t,p_t) \right| &= \left| \sum_{a \in \cA} \sum_{t \in S} c_j(a,y_t) \1[a_t=a] - \sum_{a \in \cA}\sum_{t \in S} c_j(a,p_t) \1[a_t=a] \right| \\
        &= \left| \sum_{a \in \cA} c_j\left( a, \sum_{t \in S} y_t \1[a_t=a] \right) - \sum_{a \in \cA} c_j\left( a, \sum_{t \in S} p_t \1[a_t=a] \right) \right| \\
        &\le \sum_{a \in \cA} \left| c_j\left( a, \sum_{t \in S} y_t \1[a_t=a] \right) - c_j\left( a, \sum_{t \in S} p_t \1[a_t=a] \right) \right| \\
        &\le \sum_{a \in \cA} L_\cC \left| \sum_{t \in S} y_t \1[a_t=a] - \sum_{t \in S} p_t \1[a_t=a] \right| \\
        &\le \sum_{a \in \cA} L_\cC \alpha(T^{u,\bc,S}(a))
    \end{align*}
    where the first inequality follows from the triangle inequality, the second inequality follows from $L_\cC$-Lipschitzness of $c_j$, the third inequality follows from $(\cN,\cS,\alpha)$-decision calibration. By concavity of $\alpha$ and the fact that $\sum_{a \in \cA} T^{u,\bc,S}(a) = \sum_{a \in \cA} \sum_{t=1}^T \1\left[t \in S, \CBR^{u,\bc}(p_t) = a\right] = |S|$, this expression is at most:
    \begin{align*}
        L_\cC |\cA| \alpha(|S|/|\cA|).
    \end{align*}
    
    For the second term, $\sum_{t \in S} c_j(a_t,p_t)$, we decompose the sum based on whether the predicted feasible set $\widehat\cA_t^{\bc,\fea}$ is empty on round $t$.
    By the agent's decision rule, on any round where $\widehat{\mathcal{A}}_t^{\mathbf{c},\text{fea}} \neq \emptyset$, the chosen action $a_t$ satisfies $c_j(a_t,p_t) \le 0$. On rounds where the set is empty, the violation is at most 1. The sum is therefore bounded by the number of ``empty set'' rounds:
    \begin{align*}
        \sum_{t \in S} c_j(a_t,p_t) &= \sum_{t \in S: \widehat\cA_t^{\bc,\fea} \neq \emptyset} c_j(a_t,p_t) + \sum_{t \in S: \widehat\cA_t^{\bc,\fea} = \emptyset} c_j(a_t,p_t) \\
        &\le \sum_{t \in S: \widehat\cA_t^{\bc,\fea} \neq \emptyset} 0 + \sum_{t \in S: \widehat\cA_t^{\bc,\fea} = \emptyset} 1 \\
        &= \left|\left\{ t \in S: \widehat\cA_t^{\bc,\fea} = \emptyset \right\}\right|
    \end{align*}
    By Lemma \ref{lem:infeasibility-calibration}, this expression is at most:
    \begin{align*}
        J f_\beta(L_\cC/\lambda)
    \end{align*}
    
    Combining the two bounds gives the result stated in the theorem.
}
\ifarxiv
\begin{proof}
    \proofTheoremCCV
\end{proof}
\else
\fi

We note that this guarantee holds simultaneously for all choice of $\lambda$ (as long as the corresponding benchmark class $\cA_S^{\bc,\lambda}$ is non-empty). By setting $\lambda = |S|^{-1/4}$ for each subsequence $S \in \cS$, we arrive at the following concrete bound for the cumulative constraint violation.
\begin{corollary} \label{cor:CCV}
    Let $\cS$ be a collection of subsequences. Let $\cN$ be a set of agents, where each agent is equipped with a utility function $u: \cA \times \cY \to [0,1]$ and $J$ constraint functions $\{c_j: \cA \times \cY \to [-1,1]\}_{j \in [J]}$. Suppose each agent plays constrained best responses to $p_t$ to compete with actions from the benchmark class $\cA_S^{\bc,|S|^{-1/4}}$ over each subsequence $S \in \cS$. The sequence of predictions $p_1,\ldots,p_T$ produced by \textsc{Decision-Infeasibility-Calibration} guarantees that with probability at least $1-\delta$, the cumulative constraint violation of any agent over any subsequence $S \in \cS$ is bounded by:
    \begin{align*}
        \CCV(S) \le O\left( \left( L_\cC |\cA| |S|^{1/4} + L_\cC \sqrt{|\cA||S|} + J (L_\cC+L_\cC^2) \sqrt{|S|} \right) \ln(dJ|\cA||\cN||\cS|T/\delta) \right).
    \end{align*}
\end{corollary}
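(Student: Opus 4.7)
The proof is a direct instantiation of Theorem \ref{thm:CCV} with a per-subsequence choice of margin $\lambda = \lambda_S \doteq |S|^{-1/4}$, so the plan is essentially to verify that this choice is legal and to simplify the resulting expression.

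First, I would note the key structural fact enabling a different $\lambda$ for each subsequence: the calibration hypotheses of Theorem \ref{thm:CCV}, namely $(\cN,\cS,\alpha)$-decision calibration and $(\cN,\cS,\beta)$-infeasibility calibration, are properties of the prediction sequence alone and do not reference $\lambda$ in any way. Theorem \ref{thm:unbiased-algorithm} gives a single sequence of predictions produced by \textsc{Decision-Infeasibility-Calibration} that, with probability at least $1-\delta$, satisfies both of these calibration conditions simultaneously for every $S \in \cS$ (the union bound over $\cS$ already appears inside the logarithmic factor). Conditioning on this high-probability event once at the outset, I am then free to apply Theorem \ref{thm:CCV} separately to each $S \in \cS$ with a margin $\lambda_S$ chosen to depend on $|S|$.

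Next, I would fix $S \in \cS$ and apply Theorem \ref{thm:CCV} with $\lambda = \lambda_S = |S|^{-1/4}$. The hypothesis that $\cA_S^{\bc, |S|^{-1/4}}$ is non-empty is exactly what the corollary assumes, so the theorem applies. The first two terms in the bound of Theorem \ref{thm:CCV}, namely $L_\cC|\cA||S|^{1/4}$ and $L_\cC\sqrt{|\cA||S|}$, are unchanged by the choice of $\lambda_S$. The third and fourth terms $\tfrac{JL_\cC|S|^{1/4}}{\lambda_S}$ and $\tfrac{JL_\cC^2}{\lambda_S^2}$ collapse, under $\lambda_S^{-1} = |S|^{1/4}$ and $\lambda_S^{-2} = |S|^{1/2}$, to $JL_\cC\sqrt{|S|}$ and $JL_\cC^2\sqrt{|S|}$ respectively, which together yield the $J(L_\cC + L_\cC^2)\sqrt{|S|}$ contribution in the corollary's statement. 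Combining the four terms (with the shared logarithmic factor) gives the claimed bound.

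There is essentially no obstacle to overcome: the work has been done in Theorem \ref{thm:CCV} and Lemma \ref{lem:infeasibility-calibration}, and this corollary is just the optimization step that picks $\lambda_S$ to balance the infeasibility-calibration error $JL_\cC/\lambda_S$ (which shrinks the feasible benchmark class but must be inflated to $\lambda_S^{-1}$ in the bound) against the $\sqrt{|S|}$ scale of the decision-calibration error. The choice $\lambda_S = |S|^{-1/4}$ is the one that drives the $\lambda_S$-dependent terms down to $\tilde O(\sqrt{|S|})$, matching the dominant decision-calibration contribution, and is therefore the natural optimum for this pair of bounds.
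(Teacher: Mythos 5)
Your proposal is correct and matches the paper's own (implicitly stated) argument: the paper derives this corollary exactly by noting that Theorem \ref{thm:CCV} holds simultaneously for every margin $\lambda$ under the same calibration event, and then substituting $\lambda = |S|^{-1/4}$ per subsequence so that $J L_\cC |S|^{1/4}/\lambda$ and $J L_\cC^2/\lambda^2$ both become $\tilde O(J(L_\cC + L_\cC^2)\sqrt{|S|})$. Your additional observation that the calibration hypotheses are $\lambda$-free, which is what licenses the per-subsequence choice, is exactly the right justification.
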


\begin{remark}
    Our analysis can be extended to the standard benchmark with zero margin ($\lambda=0$). This requires relaxing the agent's decision rule to allow for a small tolerance in predicted feasibility (i.e., choosing from actions where $c_j(a, p_t) \le \eta$). A similar analysis reveals a trade-off, yielding a bound of roughly $\tilde{O}(1/\eta^2 + \eta|S|)$. Optimizing $\eta$ results in cumulative constraint violation and regret bounds of $\tilde{O}(T^{2/3})$. We defer the full results to Appendix \ref{app:zero-margin}.
\end{remark}

\subsection{Bounding the Regret}

Next we show that decision calibration and infeasibility calibration imply no constrained swap regret, and hence no constrained external regret. At a high level, decision calibrated predictions allow agents to accurately assess (on average) the utilities of their chosen actions and the counterfactual actions produced by any action modification rule. Because agents choose optimally based on these accurate utility estimates, their decisions are  competitive against any alternative action as long as the alternative action is also predicted to be feasible. Infeasibility calibrated predictions guarantee that the alternative action must be predicted to be feasible at almost every round. 
\begin{theorem} \label{thm:swap-regret}
    Let $\cS$ be a collection of subsequences. Let $\lambda: \bN \to (0,\infty)$ be a margin function. Let $\cN$ be a set of agents, where each agent is equipped with a utility function $u: \cA \times \cY \to [0,1]$ and $J$ constraint functions $\{c_j: \cA \times \cY \to [-1,1]\}_{j \in [J]}$. Suppose each agent plays constrained best responses to $p_t$ to compete with actions from the benchmark class $\cA_S^{\bc,\lambda}$ over each subsequence $S \in \cS$. If the sequence of predictions $p_1,\ldots,p_T$ is $(\cN,\cS,\alpha)$-decision calibrated and $(\cN,\cS,\beta)$-infeasibility calibrated, then the constrained swap regret of any agent over any subsequence $S \in \cS$ is bounded by:
    \begin{align*}
        \Reg_\swap(u,\bc,\lambda,S) \le 2 L_\cU |\cA| \alpha(|S|/|\cA|) + J |\cA| f_\beta(L_\cC/\lambda).
    \end{align*}
    In particular, plugging in the guarantee from  Theorem \ref{thm:unbiased-algorithm} yields the following bound, which holds with probability at least $1-\delta$:
    \begin{align*}
        \Reg_\swap(u,\bc,\lambda,S) \le O\left( \left( L_\cU |\cA| |S|^{1/4} + L_\cU \sqrt{|\cA||\cS|} + \frac{J L_\cC |\cA| |S|^{1/4}}{\lambda} + \frac{J L_\cC^2 |\cA|}{\lambda^2} \right) \ln(dJ|\cA||\cN||\cS|T/\delta) \right).
    \end{align*}
\end{theorem}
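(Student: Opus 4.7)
The plan is to bound the swap regret by first using decision calibration to swap outcomes $y_t$ for predictions $p_t$ in the utility differences, and then exploiting the definition of the constrained best response together with infeasibility calibration to argue that the resulting "predicted regret" is essentially non-positive. I would fix an arbitrary swap function $\phi: \cA \to \cA_S^{\bc,\lambda}$ and partition the sum $\sum_{t \in S}(u(\phi(a_t),y_t) - u(a_t,y_t))$ by the action actually played, writing it as $\sum_{a \in \cA}\sum_{t \in S, a_t = a}(u(\phi(a),y_t) - u(a,y_t))$.

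For each fixed $a$, I would apply $(\cN,\cS,\alpha)$-decision calibration twice, once to the benchmark action $\phi(a)$ and once to $a$ itself. Linearity of $u$ in $y$ and $L_\cU$-Lipschitzness let me convert the calibration bound $\|\sum_{t \in S, a_t = a}(p_t - y_t)\|_\infty \le \alpha(T^{u,\bc,S}(a))$ into
\[
\sum_{t \in S, a_t = a}(u(\phi(a),y_t) - u(a,y_t)) \le \sum_{t \in S, a_t = a}(u(\phi(a),p_t) - u(a,p_t)) + 2L_\cU \alpha(T^{u,\bc,S}(a)).
\]
For the predicted-utility term, the constrained best response guarantees that on any round with $\phi(a) \in \widehat{\cA}_t^{\bc,\fea}$, we have $u(a,p_t) \ge u(\phi(a),p_t)$, so the summand is non-positive. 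The only "bad" rounds are those where $\phi(a) \notin \widehat{\cA}_t^{\bc,\fea}$, and on these rounds I use the trivial bound of $1$ per round (utilities live in $[0,1]$).

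To count the bad rounds I invoke Lemma~\ref{lem:infeasibility-calibration}: since $\phi(a) \in \cA_S^{\bc,\lambda}$, the number of $t \in S$ on which $\phi(a)$ is predicted to violate any constraint is at most $J f_\beta(L_\cC/\lambda)$ (taking a union bound over the $J$ constraints; rounds with $\widehat{\cA}_t^{\bc,\fea}=\emptyset$ are automatically included). This gives $\sum_{t \in S, a_t=a}(u(\phi(a),p_t) - u(a,p_t)) \le J f_\beta(L_\cC/\lambda)$. Summing over $a \in \cA$, using $\sum_a T^{u,\bc,S}(a) = |S|$ together with concavity of $\alpha$ via Jensen's inequality, yields $\sum_a \alpha(T^{u,\bc,S}(a)) \le |\cA|\alpha(|S|/|\cA|)$ and the claimed bound $2L_\cU|\cA|\alpha(|S|/|\cA|) + J|\cA| f_\beta(L_\cC/\lambda)$. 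Because the bound is independent of the specific $\phi$, I can take the maximum over swap functions at the very end.

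The main delicate point is handling the rounds on which $\phi(a)$ is predicted infeasible: one has to notice that the swap target $\phi(a)$ is itself a benchmark action (by definition of the swap class), so infeasibility calibration applies to it even though $\phi(a)$ is not what the agent plays. The concrete bound then follows by substituting the explicit forms of $\alpha$ from Theorem~\ref{thm:unbiased-algorithm} and of $f_\beta(L_\cC/\lambda)$ from Lemma~\ref{lem:infeasibility-calibration}; this is just routine arithmetic.
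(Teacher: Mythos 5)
Your proposal is correct and follows essentially the same route as the paper's proof: use decision calibration (applied to both the played action and the swapped-in action) to replace outcomes with predictions at a cost of $2L_\cU|\cA|\alpha(|S|/|\cA|)$, observe that the predicted-utility gap is non-positive whenever $\phi(a_t)$ is predicted feasible by optimality of the constrained best response, and charge the remaining rounds to the infeasibility-calibration count $Jf_\beta(L_\cC/\lambda)$ per benchmark action via Lemma \ref{lem:infeasibility-calibration}. The only difference is presentational (you partition by the played action before decomposing, the paper decomposes into three sums first), and your observation that infeasibility calibration applies to $\phi(a)$ because it lies in $\cA_S^{\bc,\lambda}$ is exactly the point the paper's argument relies on.
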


\newcommand{\proofTheoremSwapRegretPartOne}{
To prove the theorem, first note that for any subsequence $S \in \cS$ and action modification rule $\phi: \cA \to \cA_{S}^\bc$, we can decompose the regret against $\phi$ into three parts as:
\begin{align*}
    \MoveEqLeft \sum_{t \in S} \left( u(\phi(a_t),y_t) - u(a_t,y_t) \right) \\
    &= \sum_{t \in S} \left( u(\phi(a_t),y_t) - u(\phi(a_t),p_t) \right) + \sum_{t \in S} \left( u(\phi(a_t),p_t) - u(a_t,p_t) \right) + \sum_{t \in S} \left( u(a_t,p_t) - u(a_t,y_t) \right)
\end{align*}

We first bound the first and third part, i.e., the difference in utility under our predictions $p_t$ and the outcomes $y_t$ for both the chosen actions and the swapped-in actions. We show this in the next two lemmas using decision calibration. 
\begin{lemma} \label{lem:decision-calibration}
    If the sequence of predictions $p_1,\ldots,p_T$ is $(\cN,\cS,\alpha)$-decision calibrated, then for any $(u,\bc) \in \cN$ and $S \in \cS$:
    \begin{align*}
        \left| \sum_{t \in S} (u(a_t,p_t) - u(a_t,y_t)) \right| \le L_\cU |\cA| \alpha(|S|/|\cA|).
    \end{align*}
\end{lemma}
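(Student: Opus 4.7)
The plan is to partition the subsequence by the action played, use linearity of $u$ in $y$ to aggregate the per-round prediction errors into a single vector per action, then convert those aggregated errors into utility differences via Lipschitzness, and finally use concavity of $\alpha$ to combine the per-action bounds.

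Concretely, I would first note that by the agent's constrained best response rule, $a_t = \CBR^{u,\bc}(p_t)$ on every round (using an arbitrary but fixed tie-breaking rule when $\widehat\cA_t^{\bc,\fea}=\emptyset$, so that $T^{u,\bc,S}(a)$ still counts exactly the rounds in $S$ on which $a_t=a$). This lets me split the sum as
\[
\sum_{t \in S}\bigl(u(a_t,p_t)-u(a_t,y_t)\bigr) \;=\; \sum_{a \in \cA}\sum_{t=1}^T \1\!\left[t \in S,\ \CBR^{u,\bc}(p_t)=a\right]\bigl(u(a,p_t)-u(a,y_t)\bigr).
\]
For each fixed $a$, linearity of $u(a,\cdot)$ collapses the inner sum into $u(a,\Delta_a)$, where $\Delta_a := \sum_{t=1}^T \1[t\in S,\, \CBR^{u,\bc}(p_t)=a]\,(p_t-y_t)$.

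Next, I would invoke the linearity-plus-Lipschitz assumption on $u$. Writing $u(a,y) = \langle w_a, y\rangle$, the $L_\cU$-Lipschitz condition in $\ell_\infty$ on $\cY=[0,1]^d$ forces $\|w_a\|_1 \le L_\cU$, which then extends to $|u(a,v)| \le L_\cU \|v\|_\infty$ for \emph{any} $v \in \bR^d$ (so that even though $\Delta_a$ need not lie in $\cY$, the bound still applies). Combining with $(\cN,\cS,\alpha)$-decision calibration, which gives $\|\Delta_a\|_\infty \le \alpha(T^{u,\bc,S}(a))$, I obtain $|u(a,\Delta_a)| \le L_\cU\,\alpha(T^{u,\bc,S}(a))$ for every $a \in \cA$.

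Finally, triangle inequality over actions yields
\[
\left|\sum_{t\in S}\bigl(u(a_t,p_t)-u(a_t,y_t)\bigr)\right| \;\le\; L_\cU \sum_{a \in \cA}\alpha\bigl(T^{u,\bc,S}(a)\bigr),
\]
and since $\sum_{a \in \cA} T^{u,\bc,S}(a) = |S|$, concavity of $\alpha$ (Jensen's inequality) upper bounds the right-hand side by $L_\cU |\cA|\,\alpha(|S|/|\cA|)$, as desired.

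The argument is almost entirely routine; the one subtlety worth flagging is the extension of the Lipschitz bound from $\cY$ to arbitrary vectors in $\bR^d$ (needed because $\Delta_a$ aggregates many signed errors and can easily leave $[0,1]^d$). This is the main potential pitfall, and it is resolved cleanly by the linearity assumption.
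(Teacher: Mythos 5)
Your proposal is correct and follows essentially the same route as the paper's proof: partition the sum by the action played, use linearity of $u(a,\cdot)$ to collapse the per-round errors into a single aggregated vector per action, apply the Lipschitz bound together with decision calibration, and finish with concavity of $\alpha$ and $\sum_a T^{u,\bc,S}(a)=|S|$. Your explicit remark that the $L_\cU$-Lipschitz bound must be extended by linearity to vectors outside $\cY$ is a point the paper glosses over, and handling it is a welcome touch of care rather than a deviation.
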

\begin{proof}
    Using the linearity of $u$, we can write:
    \begin{align*}
        \left| \sum_{t \in S} (u(a_t,p_t) - u(a_t,y_t)) \right| &= \left| \sum_{a \in \cA}\sum_{t=1}^T \1[t \in S,a_t=a] (u(a,p_t) - u(a,y_t)) \right| \\
        &= \left| \sum_{a \in \cA} \left( u\left(a, \sum_{t=1}^T \1[t \in S,a_t=a] p_t \right) - u\left(a, \sum_{t=1}^T \1[t \in S,a_t=a] y_t\right) \right)\right| \\
        &\le \sum_{a \in \cA} \left| u\left(a, \sum_{t=1}^T \1[t \in S,a_t=a] p_t \right) - u\left(a, \sum_{t=1}^T \1[t \in S,a_t=a] y_t\right) \right| \\
        &\le \sum_{a \in \cA} L_\cU \left\| \sum_{t=1}^T \1[t \in S,a_t=a] (p_t-y_t) \right\|_\infty \\
        &\le L_\cU \sum_{a \in \cA} \alpha(T^{u,\bc,S}(a)).
    \end{align*}
    where the first inequality follows from the triangle inequality, the second inequality follows from $L_\cU$-Lipschitzness of $u$, and the third inequality follows from $(\cN,\cS,\alpha)$-decision calibration. By concavity of $\alpha$ and the fact that $\sum_{a \in \cA} T^{u,\bc,S}(a) = \sum_{a \in \cA} \sum_{t=1}^T \1[t \in S, a_t=a] = |S|$, this expression is at most:
    \begin{align*}
        L_\cU |\cA| \alpha(|S|/|\cA|).
    \end{align*}
\end{proof}

\begin{lemma} \label{lem:decision-calibration-swap}
    If the sequence of predictions $p_1,\ldots,p_T$ is $(\cN,\cS,\alpha)$-decision calibrated, then for any $(u,\bc) \in \cN$ and $S \in \cS$:
    \begin{align*}
        \left| \sum_{t \in S} (u(\phi(a_t),p_t) - u(\phi(a_t),y_t)) \right| \le L_\cU |\cA| \alpha(|S|/|\cA|).
    \end{align*}
\end{lemma}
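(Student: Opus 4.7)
The plan is to mirror the proof of Lemma~\ref{lem:decision-calibration}, exploiting the fact that once we partition the rounds of $S$ according to the realized action $a_t$, the swapped action $\phi(a_t)$ becomes a constant $\phi(a)$ on each piece. This is the key observation that makes the lemma follow from the \emph{same} decision-calibration guarantee used in Lemma~\ref{lem:decision-calibration}, rather than requiring any separate ``swap-calibration'' assumption indexed by swapped actions.

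Concretely, I would first rewrite
\begin{align*}
\sum_{t \in S} \bigl(u(\phi(a_t),p_t) - u(\phi(a_t),y_t)\bigr) = \sum_{a \in \cA} \sum_{t=1}^T \1[t \in S, a_t = a] \bigl(u(\phi(a),p_t) - u(\phi(a),y_t)\bigr),
\end{align*}
pulling $\phi(a)$ outside the inner sum. Then, on each piece: (i) use linearity of $u$ in $y$ to combine the inner sum into a single evaluation $u\bigl(\phi(a), \sum_{t} \1[t \in S, a_t=a] p_t\bigr) - u\bigl(\phi(a), \sum_{t} \1[t \in S, a_t=a] y_t\bigr)$; (ii) apply $L_\cU$-Lipschitzness of $u$ in $y$ to upper bound by $L_\cU \bigl\|\sum_{t} \1[t \in S, a_t=a](p_t - y_t)\bigr\|_\infty$; (iii) invoke $(\cN,\cS,\alpha)$-decision calibration, using that the agent plays constrained best responses so $a_t = \CBR^{u,\bc}(p_t)$ and the indicator matches the one in Definition~\ref{def:decision-calibration}, to bound the norm by $\alpha(T^{u,\bc,S}(a))$. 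Finally, sum over $a \in \cA$ via the triangle inequality and apply Jensen's inequality using concavity of $\alpha$ together with $\sum_{a \in \cA} T^{u,\bc,S}(a) = |S|$ to obtain the stated bound.

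No substantive obstacle arises beyond the conceptual step of partitioning on $a_t$ rather than on $\phi(a_t)$: the Lipschitz constant $L_\cU$ and the linearity structure of $u(\cdot, y)$ are uniform in the first argument, so replacing $a$ by $\phi(a)$ inside each partition costs nothing in the bound. The resulting estimate $L_\cU |\cA| \alpha(|S|/|\cA|)$ therefore matches that of Lemma~\ref{lem:decision-calibration} exactly, as expected since only the integrand—not the conditioning event—has changed.
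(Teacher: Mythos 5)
Your proposal is correct and follows the paper's proof essentially verbatim: partition on the realized action $a_t=a$ so that $\phi(a_t)=\phi(a)$ is constant on each piece, then apply linearity, Lipschitzness, the decision-calibration bound for the event $\1[t\in S, a_t=a]$, and concavity of $\alpha$. The key observation you highlight — conditioning on $a_t$ rather than $\phi(a_t)$ so that no separate swap-indexed calibration condition is needed — is exactly the one the paper uses.
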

\begin{proof}
    Using the linearity of $u$, we can write:
    \begin{align*}
        \left| \sum_{t \in S} (u(\phi(a_t),p_t) - u(\phi(a_t),y_t)) \right| &= \left| \sum_{a \in \cA}\sum_{t=1}^T \1[t \in S,a_t=a] (u(\phi(a),p_t) - u(\phi(a),y_t)) \right| \\
        &= \left| \sum_{a \in \cA} \left( u\left(\phi(a), \sum_{t=1}^T \1[t \in S,a_t=a] p_t \right) - u\left(\phi(a), \sum_{t=1}^T \1[t \in S,a_t=a] y_t\right) \right)\right| \\
        &\le \sum_{a \in \cA} \left| u\left(\phi(a), \sum_{t=1}^T \1[t \in S,a_t=a] p_t \right) - u\left(\phi(a), \sum_{t=1}^T \1[t \in S,a_t=a] y_t\right) \right| \\
        &\le \sum_{a \in \cA} L_\cU \left\| \sum_{t=1}^T \1[t \in S,a_t=a] (p_t-y_t) \right\|_\infty \\
        &\le L_\cU \sum_{a \in \cA} \alpha(T^{u,\bc,S}(a)) \\
        &\le L_\cU |\cA| \alpha(|S|/|\cA|)
    \end{align*}
    where the first inequality follows from the triangle inequality, the second inequality follows from $L_\cU$-Lipschitzness of $u$, the third inequality follows from $(\cN,\cS,\alpha)$-decision calibration, and the fourth inequality follows from concavity of $\alpha$.
\end{proof}

Regarding the second part in our decomposition of the regret against $\phi$, we further decompose it into two components based on whether the swapped-in action is predicted to be feasible:
\begin{align*}
    \sum_{t \in S} \left( u(\phi(a_t),p_t) - u(a_t,p_t) \right) &= \sum_{t \in S: \phi(a_t) \in \widehat\cA_t^{\bc,\fea}} \left( u(\phi(a_t),p_t) - u(a_t,p_t) \right) \\ &\quad + \sum_{t \in S: \phi(a_t) \in \widehat\cA_t^{\bc,\infea}} \left( u(\phi(a_t),p_t) - u(a_t,p_t) \right)
\end{align*}

The first component is non-positive. This is because on these rounds, $a_t$ is chosen to maximize predicted utility over the predicted feasible set, which includes $\phi(a_t)$, so $u(\phi(a_t),p_t) \le u(a_t,p_t)$.

For the second component, the utility difference is at most 1, so the sum is bounded by the number of rounds within $S$ on which the swapped-in action $\phi(a_t)$ is predicted to be infeasible. We can bound this count by the sum of the number of rounds where each single benchmark action $a \in \cA_S^{\bc,\lambda}$ is predicted to be infeasible.
\begin{align*}
    \sum_{t \in S: \phi(a_t) \in \widehat\cA_t^{\bc,\infea}} \left( u(\phi(a_t),p_t) - u(a_t,p_t) \right) &\le \sum_{t \in S} \1\left[\phi(a_t) \in \widehat\cA_t^{\bc,\infea}\right] \\
    &= \sum_{a \in \cA_S^{\bc,\lambda}} \sum_{t \in S} \1\left[\phi(a_t) = a, a \in \widehat\cA_t^{\bc,\infea}\right] \\
    &\le \sum_{a \in \cA_S^{\bc,\lambda}} \sum_{t \in S} \1\left[a \in \widehat\cA_t^{\bc,\infea}\right] \\
    &= \sum_{a \in \cA_S^{\bc,\lambda}} \sum_{t \in S} \1\left[\exists j \in [J], c_j(a,p_t)>0\right] \\
    &\le \sum_{a \in \cA_S^{\bc,\lambda}} \sum_{t \in S} \sum_{j=1}^J \1\left[c_j(a,p_t)>0\right] \\
    &= \sum_{a \in \cA_S^{\bc,\lambda}} \sum_{j=1}^J T^{c_j,S,\infea}(a)
\end{align*}

By Lemma \ref{lem:infeasibility-calibration}, this expression is at most:
\begin{align*}
    J |\cA| f_\beta(L_\cC/\lambda)
\end{align*}
}

\newcommand{\proofTheoremSwapRegretPartTwo}{
    For any subsequence $S \in \cS$ and action modification rule $\phi: \cA \to \cA_{S}^\bc$, we can decompose the regret against $\phi$ into three parts as:
    \begin{align*}
        \MoveEqLeft \sum_{t \in S} \left( u(\phi(a_t),y_t) - u(a_t,y_t) \right) \\
        &= \sum_{t \in S} \left( u(\phi(a_t),y_t) - u(\phi(a_t),p_t) \right) + \sum_{t \in S} \left( u(\phi(a_t),p_t) - u(a_t,p_t) \right) + \sum_{t \in S} \left( u(a_t,p_t) - u(a_t,y_t) \right)
    \end{align*}
    By Lemmas \ref{lem:decision-calibration} and \ref{lem:decision-calibration-swap}, the first and third part are both bounded by $L_\cU |\cA| \alpha(|S|/|\cA|)$.
    
    As shown in our preceding analysis, the second part is bounded by $J |\cA| f_\beta(L_\cC/\lambda)$.
    
    Combining the bounds for all three parts, we arrive at the final inequality.
}

\ifarxiv
\proofTheoremSwapRegretPartOne
We can now complete the proof of Theorem \ref{thm:swap-regret}.
\begin{proof}
    \proofTheoremSwapRegretPartTwo
\end{proof}
\else
\fi

By setting $\lambda = |S|^{-1/4}$ for each subsequence $S \in \cS$, we arrive at the following concrete bound for the constrained swap regret.
\begin{corollary} \label{cor:swap-regret}
    Let $\cS$ be a collection of subsequences. Let $\cN$ be a set of agents, where each agent is equipped with a utility function $u: \cA \times \cY \to [0,1]$ and $J$ constraint functions $\{c_j: \cA \times \cY \to [-1,1]\}_{j \in [J]}$. Suppose each agent plays constrained best responses to $p_t$ to choose action $a_t$. The sequence of predictions $p_1,\ldots,p_T$ produced by \textsc{Decision-Infeasibility-Calibration} guarantees that with probability at least $1-\delta$, the constrained swap regret against the benchmark class $\cA_S^{\bc,|S|^{-1/4}}$ of any agent over any subsequence $S \in \cS$ is bounded by:
    \begin{align*}
        \Reg_\swap(u,\bc,|S|^{-1/4},S) 
        &\le O\left( \left( L_\cU |\cA| |S|^{1/4} + L_\cU \sqrt{|\cA||S|} + J (L_\cC+L_\cC^2) |\cA| \sqrt{|S|} \right) \ln(dJ|\cA||\cN||\cS|T/\delta) \right).
    \end{align*}
\end{corollary}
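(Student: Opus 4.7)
The plan is to combine Theorem \ref{thm:swap-regret} with the explicit calibration guarantees of \textsc{Decision-Infeasibility-Calibration} from Theorem \ref{thm:unbiased-algorithm}, then specialize the margin to $\lambda = |S|^{-1/4}$. First, I would invoke Theorem \ref{thm:unbiased-algorithm} to obtain that, on a single high-probability event of probability at least $1-\delta$, the sequence $p_1,\ldots,p_T$ produced by \textsc{Decision-Infeasibility-Calibration} is simultaneously $(\cN,\cS,\alpha)$-decision calibrated and $(\cN,\cS,\beta)$-infeasibility calibrated, where both $\alpha$ and $\beta$ have the same concrete form (a $|S|^{1/4}$ term plus a $\sqrt{n}$ term, scaled by a $\ln(dJ|\cA||\cN||\cS|T/\delta)$ factor). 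This is exactly the hypothesis required by Theorem \ref{thm:swap-regret}, so its probabilistic conclusion applies on the same event, with no need for a further union bound.

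Second, I would plug $\lambda = |S|^{-1/4}$ into the explicit bound supplied by Theorem \ref{thm:swap-regret}. The first two summands $L_\cU |\cA| |S|^{1/4}$ and $L_\cU \sqrt{|\cA||S|}$ do not involve $\lambda$ and carry over unchanged. The third summand $\tfrac{J L_\cC |\cA| |S|^{1/4}}{\lambda}$ becomes $J L_\cC |\cA| \sqrt{|S|}$, and the fourth summand $\tfrac{J L_\cC^2 |\cA|}{\lambda^2}$ becomes $J L_\cC^2 |\cA| \sqrt{|S|}$. Collecting the last two into $J(L_\cC + L_\cC^2) |\cA| \sqrt{|S|}$ produces exactly the bound stated in the corollary.

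Because the result is a direct specialization of Theorem \ref{thm:swap-regret}, there is no substantive obstacle; the proof is essentially bookkeeping. The only subtlety worth noting is that the choice of $\lambda$ is allowed to depend on $|S|$ and hence vary across subsequences, which is legitimate since the bound in Theorem \ref{thm:swap-regret} holds \emph{simultaneously} for every choice of $\lambda$ such that $\cA_S^{\bc,\lambda}$ is nonempty (as remarked after that theorem). Thus the specialization step preserves the simultaneous guarantee across all $S \in \cS$, giving the uniform bound claimed.
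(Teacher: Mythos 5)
Your proposal is correct and matches the paper's (implicit) proof exactly: the corollary is obtained by instantiating Theorem \ref{thm:swap-regret} with the calibration guarantees of Theorem \ref{thm:unbiased-algorithm} and substituting $\lambda = |S|^{-1/4}$, so that the terms $J L_\cC |\cA| |S|^{1/4}/\lambda$ and $J L_\cC^2 |\cA|/\lambda^2$ both become $\Theta(J |\cA| \sqrt{|S|})$ factors. Your observation that the choice of $\lambda$ may vary per subsequence because the underlying guarantee holds simultaneously for all margins is the same point the paper makes.
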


\begin{remark}
    One could alternatively set a fixed margin of $\lambda = T^{-1/4}$. This choice creates a larger and more competitive benchmark class. The resulting regret bound would then be $\tilde{O}(\sqrt{T})$.
\end{remark}

A powerful implication is that we achieve low constrained swap adaptive regret by instantiating our framework with the collection of all contiguous intervals, $\cS = \{[t_1,t_2]: 1 \le t_1 \le t_2 \le T\}$.
\begin{corollary} \label{cor:swap-adaptive-regret}
    Let $\cN$ be a set of agents, where each agent is equipped with a utility function $u: \cA \times \cY \to [0,1]$ and $J$ constraint functions $\{c_j: \cA \times \cY \to [-1,1]\}_{j \in [J]}$. Suppose each agent plays constrained best responses to $p_t$ to choose action $a_t$. 
    Let $\lambda(|S|) = |S|^{-1/4}$ be the margin function. 
    The sequence of predictions $p_1,\ldots,p_T$ produced by \textsc{Decision-Infeasibility-Calibration} guarantees that with probability at least $1-\delta$, the constrained swap adaptive regret of any agent is bounded by:
    \begin{align*}
        \Reg_{\swap-\adapt}(u,\bc,\lambda) \le O\left( \left( L_\cU |\cA| T^{1/4} + L_\cU \sqrt{|\cA|T} + J (L_\cC+L_\cC^2) |\cA| \sqrt{T} \right) \ln(dJ|\cA||\cN|T/\delta) \right).
    \end{align*}
\end{corollary}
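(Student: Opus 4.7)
The plan is to derive this corollary as a direct instantiation of Corollary \ref{cor:swap-regret} by taking the subsequence collection $\cS$ to be the set of all contiguous intervals $\cS = \{[t_1,t_2] : 1 \le t_1 \le t_2 \le T\}$, which has cardinality $|\cS| = \binom{T}{2} + T = O(T^2)$. With this choice, running \textsc{Decision-Infeasibility-Calibration} produces, with probability at least $1-\delta$, predictions that are simultaneously $(\cN,\cS,\alpha)$-decision calibrated and $(\cN,\cS,\beta)$-infeasibility calibrated with the quantitative rates of Theorem \ref{thm:unbiased-algorithm} holding for every interval at once.

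Next, I would fix an arbitrary interval $[t_1,t_2] \in \cS$ and apply Corollary \ref{cor:swap-regret} with $S = [t_1,t_2]$ and margin $\lambda = |S|^{-1/4} = \lambda(|S|)$. This immediately bounds the constrained swap regret on that interval by
$$O\!\left(\left(L_\cU |\cA| |S|^{1/4} + L_\cU \sqrt{|\cA||S|} + J(L_\cC + L_\cC^2)|\cA| \sqrt{|S|}\right) \ln(dJ|\cA||\cN||\cS|T/\delta)\right).$$
Since every contiguous interval satisfies $|S| \le T$, all polynomial factors $|S|^{1/4}$ and $\sqrt{|S|}$ can be uniformly replaced by $T^{1/4}$ and $\sqrt{T}$ respectively. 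Moreover, $|\cS| \le T^2$ so $\ln(dJ|\cA||\cN||\cS|T/\delta) = O(\ln(dJ|\cA||\cN|T/\delta))$, absorbing the $|\cS|$ factor into the big-$O$ constant.

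Finally, because the calibration event supporting Corollary \ref{cor:swap-regret} holds uniformly over all $S \in \cS$ on the same probability-$(1-\delta)$ event, I can take the maximum of the per-interval bound over all $1 \le t_1 \le t_2 \le T$. By the definition of $\Reg_{\swap-\adapt}(u,\bc,\lambda)$ as a maximum of constrained swap regret over all contiguous intervals (with margin $\lambda(|S|) = |S|^{-1/4}$ on each), this yields the stated bound.

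I do not anticipate any substantive obstacle: the corollary is essentially a repackaging of Corollary \ref{cor:swap-regret} under the specific choice $\cS = \{[t_1,t_2]\}$. The key point that makes this work—and is precisely the main contribution highlighted in Section 1.1—is that the dependence on $|\cS|$ inside the regret bound is only logarithmic, so paying $\ln(T^2) = O(\ln T)$ for the union bound over all $\Theta(T^2)$ intervals does not degrade the rate. If the dependence on $|\cS|$ were polynomial (as in prior work), one would not be able to extract adaptive regret this cheaply.
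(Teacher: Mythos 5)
Your proposal is correct and matches the paper's (implicit) argument exactly: the corollary is obtained by instantiating $\cS$ as the set of all $O(T^2)$ contiguous intervals, applying Corollary \ref{cor:swap-regret} to each interval with $\lambda = |S|^{-1/4}$, bounding $|S| \le T$, and absorbing $\ln|\cS| = O(\ln T)$ into the logarithmic factor. No gaps.
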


A dynamic benchmark with $\Delta$ changes partitions the entire time horizon into $\Delta+1$ intervals. By summing our per-subsequence regret bound over this specific partition, we obtain the following dynamic regret guarantee.
\begin{corollary} \label{cor:swap-dynamic-regret}
    Let $\cN$ be a set of agents, where each agent is equipped with a utility function $u: \cA \times \cY \to [0,1]$ and $J$ constraint functions $\{c_j: \cA \times \cY \to [-1,1]\}_{j \in [J]}$. Suppose each agent plays constrained best responses to $p_t$ to choose action $a_t$. The sequence of predictions $p_1,\ldots,p_T$ produced by \textsc{Decision-Infeasibility-Calibration} guarantees that with probability at least $1-\delta$, the constrained swap dynamic regret of any agent against any piecewise feasible sequence of action modification rule $\vec\phi \in (\cA^\cA)^T$ is bounded by:
    \begin{align*}
        \Reg_{\swap-\dyn}(u,\vec\phi) \le O\left( \left( L_\cU |\cA| T^\frac{1}{4} \Delta(\vec\phi)^\frac{3}{4} + L_\cU \sqrt{|\cA| T \Delta(\vec\phi)} + J (L_\cC+L_\cC^2) |\cA| \sqrt{T \Delta(\vec\phi)} \right) \ln(dJ|\cA||\cN||\cS|T/\delta) \right).
    \end{align*}
    A benchmark sequence $\vec\phi$ is piecewise feasible if on each interval of constancy $I_k$, the corresponding rule $\psi_k$ maps to the set of actions that are feasible over that entire interval, i.e., $\psi_k: \cA \to \cA_{I_k}^{\bc,|I_k|^{-1/4}}$.
\end{corollary}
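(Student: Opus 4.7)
The plan is to reduce the dynamic regret bound to a sum of adaptive (per-interval) regret bounds, using the structural fact that a piecewise feasible benchmark sequence naturally decomposes the horizon into intervals on which the benchmark is a single constant rule that maps to actions feasible throughout that interval.

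First I would unpack the definition of piecewise feasibility: the change points of $\vec\phi$ partition $[1,T]$ into $\Delta(\vec\phi)+1$ contiguous intervals $I_0,\ldots,I_{\Delta(\vec\phi)}$, and on each interval $I_k$ the rule equals some constant $\psi_k:\cA\to\cA_{I_k}^{\bc,|I_k|^{-1/4}}$. This lets me rewrite
\[
    \Reg_{\swap-\dyn}(u,\vec\phi) = \sum_{k=0}^{\Delta(\vec\phi)} \sum_{t\in I_k}\left(u(\psi_k(a_t),y_t) - u(a_t,y_t)\right).
\]
Since $\cS$ in the \textsc{Decision-Infeasibility-Calibration} instantiation is taken to include all contiguous intervals (as in Corollary \ref{cor:swap-adaptive-regret}), each inner sum is bounded by applying Corollary \ref{cor:swap-regret} with $S=I_k$ and $\lambda=|I_k|^{-1/4}$, which yields
\[
    \sum_{t\in I_k}\left(u(\psi_k(a_t),y_t) - u(a_t,y_t)\right) \le O\left(\left(L_\cU|\cA||I_k|^{1/4} + L_\cU\sqrt{|\cA||I_k|} + J(L_\cC+L_\cC^2)|\cA|\sqrt{|I_k|}\right)\ln(dJ|\cA||\cN||\cS|T/\delta)\right).
\]

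Next I would sum these per-interval bounds across $k=0,\ldots,\Delta(\vec\phi)$. The main calculation is controlling $\sum_k |I_k|^{1/4}$ and $\sum_k |I_k|^{1/2}$ subject to $\sum_k |I_k|=T$ and there being $\Delta(\vec\phi)+1$ terms. By Hölder's inequality with conjugate exponents $(4,4/3)$, concavity of $x\mapsto x^{1/4}$ gives $\sum_k |I_k|^{1/4} \le (\Delta(\vec\phi)+1)^{3/4}\,T^{1/4}$, and Cauchy--Schwarz gives $\sum_k |I_k|^{1/2}\le \sqrt{(\Delta(\vec\phi)+1)\,T}$. Plugging these into the sum of per-interval bounds yields the stated rates $T^{1/4}\Delta(\vec\phi)^{3/4}$ and $\sqrt{T\Delta(\vec\phi)}$ (absorbing the $+1$ into constants in the $O(\cdot)$).

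There is no real obstacle beyond bookkeeping: the piecewise feasibility condition is designed precisely so that each interval falls under the hypothesis of Corollary \ref{cor:swap-regret}, and the failure probability is controlled by the single high-probability event of Theorem \ref{thm:unbiased-algorithm} (which simultaneously covers every interval in $\cS$, so no extra union bound over the $\Delta(\vec\phi)+1$ intervals is needed). The only mildly delicate point is ensuring that the margin $\lambda=|I_k|^{-1/4}$ used in each per-interval application agrees with the margin appearing in the piecewise-feasibility definition of the benchmark $\vec\phi$, which is exactly how the statement of the corollary is set up.
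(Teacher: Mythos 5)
Your proposal is correct and follows essentially the same route as the paper: decompose the dynamic regret over the $\Delta(\vec\phi)+1$ intervals of constancy, apply the per-subsequence swap regret bound (Theorem \ref{thm:swap-regret} with $\lambda = |I_k|^{-1/4}$, equivalently Corollary \ref{cor:swap-regret}) to each interval, and sum using concavity/H\"older with $\sum_k |I_k| = T$. Your added remarks about the single high-probability event covering all intervals and the margin matching the piecewise-feasibility definition are accurate and consistent with the paper's argument.
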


\newcommand{\proofSwapDynamicRegret}{
    Suppose the sequence $\vec\phi$ have change points that partion $[1,T]$ into intervals $I_0,\ldots,I_{\Delta(\vec\phi)}$. The action modification rule on each interval $I_k$ is the constant rule $\psi_k: \cA \to \cA_{I_k}^{\bc,\lambda}$.

    By Theorem \ref{thm:swap-regret}, for any interval $I_k$, the regret against $\psi_k$ over $I_k$ is bounded by:
    \begin{align*}
        \MoveEqLeft \sum_{t=\min\{I_k\}}^{\max\{I_k\}} (u(\psi_k(a_t),y_t) - u(a_t,y_t)) \\
        &\le O\left( \left( L_\cU |\cA| |I_k|^{1/4} + L_\cU \sqrt{|\cA||I_k|} + \frac{J L_\cC |\cA| |I_k|^{1/4}}{\lambda} + \frac{J L_\cC^2 |\cA|}{\lambda^2} \right) \ln(dJ|\cA||\cN||\cS|T/\delta) \right)
    \end{align*}

    Summing over all the intervals, we have:
    \begin{align*}
        \MoveEqLeft \sum_{t=1}^{T} (u(\psi_k(a_t),y_t) - u(a_t,y_t)) \\
        &= \sum_{k=0}^{\Delta(\vec\phi)} \sum_{t=\min\{I_k\}}^{\max\{I_k\}} (u(\psi_k(a_t),y_t) - u(a_t,y_t)) \\
        &\le \sum_{k=0}^{\Delta(\vec\phi)} O\left( \left( L_\cU |\cA| |I_k|^{1/4} + L_\cU \sqrt{|\cA||I_k|} + \frac{J L_\cC |\cA| |I_k|^{1/4}}{\lambda} + \frac{J L_\cC^2 |\cA|}{\lambda^2} \right) \ln(dJ|\cA||\cN||\cS|T/\delta) \right)
    \end{align*}

    We note that $\lambda$ can be set independently for each interval $I_k$. By setting $\lambda = |I_k|^{-1/4}$, we arrive at the following bound:
    \begin{align*}
        \sum_{k=0}^{\Delta(\vec\phi)} O\left( \left( L_\cU |\cA| |I_k|^{1/4} + L_\cU \sqrt{|\cA||I_k|} + J (L_\cC+L_\cC^2) |\cA| \sqrt{|I_k|} \right) \ln(dJ|\cA||\cN||\cS|T/\delta) \right)
    \end{align*}

    By concavity of the functions $f_1(x)=x^{1/4}$ and $f_2(x)=x^{1/2}$ and the fact that $\sum_{k=0}^{\Delta(\vec\phi)} |I_k| = T$, this expression is at most:
    \begin{align*}
        O\left( \left( L_\cU |\cA| T^{1/4} \Delta(\vec\phi)^{3/4} + L_\cU \sqrt{|\cA| T \Delta(\vec\phi)} + J (L_\cC+L_\cC^2) |\cA| \sqrt{T \Delta(\vec\phi)} \right) \ln(dJ|\cA||\cN||\cS|T/\delta) \right)
    \end{align*}
}

\ifarxiv
\begin{proof}
    \proofSwapDynamicRegret
\end{proof}
\else
\fi

Since external regret is a special case of swap regret, our bounds apply directly to the external versions of these guarantees as well.

\bibliographystyle{ACM-Reference-Format}
\bibliography{references}

\appendix

\section{Additional Related Work} \label{app:related-work}
The study of online learning with long-term constraints was initiated by \cite{MannorTY09}, who established a foundational impossibility result. They demonstrated that learners cannot achieve sublinear external regret when benchmarked against the set of actions that satisfy the constraints marginally, i.e., on average over the entire time horizon. In response to this limitation, subsequent work pivoted to a more stringent benchmark: actions that fulfill the constraints on every single round (e.g., \cite{Sun17safety,guo2022online,anderson2022lazy,yi2023distributed,sinha2024optimal,lekeufack2024optimistic}). These papers generally study the problem within the online convex optimization framework, rather than for a finite number of actions. Within this body of work, \cite{sinha2024optimal} achieved the fastest known rates without additional structural assumptions, with both regret and cumulative constraint violation scaling as $\tilde{O}(\sqrt{T})$. For the experts setting, \cite{lekeufack2024optimistic} proposed an algorithm where both metrics scale as $\tilde{O}(\sqrt{T\ln(d)})$, with $d$ representing the number of experts and regret measured against the set of expert probability distributions that satisfy the constraints in expectation.

This literature has also studied dynamic regret benchmarks (\cite{ChenLG17,chen2018,chen2018bandit,CaoL19,vaze2022dynamic,liu2022simultaneously,lekeufack2024optimistic}), where regret is measured against a slowly changing sequence of actions, one for every round in the interaction. We introduce the stronger notion of dynamic swap regret. 

Some papers in this literature (e.g. \citet{neelyYu2017,ChenLG17,CaoL19,YuNeely2020,Castiglioni22}) assume Slater's condition holds --- the existence of a benchmark action that satisfies all of the constraints with constant margin (or a sequence of strongly feasible actions, in the dynamic benchmark case), which is an assumption we sometimes make in this work as well. Note that we also provide bounds that hold without this assumption.

Omniprediction was  introduced by \cite{gopalan2021omnipredictors} within the batch (distributional) learning setting. In its standard formulation, omniprediction considers a binary label space $\cY = \{0,1\}$, a continuous action space of real-valued predictions $\cA = [0,1]$, and decision-makers who optimize a loss function $\ell:[0,1] \times \{0,1\} \rightarrow \mathbb{R}$, such as squared or absolute error. A key initial finding was that multicalibration (c.f. \cite{hebert2018multicalibration}) is a sufficient condition for achieving omniprediction. Subsequent work \citep{gopalan2022loss,gopalan2023swap} studied both weaker and stronger notions of calibration and the corresponding omniprediction guarantees they provide.

Omniprediction was later extended to the online adversarial setting by \cite{garg2024oracle}, who provided oracle-efficient algorithms, with subsequent work establishing optimal regret bounds \citep{okoroafor2025near,dwork2024fairness}. While the bulk of this literature focuses on binary outcomes, a notable exception is \cite{gopalan2024omnipredictors}, which investigates vector-valued outcomes for decision-makers with convex loss functions. The problem of omniprediction with constraints has been explored in \citep{globus2023multicalibrated,hu2023omnipredictors}. Consistent with the broader literature, these works operate in the batch setting with binary outcomes and continuous actions, focusing on constraints motivated by group fairness in machine learning classification tasks.

A parallel line of work has emerged on making sequential predictions for downstream agents \cite{kleinberg2023u,noarov2023highdimensional,roth2024forecasting,hu2024predict}. This literature considers adversarially chosen outcomes from a space $\cY$ and downstream agents who possess arbitrary discrete action spaces and seek to optimize their own losses. Several of these papers move beyond the binary setting \cite{noarov2023highdimensional,roth2024forecasting}, addressing scenarios---similar to our own---where loss functions are linear in the high-dimensional state $y \in [0,1]^d$ to be predicted. The primary objective is to provide worst-case, diminishing regret guarantees for all such agents. Motivated by competitive environments, a subset of this work \cite{noarov2023highdimensional,roth2024forecasting,hu2024predict} provides guarantees for diminishing swap regret at near-optimal rates.

Recognizing that traditional calibration is unobtainable at $O(\sqrt{T})$ rates in the online adversarial setting \citep{qiao2021stronger,dagan2024improved}, this literature has employed alternative techniques. Methods such as ``U-calibration'' \citep{kleinberg2023u} and extensions of decision calibration \cite{zhao2021calibrating,noarov2023highdimensional,roth2024forecasting} circumvent these known lower bounds while still being powerful enough to ensure downstream agents incur no (swap) regret. A unifying perspective on these two research areas is offered by \cite{lu2025sample}. Our own model and techniques are primarily derived from this ``prediction for downstream regret'' literature, adopting its features of arbitrary action spaces, high-dimensional outcomes, and swap regret guarantees in an online adversarial context.

Finally, our work is a direct follow up to \cite{BLR25} who introduced the problem of ``omniprediction with long term constraints''. Like \cite{BLR25}, we give an algorithm that can guarantee an arbitrary collection of decision makers diminishing regret bounds on an arbitrarily specified collection of subsequences. Our main technical contribution is an exponentially improved dependence (in both the regret and constraint violation terms) on the number of such subsequences --- the algorithm of \cite{BLR25} depends \emph{linearly} on this, whereas our bounds depend only \emph{logarithmically} on this parameter. This is crucial in order to give dynamic regret bounds, which fall out of guaranteeing diminishing (swap) regret on all contiguious subsequences, of which there are $\Omega(T^2)$. This would yield trivial bounds using the algorithm of \cite{BLR25}, whereas it costs only an additional logarithmic term in our regret bounds relative to what we could guarantee for a static benchmark.

\ifarxiv
\else
\section{Omitted Proofs} \label{app:proofs}
\subsection{Proof of Lemma \ref{lem:infeasibility-calibration}}
\proofLemmaInfeasibilityCalibration

\proofLemmaEmptySet

\subsection{Proof of Theorem \ref{thm:CCV}}
\proofTheoremCCV

\subsection{Proof of Theorem \ref{thm:swap-regret}}
\proofTheoremSwapRegretPartOne
We can now complete the proof of Theorem \ref{thm:swap-regret}.
\proofTheoremSwapRegretPartTwo

\subsection{Proof of Corollary \ref{cor:swap-dynamic-regret}}
\proofSwapDynamicRegret

\fi

\section{Analysis for the Zero-Margin Benchmark} \label{app:zero-margin}

\subsection{A Decision Rule with Feasibility Tolerance}

In this section, we obtain guarantees for the standard benchmark $\cA_S^{\bc,0}$ with zero margin $\lambda=0$. For this purpose, we modify the agent's decision rule. We introduce a relaxed rule that allows for a small, positive tolerance in predicted feasibility $\eta>0$. An action is now considered feasible by the agent if its predicted constraint violation is less than or equal to this tolerance.

Formally, we say that an action $a$ is predicted to $\eta$-violate the $j$-th constraint if $c_j(a,p_t) > \eta$.
An action $a$ is predicted to be $\eta$-infeasible if $a$ is predicted to $\eta$-violate any constraint, and is predicted to be $\eta$-feasible otherwise.
This leads to a relaxed version of the constrained best response.

\begin{definition}[$\eta$-Constrained Best Response] 
    Fix a utility $u: \cA \times \cY \to [0,1]$, $J$ constraints $\{c_j: \cA \times \cY \to [-1,1]\}_{j \in [J]}$, a prediction $p \in \cY$, and a tolerance $\eta>0$. The $\eta$-constrained best response to $p$ according to $u$ and $\{c_j\}_{j \in [J]}$, denoted as $\CBR_\eta^{u,\bc}(p)$, is the solution to the constrained optimization problem:
    \begin{align*}
        \underset{a \in \cA}{\text{maximize}} &\quad u(a,p_t) \\
        \text{subject to} &\quad c_j(a,p_t) \le \eta \; \text{ for every } j \in [J]
    \end{align*}
\end{definition}

The agent can obtain $\CBR_\eta^{u,\bc}(p_t)$ in two steps:
\begin{enumerate}[(1)]
    \item 
    The agent first discards actions that are $\eta$-infeasible according to the prediction.

    For each constraint $j \in [J]$, the set of actions predicted to $\eta$-violate that constraint is denoted as:
    $$\widehat{\cA}_t^{c_j,\eta-\infea} = \left\{ a \in \cA: c_j(a,p_t) > \eta \right\}.$$
    
    The agent discards actions that are predicted to $\eta$-violate any of the $J$ constraints, i.e.,
    $$\widehat{\cA}_t^{\bc,\eta-\infea} = \cup_{j \in [J]} \widehat{\cA}_t^{c_j,\eta-\infea} = \left\{ a \in \cA: \exists j \in [J], c_j(a,p_t) > \eta \right\}.$$
    
    The retained actions that are predicted to be $\eta$-feasible are denoted as $$\widehat{\cA}_t^{\bc,\eta-\fea} = \left\{ a \in \cA: \forall j \in [J], c_j(a,p_t) \le \eta \right\}.$$
    \item
    The agent then chooses an action from the retained action set $\widehat{\cA}_t^{\bc,\eta-\fea}$ that maximizes the utility function according to the prediction, i.e.,
    \begin{align*}
        a_t = \argmax_{a \in \widehat{\cA}_t^{\bc,\eta-\fea}} u(a,p_t).
    \end{align*}
\end{enumerate}

If none of the actions are predicted to be feasible, i.e., $\widehat\cA_t^{\bc,\eta-\fea} = \emptyset$, the agent can choose any arbitrary action from $\cA$. Our predictions ensure this special case rarely occurs, and hence its influence on the cumulative constraint violation and regret is negligible.

\subsection{Conditionally Unbiased Predictions}

As in our main analysis, the guarantees for the relaxed decision rule rely on the predictions being conditionally unbiased. The definitions for decision calibration and infeasibility calibration are analogous to those in the main text, modified to account for the feasibility tolerance $\eta$.

\begin{definition}[$(\cN,\cS,\eta,\alpha)$-Decision Calibration]
    Let $\cS$ be a collection of subsequences. Let $\cN$ be a set of agents, where each agent is equipped with a utility function $u: \cA \times \cY \to [0,1]$ and $J$ constraint functions $\{c_j: \cA \times \cY \to [-1,1]\}_{j \in [J]}$. 
    Let $\eta>0$ be the feasibility tolerance.
    We say that a sequence of predictions $p_1,\ldots,p_T$ is $(\cN,\cS,\eta,\alpha)$-decision calibrated with respect to a sequence of outcomes $y_1,\ldots,y_T$ if for every $S \in \cS$, $a \in \cA$, and $(u,\bc) \in \cN$:
    \[
        \left\| \sum_{t=1}^T \1\left[t \in S, \CBR_\eta^{u,\bc}(p_t) = a \right] (p_t - y_t) \right\|_\infty \leq \alpha(T^{u,\bc,S,\eta}(a))
    \]
    where $T^{u,\bc,S,\eta}(a) = \sum_{t=1}^T \1\left[t \in S, \CBR_\eta^{u,\bc}(p_t) = a\right]$.
\end{definition}

\begin{definition}[$(\cN,\cS,\eta,\beta)$-Infeasibility Calibration]
    Let $\cS$ be a collection of subsequences. Let $\cN$ be a set of agents, where each agent is equipped with a utility function $u: \cA \times \cY \to [0,1]$ and $J$ constraint functions $\{c_j: \cA \times \cY \to [-1,1]\}_{j \in [J]}$. 
    Let $\eta>0$ be the feasibility tolerance.
    We say that a sequence of predictions $p_1,\ldots,p_T$ is $(\cN,\cS,\eta,\beta)$-infeasibility calibrated with respect to a sequence of outcomes $y_1,\ldots,y_T$ if for every $S \in \cS$, $a \in \cA$, $(u,\bc) \in \cN$, and $j \in [J]$:
    \[
        \left\| \sum_{t=1}^T \1\left[t \in S, a \in \widehat\cA_t^{c_j,\eta-\infea}\right] (p_t - y_t) \right\|_\infty \leq \beta(T^{c_j,S,\eta-\infea}(a))
    \]
    where $T^{c_j,S,\eta-\infea}(a) = \sum_{t=1}^T \1\left[t \in S, a \in \widehat\cA_t^{c_j,\eta-\infea}\right]$.
\end{definition}

We will again instantiate \textsc{Unbiased-Prediction} to make predictions that simultaneously achieve decision calibration and infeasibility calibration; we will refer to this instantiation as \textsc{Decision-Infeasibility-Calibration-Relaxed}. 
Our guarantees will inherit from the guarantees of \textsc{Unbiased-Prediction}. 

\begin{theorem} \label{thm:unbiased-algorithm-zero-margin}
    Let $\cS$ be a collection of subsequences. Let $\cN$ be a set of agents, where each agent is equipped with a utility function $u: \cA \times \cY \to [0,1]$ and $J$ constraint functions $\{c_j: \cA \times \cY \to [-1,1]\}_{j \in [J]}$.
    Let $\eta>0$ be the feasibility tolerance.
    There is an instantiation of \textsc{Unbiased-Prediction} \citep{noarov2023highdimensional}
    ---which we call \textsc{Decision-Infeasibility-Calibration-Relaxed}---
    producing predictions $p_1,...,p_T \in \cY$ satisfying that for any sequence of outcomes $y_1,...,y_T \in \cY$, 
    with probability at least $1-\delta$, for any $(u,\bc) \in \cN$, $j \in [J]$, $a \in \cA$, and $S \in \cS$:
        \begin{align*}
            & \left\| \sum_{t=1}^T \1\left[t \in S, \CBR_\eta^{u,\bc}(p_t) = a \right] (p_t - y_t) \right\|_\infty \leq O\left( \ln\frac{dJ|\cA||\cN||\cS|T}{\delta} \cdot |S|^{1/4} + \sqrt{\ln\frac{dJ|\cA||\cN||\cS|T}{\delta} \cdot T^{u,\bc,S,\eta}(a)} \right) \\
            & \left\| \sum_{t=1}^T \1\left[t \in S, a \in \widehat\cA_t^{c_j,\eta-\infea}\right] (p_t - y_t) \right\|_\infty \leq O\left( \ln\frac{dJ|\cA||\cN||\cS|T}{\delta} \cdot |S|^{1/4} + \sqrt{\ln\frac{dJ|\cA||\cN||\cS|T}{\delta} \cdot T^{c_j,S,\eta-\infea}(a)} \right)
        \end{align*}
\end{theorem}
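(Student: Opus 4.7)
The plan is to derive this theorem as a direct instantiation of the \textsc{Unbiased-Prediction} algorithm of \citet{noarov2023highdimensional}, exactly mirroring the argument for Theorem \ref{thm:unbiased-algorithm} but with the relaxed ($\eta$-tolerant) decision rule and infeasibility set in place of the original ones. The Unbiased-Prediction guarantee takes as input any finite family of ``events'' $E: [T] \times \cX \times \cY \to \{0,1\}$ (where the $\cY$ argument is the prediction itself) and returns predictions that are conditionally unbiased in every coordinate on every such event, with error scaling like $O(\ln(|\mathcal{E}|T/\delta) \cdot |S|^{1/4} + \sqrt{\ln(|\mathcal{E}|T/\delta) \cdot N_E})$, where $|\mathcal{E}|$ is the total number of events and $N_E$ is the number of rounds on which $E$ fires.

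Next, I would register two families of events. For each $(u,\bc) \in \cN$, $S \in \cS$, and $a \in \cA$, register the \emph{decision event}
\[ E^{u,\bc,S,a}_{\text{dec}}(t,x_t,p_t) = \1\left[h_S(t,x_t) = 1, \; \CBR_\eta^{u,\bc}(p_t) = a\right]. \]
For each $(u,\bc) \in \cN$, $j \in [J]$, $S \in \cS$, and $a \in \cA$, register the \emph{infeasibility event}
\[ E^{c_j,S,a}_{\text{inf}}(t,x_t,p_t) = \1\left[h_S(t,x_t) = 1, \; a \in \widehat\cA_t^{c_j,\eta-\infea}\right]. \]
Both families are well-defined functions of $(t,x_t,p_t)$ only, since the $\eta$-constrained best response and the predicted-$\eta$-infeasible set are fully determined by the prediction and the agent's (fixed) utility and constraint functions (fixing a deterministic tie-breaking rule, e.g., lexicographic, for the $\argmax$ in $\CBR_\eta^{u,\bc}$ and the default action when $\widehat\cA_t^{\bc,\eta-\fea} = \emptyset$).

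The total number of events registered is $|\cN||\cS||\cA|(1+J) = O(J|\cN||\cS||\cA|)$. Invoking the \textsc{Unbiased-Prediction} guarantee separately for each of the $d$ coordinates of $\cY = [0,1]^d$, and union-bounding over these coordinates, yields the stated $\ell_\infty$ bounds with logarithmic dependence on $dJ|\cA||\cN||\cS|T/\delta$. The counts $T^{u,\bc,S,\eta}(a)$ and $T^{c_j,S,\eta-\infea}(a)$ appearing in the theorem are precisely the trigger counts $N_E$ for the two event families, so the form of the bound matches line-for-line.

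The main obstacle is not really analytic but definitional: the test events depend on the prediction $p_t$ itself, not only on the externally chosen $(t,x_t)$. This is exactly the setting Unbiased-Prediction was designed for (it is what enables outcome-indistinguishability-style guarantees), so the only care needed is to ensure that the events are measurable/deterministic functions of the prediction --- handled by the tie-breaking convention above. Beyond this, the proof is a pure bookkeeping transcription of the proof of Theorem \ref{thm:unbiased-algorithm}, with $\CBR_\eta$ replacing $\CBR$ and $\widehat\cA_t^{c_j,\eta-\infea}$ replacing $\widehat\cA_t^{c_j,\infea}$; no new algorithmic ideas are required, and the $\eta$ does not appear explicitly in the rate because it affects only which events are triggered, not the per-event error bound.
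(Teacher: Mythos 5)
Your proposal is correct and matches the paper's approach: the paper likewise obtains this theorem by instantiating \textsc{Unbiased-Prediction} with exactly the two event families you describe (decision events and $\eta$-infeasibility events, $O(J|\cA||\cN||\cS|)$ in total) and invoking the subsequence-level high-probability guarantee of Corollary \ref{cor:unbiased-prediction-algorithm-subsequence}. The only step you cite as a black box rather than prove is the conversion from the in-expectation guarantee over randomized predictions $\psi_t$ to the realized sampled $p_t$ (done in the paper via Freedman and Azuma--Hoeffding, which is where the $|S|^{1/4}$ term originates), but since you state the black-box bound in precisely that converted form, this is the same argument.
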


\subsection{Theoretical Guarantees}

Similarly to Lemma \ref{lem:infeasibility-calibration}, we use the infeasibility calibration guarantee to show that for any benchmark action $a \in \cA_S^{\bc,0}$, the number of rounds where it is incorrectly predicted to $\eta$-violate a specific constraint $c_j$ is small. As a result, the number of rounds on which no action is predicted to be $\eta$-feasible is small.
\begin{lemma} \label{lem:infeasibility-calibration-zero-margin}
    If the sequence of predictions $p_1,\ldots,p_T$ is $(\cN,\cS,\eta,\beta)$-infeasibility calibrated, then for any agent $(u,\bc) \in \cN$, subsequence $S \in \cS$, benchmark action $a \in \cA_S^{\bc,0}$, and constraint $j \in [J]$, the number of rounds $T^{c_j,S,\eta-\infea}(a)$ within $S$ on which $a$ is predicted to $\eta$-violate the $j$-th constraint is bounded by:
    \begin{align*}
        T^{c_j,S,\eta-\infea}(a) \le f_\beta(L_\cC/\eta)
    \end{align*}

    Consequently, the number of rounds within $S$ on which no actions are predicted to be feasible is bounded by:
    \begin{align*}
        \left|\left\{ t \in S: \widehat\cA_t^{\bc,\eta-\fea} = \emptyset \right\}\right| \le J f_\beta(L_\cC/\eta)
    \end{align*}

    In particular, plugging in the guarantee from  Theorem \ref{thm:unbiased-algorithm-zero-margin} yields the following concrete form of $f_\beta(L_\cC/\eta)$, which holds with probability at least $1-\delta$:
    \begin{align*}
        f_\beta(L_\cC/\eta) = O\left( \left(\frac{L_\cC |S|^{1/4}}{\eta} + \frac{L_\cC^2}{\eta^2} \right) \ln(dJ|\cA||\cN||\cS|T/\delta) \right)
    \end{align*}
\end{lemma}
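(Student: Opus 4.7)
The plan is to mirror the proof of Lemma \ref{lem:infeasibility-calibration} essentially verbatim, with $\lambda$ replaced by $\eta$ and with all ``infeasible'' sets replaced by their $\eta$-relaxed counterparts. First I would fix an agent $(u,\bc)\in\cN$, a subsequence $S\in\cS$, a benchmark action $a\in\cA_S^{\bc,0}$, and a constraint index $j\in[J]$. The two key facts are: (i) on any round $t\in S$ for which $a\in\widehat\cA_t^{c_j,\eta-\infea}$ we have $c_j(a,p_t)>\eta$ by definition of $\eta$-infeasibility, and (ii) since $a\in\cA_S^{\bc,0}$ we have $c_j(a,y_t)\le 0$ for every $t\in S$. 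Subtracting yields $c_j(a,p_t)-c_j(a,y_t)>\eta$ on every such round, so summing the restricted indicator gives the lower bound
\[
\sum_{t=1}^T \1\!\left[t\in S,\ a\in\widehat\cA_t^{c_j,\eta-\infea}\right]\bigl(c_j(a,p_t)-c_j(a,y_t)\bigr) > \eta\,T^{c_j,S,\eta-\infea}(a).
\]

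Next I would upper bound the same quantity by pulling the indicator inside $c_j$ using linearity in $y$, then applying $L_\cC$-Lipschitzness of $c_j$ in the $\ell_\infty$ norm, and finally applying the $(\cN,\cS,\eta,\beta)$-infeasibility calibration hypothesis. This gives $L_\cC\,\beta\bigl(T^{c_j,S,\eta-\infea}(a)\bigr)$ as an upper bound. Combining the two bounds yields $\eta\,T^{c_j,S,\eta-\infea}(a) < L_\cC\,\beta\bigl(T^{c_j,S,\eta-\infea}(a)\bigr)$, i.e.\ $g_\beta(T^{c_j,S,\eta-\infea}(a))=T^{c_j,S,\eta-\infea}(a)/\beta(\cdot)<L_\cC/\eta$, and applying the inverse $f_\beta=g_\beta^{-1}$ (monotone by the assumed form of $\beta$) yields $T^{c_j,S,\eta-\infea}(a)\le f_\beta(L_\cC/\eta)$, which is the first displayed bound.

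For the ``empty feasible set'' bound I would reuse the argument from the proof of Lemma \ref{lem:infeasibility-calibration}: pick any $a\in\cA_S^{\bc,0}$ (nonempty by assumption), and note that if $\widehat\cA_t^{\bc,\eta-\fea}=\emptyset$ then in particular $a\notin\widehat\cA_t^{\bc,\eta-\fea}$, so there exists some $j\in[J]$ with $a\in\widehat\cA_t^{c_j,\eta-\infea}$. A union bound over $j$ and the already-proven per-constraint bound give $|\{t\in S:\widehat\cA_t^{\bc,\eta-\fea}=\emptyset\}|\le \sum_{j=1}^J T^{c_j,S,\eta-\infea}(a)\le J\,f_\beta(L_\cC/\eta)$.

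Finally, for the concrete form, I would substitute the $\beta$ from Theorem \ref{thm:unbiased-algorithm-zero-margin} into the inequality $\eta N \le L_\cC\beta(N)$ with $N=f_\beta(L_\cC/\eta)$ and solve the resulting quadratic-in-$\sqrt{N}$ inequality exactly as in Lemma \ref{lem:infeasibility-calibration}, producing the two terms $L_\cC|S|^{1/4}/\eta$ and $L_\cC^2/\eta^2$ each multiplied by $\ln(dJ|\cA||\cN||\cS|T/\delta)$. There is no genuine obstacle here; the only thing one needs to verify carefully is that monotonicity/invertibility of $g_\beta$ continues to hold with the same $\beta$ (which is granted by the concavity assumption on $\beta$ and the explicit form from Theorem \ref{thm:unbiased-algorithm-zero-margin}), so that $f_\beta$ is well-defined and the substitution $\lambda\mapsto\eta$ is literally the only change from the margin-$\lambda$ argument.
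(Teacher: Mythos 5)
Your proposal is correct and matches the paper's own proof essentially verbatim: the paper likewise lower-bounds the restricted sum by $\eta\,T^{c_j,S,\eta-\infea}(a)$ using $c_j(a,p_t)>\eta$ and $c_j(a,y_t)\le 0$, upper-bounds it by $L_\cC\beta(\cdot)$ via linearity, Lipschitzness, and infeasibility calibration, inverts $g_\beta$, applies the same union bound over $j$ for the empty-set count, and solves the same inequality for the concrete form. No gaps.
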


\begin{proof}
    The proof is similar to that of Lemma \ref{lem:infeasibility-calibration}.
    
    On any round $t$ where action $a$ is $\eta$-predicted to violate the $j$-th constraint, we have:
    \begin{align*}
        c_j(a,p_t) > \eta
    \end{align*}
    
    Since $a$ is in the benchmark class $\cA_S^{\bc,0}$, we have:
    \begin{align*}
        c_j(a,y_t) \le 0
    \end{align*}
    
    Combining these two facts gives $c_j(a,p_t)-c_j(a,y_t) > \eta$. Summing this difference over all rounds in $S$ where $a$ is predicted to $\eta$-violate the $j$-th constraint, we get:
    \begin{align*}
        \sum_{t=1}^T \1\left[t \in S, a \in \widehat\cA_t^{c_j,\eta-\infea}\right] (c_j(a,p_t)-c_j(a,y_t)) > \eta T^{c_j,S,\eta-\infea}(a)
    \end{align*}
    
    The left-hand side can be bounded using the properties of our predictions. By linearity and $L_\cC$-Lipschitzness of the constraint function, and by $(\cN,\cS,\eta,\beta)$-infeasibility calibration, we have that:
    \begin{align*}
        \MoveEqLeft \sum_{t=1}^T \1\left[t \in S, a \in \widehat\cA_t^{c_j,\eta-\infea}\right] (c_j(a,p_t)-c_j(a,y_t)) \\
        &= c_j\left(a, \sum_{t=1}^T p_t \1\left[t \in S, a \in \widehat\cA_t^{c_j,\eta-\infea}\right]\right) - c_j\left(a, \sum_{t=1}^T y_t \1\left[t \in S, a \in \widehat\cA_t^{c_j,\eta-\infea}\right]\right) \\
        &\le L_\cC \left\|\sum_{t=1}^T p_t \1\left[t \in S, a \in \widehat\cA_t^{c_j,\eta-\infea}\right] - \sum_{t=1}^T y_t \1\left[t \in S, a \in \widehat\cA_t^{c_j,\eta-\infea}\right] \right\|_\infty \\
        &\le L_\cC \beta(T^{c_j,S,\eta-\infea}(a))
    \end{align*}

    Combining these inequalities gives the first part of the lemma:    
    \begin{align*}
        \eta T^{c_j,S,\eta-\infea}(a) < L_\cC \beta(T^{c_j,S,\eta-\infea}(a))
    \end{align*}

    For the concrete bound, we substitute the explicit form for $\beta$ from Theorem \ref{thm:unbiased-algorithm-zero-margin}:
    \begin{align*}
        \eta f_\beta(L_\cC / \eta) = L_\cC O\left( \ln\frac{dJ|\cA||\cN||\cS|T}{\delta} \cdot |S|^{1/4} + \sqrt{\ln\frac{dJ|\cA||\cN||\cS|T}{\delta} \cdot f_\beta(L_\cC / \eta)} \right)
    \end{align*}

    Solve for $f_\beta(L_\cC / \eta)$ yields the stated form:
    \begin{align*}
        f_\beta(L_\cC / \eta) = O\left( \frac{L_\cC |S|^{1/4} \ln(dJ|\cA||\cN||\cS|T/\delta)}{\eta} + \frac{L_\cC^2 \ln(dJ|\cA||\cN||\cS|T/\delta)}{\eta^2} \right)
    \end{align*}

    Fix any arbitrary benchmark action $a \in \cA_S^{\bc,0}$. If on round $t \in S$, no action is predicted to be feasible, then it must be that $a$ is predicted to be infeasible. Hence, there must exist at least one constraint that $a$ is predicted to violate. As a result, we have:
    \begin{align*}
        \left|\left\{ t \in S: \widehat\cA_t^{\bc,\eta-\fea} = \emptyset \right\}\right| &\le \left|\left\{ t \in S: \exists j \in [J], c_j(a,p_t) > \eta \right\}\right| \\
        &\le \sum_{j=1}^J T^{c_j,S,\eta-\infea}(a) \\
        &\le J f_\beta(L_\cC / \eta)
    \end{align*}
\end{proof}

\subsubsection{Bounding the Cumulative Constraint Violation}

The bound on the cumulative constraint violation follows the same argument as in Theorem \ref{thm:CCV}. The key difference is that the agent's relaxed decision rule, which tolerates violations up to $\eta$ at each round, introduces an additional error term $\eta |S|$.
\begin{theorem} \label{thm:CCV-zero-margin}
    Let $\cS$ be a collection of subsequences. 
    Let $\cN$ be a set of agents, where each agent is equipped with a utility function $u: \cA \times \cY \to [0,1]$ and $J$ constraint functions $\{c_j: \cA \times \cY \to [-1,1]\}_{j \in [J]}$. 
    Let $\eta>0$ be the feasibility tolerance.
    Suppose each agent plays $\eta$-constrained best responses to $p_t$ to compete with actions from the benchmark class $\cA_S^{\bc,0}$ over each subsequence $S \in \cS$. If the sequence of predictions $p_1,\ldots,p_T$ is $(\cN,\cS,\eta,\alpha)$-decision calibrated and $(\cN,\cS,\eta,\beta)$-infeasibility calibrated, then the cumulative constraint violation of any agent over any subsequence $S \in \cS$ is bounded by:
    \[
        \CCV(S) \le L_\cC |\cA| \alpha(|S|/|\cA|) + J f_\beta(L_\cC/\eta) + \eta |S|
    \]
    In particular, plugging in the guarantee from  Theorem \ref{thm:unbiased-algorithm-zero-margin} yields the following bound, which holds with probability at least $1-\delta$:
    \[
        \CCV(S) \le O\left( \left( L_\cC |\cA| |S|^{1/4} + L_\cC \sqrt{|\cA||S|} + \frac{J L_\cC |S|^{1/4}}{\eta} + \frac{J L_\cC^2}{\eta^2} \right) \ln(dJ|\cA||\cN||\cS|T/\delta) + \eta|S| \right)
    \]
\end{theorem}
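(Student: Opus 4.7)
The proof will closely mirror that of Theorem \ref{thm:CCV}, with minor but important modifications to accommodate the zero-margin benchmark and the relaxed $\eta$-constrained best response rule. The plan is as follows.

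First I would fix an arbitrary constraint index $j \in [J]$ and bound $\sum_{t \in S} c_j(a_t, y_t)$, since $\CCV(S)$ is obtained by taking the maximum over $j \in [J]$. Then I would apply the triangle inequality to split the cumulative violation as
\[
\sum_{t \in S} c_j(a_t, y_t) \le \left|\sum_{t \in S} c_j(a_t, y_t) - \sum_{t \in S} c_j(a_t, p_t)\right| + \sum_{t \in S} c_j(a_t, p_t),
\]
which isolates a ``prediction-to-outcome'' gap and a ``predicted-violation'' term.

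For the first term, I would follow the argument from Theorem \ref{thm:CCV} verbatim: partition by the action played, pull out the sum using linearity of $c_j$ in $y$, apply $L_\cC$-Lipschitzness to convert action-conditional sums of $(p_t - y_t)$ into $\ell_\infty$-norms, then invoke $(\cN,\cS,\eta,\alpha)$-decision calibration and concavity of $\alpha$. This yields a bound of $L_\cC |\cA| \alpha(|S|/|\cA|)$, unchanged from the main text.

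For the second term $\sum_{t \in S} c_j(a_t, p_t)$, the key difference from Theorem \ref{thm:CCV} appears here. I would split over rounds according to whether $\widehat\cA_t^{\bc,\eta-\fea}$ is empty. On any round where the predicted $\eta$-feasible set is non-empty, the $\eta$-constrained best response rule guarantees $c_j(a_t, p_t) \le \eta$ (rather than $\le 0$ as in the original), contributing at most $\eta |S|$ in total. On rounds where the set is empty, the violation is at most $1$, and the count of such rounds is bounded by $J f_\beta(L_\cC / \eta)$ via Lemma \ref{lem:infeasibility-calibration-zero-margin}. Summing the three contributions gives the stated bound; the concrete form follows by substituting the expression for $f_\beta(L_\cC/\eta)$ from Lemma \ref{lem:infeasibility-calibration-zero-margin} together with the form of $\alpha$ from Theorem \ref{thm:unbiased-algorithm-zero-margin}.

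The proof is essentially routine given the machinery in place; there is no real obstacle. The only substantive conceptual point is tracking where the relaxation parameter $\eta$ enters: it shows up both as a direct additive penalty of $\eta |S|$ (because the agent now accepts actions predicted to violate constraints by up to $\eta$), and as the effective ``gap'' used by the infeasibility calibration argument in Lemma \ref{lem:infeasibility-calibration-zero-margin} (replacing the role previously played by $\lambda$). The $\eta|S|$ term is exactly what makes optimizing over $\eta$ meaningful: balancing $J L_\cC^2 / \eta^2$ against $\eta|S|$ yields the $\tilde O(|S|^{2/3})$ rate alluded to in the preceding remark.
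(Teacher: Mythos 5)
Your proposal is correct and matches the paper's intended argument exactly: the paper itself states that the proof follows that of Theorem \ref{thm:CCV} verbatim, with the only change being that on rounds where $\widehat\cA_t^{\bc,\eta-\fea} \neq \emptyset$ the chosen action satisfies $c_j(a_t,p_t) \le \eta$ rather than $\le 0$, yielding the extra $\eta|S|$ term, while the empty-set rounds are handled by Lemma \ref{lem:infeasibility-calibration-zero-margin}. Your accounting of where $\eta$ enters (both as the additive $\eta|S|$ penalty and as the gap replacing $\lambda$ in the infeasibility-calibration argument) is exactly right.
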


By setting $\eta = |T|^{-1/3}$, we arrive at the following concrete bound for the cumulative constraint violation.
\begin{corollary} \label{cor:CCV}
    Let $\cS$ be a collection of subsequences. Let $\cN$ be a set of agents, where each agent is equipped with a utility function $u: \cA \times \cY \to [0,1]$ and $J$ constraint functions $\{c_j: \cA \times \cY \to [-1,1]\}_{j \in [J]}$. 
    Let $\eta>0$ be the feasibility tolerance.
    Suppose each agent plays $\eta$-constrained best responses to $p_t$ to compete with actions from the benchmark class $\cA_S^{\bc,0}$ over each subsequence $S \in \cS$. The sequence of predictions $p_1,\ldots,p_T$ produced by \textsc{Decision-Infeasibility-Calibration} guarantees that with probability at least $1-\delta$, the cumulative constraint violation of any agent over any subsequence $S \in \cS$ is bounded by:
    \begin{align*}
        \CCV(S) \le O\left( \left( L_\cC |\cA| |S|^{1/4} + L_\cC \sqrt{|\cA||S|} + J (L_\cC+L_\cC^2) T^{2/3} \right) \ln(dJ|\cA||\cN||\cS|T/\delta) \right)
    \end{align*}
\end{corollary}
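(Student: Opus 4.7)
The plan is to derive this corollary as a direct consequence of Theorem \ref{thm:CCV-zero-margin}, which already gives a parametric bound on $\CCV(S)$ in terms of the feasibility tolerance $\eta$. The overall structure is: fix the predictions produced by \textsc{Decision-Infeasibility-Calibration-Relaxed}, invoke Theorem \ref{thm:CCV-zero-margin} (whose high-probability assumptions are supplied by Theorem \ref{thm:unbiased-algorithm-zero-margin}), and then balance the resulting trade-off by substituting $\eta = T^{-1/3}$.

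Concretely, Theorem \ref{thm:CCV-zero-margin} bounds $\CCV(S)$ by a sum of four terms, of which two depend on $\eta$: a ``calibration error'' term of order $J L_\cC |S|^{1/4}/\eta + J L_\cC^2 / \eta^2$ (arising from Lemma \ref{lem:infeasibility-calibration-zero-margin} bounding the number of rounds where no action is predicted to be $\eta$-feasible), and a per-round tolerance term $\eta |S|$ (arising from the fact that the relaxed decision rule can pick actions with predicted violation up to $\eta$). The first shrinks as $\eta$ grows while the second grows linearly in $\eta$, so the right choice of $\eta$ equates their order.

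With $\eta = T^{-1/3}$ and using $|S| \le T$, each $\eta$-dependent contribution is at most $T^{2/3}$ up to constants: $\eta |S| \le T^{2/3}$, $J L_\cC^2 / \eta^2 = J L_\cC^2 T^{2/3}$, and $J L_\cC |S|^{1/4}/\eta \le J L_\cC T^{1/4 + 1/3} = J L_\cC T^{7/12} \le J L_\cC T^{2/3}$. The remaining $\eta$-independent terms $L_\cC |\cA| |S|^{1/4}$ and $L_\cC \sqrt{|\cA||S|}$ are carried over verbatim. Collecting everything and absorbing the $\ln(dJ|\cA||\cN||\cS|T/\delta)$ factor from the calibration guarantees of Theorem \ref{thm:unbiased-algorithm-zero-margin} yields the stated bound.

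There is essentially no technical obstacle beyond bookkeeping; the substantive work was already done in proving Theorem \ref{thm:CCV-zero-margin} and Lemma \ref{lem:infeasibility-calibration-zero-margin}. The only minor subtlety is noting that because $\eta = T^{-1/3}$ is set globally (in terms of the horizon $T$) while $|S|$ can be much smaller than $T$, the two terms of the trade-off are not balanced per-subsequence; nonetheless the worst-case contribution is still $O(T^{2/3})$ uniformly over $S \in \cS$, which is what the corollary claims.
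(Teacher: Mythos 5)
Your proposal is correct and matches the paper's (implicit) derivation exactly: the paper obtains this corollary by substituting $\eta = T^{-1/3}$ into the parametric bound of Theorem \ref{thm:CCV-zero-margin} and bounding $|S| \le T$, which is precisely your bookkeeping. Your remark that the global choice of $\eta$ does not balance the trade-off per-subsequence but still yields a uniform $O(T^{2/3})$ contribution is a correct and worthwhile observation that the paper leaves unstated.
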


\subsubsection{Bounding the Regret}

The bound on the constrained swap regret follows the same argument as in Theorem \ref{thm:swap-regret}.
\begin{theorem} \label{thm:swap-regret-zero-margin}
    Let $\cS$ be a collection of subsequences. Let $\lambda: \bN \to (0,\infty)$ be a margin function. Let $\cN$ be a set of agents, where each agent is equipped with a utility function $u: \cA \times \cY \to [0,1]$ and $J$ constraint functions $\{c_j: \cA \times \cY \to [-1,1]\}_{j \in [J]}$. 
    Let $\eta>0$ be the feasibility tolerance.
    Suppose each agent plays $\eta$-constrained best responses to $p_t$ to compete with actions from the benchmark class $\cA_S^{\bc,0}$ over each subsequence $S \in \cS$. If the sequence of predictions $p_1,\ldots,p_T$ is $(\cN,\cS,\eta,\alpha)$-decision calibrated and $(\cN,\cS,\eta,\beta)$-infeasibility calibrated, then the constrained swap regret of any agent over any subsequence $S \in \cS$ is bounded by:
    \begin{align*}
        \Reg_\swap(u,\bc,0,S) \le 2 L_\cU |\cA| \alpha(|S|/|\cA|) + J |\cA| f_\beta(L_\cC/\eta)
    \end{align*}
    In particular, plugging in the guarantee from  Theorem \ref{thm:unbiased-algorithm-zero-margin} yields the following bound, which holds with probability at least $1-\delta$:
    \begin{align*}
        \Reg_\swap(u,\bc,0,S) \le O\left( \left( L_\cU |\cA| |S|^{1/4} + L_\cU \sqrt{|\cA||\cS|} + \frac{J L_\cC |\cA| |S|^{1/4}}{\eta} + \frac{J L_\cC^2 |\cA|}{\eta^2} \right) \ln(dJ|\cA||\cN||\cS|T/\delta) \right)
    \end{align*}
\end{theorem}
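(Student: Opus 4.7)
The plan is to mirror the proof of Theorem \ref{thm:swap-regret} step by step, substituting the $\eta$-tolerance analysis (and Lemma \ref{lem:infeasibility-calibration-zero-margin}) for the $\lambda$-margin analysis (and Lemma \ref{lem:infeasibility-calibration}) at each place where it appears. First I would fix any subsequence $S \in \cS$ and any action modification rule $\phi : \cA \to \cA_S^{\bc,0}$, and telescope the per-round regret as
\[
\sum_{t \in S}\bigl(u(\phi(a_t),y_t)-u(\phi(a_t),p_t)\bigr) + \sum_{t \in S}\bigl(u(\phi(a_t),p_t)-u(a_t,p_t)\bigr) + \sum_{t \in S}\bigl(u(a_t,p_t)-u(a_t,y_t)\bigr).
\]

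For the first and third sums I would replay the arguments of Lemmas \ref{lem:decision-calibration} and \ref{lem:decision-calibration-swap} verbatim, with the only change being that the conditioning event is now $\{\CBR_\eta^{u,\bc}(p_t)=a\}$ rather than $\{\CBR^{u,\bc}(p_t)=a\}$. Linearity of $u$ in $y$ partitions each sum by the played action, $L_\cU$-Lipschitzness in the $\ell_\infty$ norm converts the bias into the calibration error, and then $(\cN,\cS,\eta,\alpha)$-decision calibration combined with concavity of $\alpha$ and $\sum_{a\in\cA} T^{u,\bc,S,\eta}(a)=|S|$ yields $L_\cU|\cA|\alpha(|S|/|\cA|)$ for each of the two sums.

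For the middle sum I would split by whether $\phi(a_t)$ is predicted to be $\eta$-feasible. On rounds with $\phi(a_t)\in\widehat{\cA}_t^{\bc,\eta\text{-}\fea}$, the $\eta$-constrained best response $a_t$ is the predicted-utility maximizer over a set containing $\phi(a_t)$, so $u(\phi(a_t),p_t)-u(a_t,p_t)\le 0$. On the remaining rounds I would bound the pointwise gap by $1$ and count the rounds via
\[
\sum_{t \in S}\1\!\left[\phi(a_t)\in\widehat{\cA}_t^{\bc,\eta\text{-}\infea}\right] \;\le\; \sum_{a\in\cA_S^{\bc,0}}\sum_{j=1}^{J} T^{c_j,S,\eta\text{-}\infea}(a),
\]
and then invoke Lemma \ref{lem:infeasibility-calibration-zero-margin} to bound each summand by $f_\beta(L_\cC/\eta)$, giving the $J|\cA|f_\beta(L_\cC/\eta)$ contribution. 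Summing the three parts delivers the stated abstract inequality, and plugging in the explicit $f_\beta(L_\cC/\eta)$ from Theorem \ref{thm:unbiased-algorithm-zero-margin} yields the concrete high-probability bound.

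The only conceptually new ingredient, compared to the margin case, is that a benchmark action $a\in\cA_S^{\bc,0}$ satisfies merely $c_j(a,y_t)\le 0$ rather than $c_j(a,y_t)\le -\lambda$, so a gap of size $\lambda$ between prediction and outcome is not automatically available. The $\eta$-tolerance in the relaxed decision rule is what restores a gap of $\eta$ on any round where a truly feasible $a$ is nevertheless classified as $\eta$-infeasible; this is precisely the gap that Lemma \ref{lem:infeasibility-calibration-zero-margin} exploits. Once that lemma is in hand, the rest of the proof is a routine line-by-line transcription of the proof of Theorem \ref{thm:swap-regret} with $\lambda$ replaced by $\eta$ throughout.
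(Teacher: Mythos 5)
Your proposal is correct and matches the paper's intended argument: the paper itself only remarks that Theorem \ref{thm:swap-regret-zero-margin} ``follows the same argument as in Theorem \ref{thm:swap-regret},'' and your line-by-line transcription --- the three-part decomposition, the two decision-calibration lemmas applied to the $\eta$-constrained best-response events, and the split of the middle sum with Lemma \ref{lem:infeasibility-calibration-zero-margin} supplying the $J|\cA|f_\beta(L_\cC/\eta)$ count --- is exactly the argument being invoked. Your closing observation, that the $\eta$-tolerance in the decision rule replaces the $\lambda$-margin as the source of the prediction--outcome gap, correctly identifies the one substantive change.
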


By setting $\lambda = T^{-1/3}$, we arrive at the following concrete bound for the constrained swap regret.
\begin{corollary} \label{cor:swap-regret-zero-margin}
    Let $\cS$ be a collection of subsequences. Let $\cN$ be a set of agents, where each agent is equipped with a utility function $u: \cA \times \cY \to [0,1]$ and $J$ constraint functions $\{c_j: \cA \times \cY \to [-1,1]\}_{j \in [J]}$. 
    Let $\eta>0$ be the feasibility tolerance.
    Suppose each agent plays $\eta$-constrained best responses to $p_t$ to choose action $a_t$. The sequence of predictions $p_1,\ldots,p_T$ produced by \textsc{Decision-Infeasibility-Calibration-Relaxed} guarantees that with probability at least $1-\delta$, the constrained swap regret against the benchmark class $\cA_S^{\bc,0}$ of any agent over any subsequence $S \in \cS$ is bounded by:
    \begin{align*}
        \Reg_\swap(u,\bc,0,S) \le O\left( \left( L_\cU |\cA| |S|^{1/4} + L_\cU \sqrt{|\cA||\cS|} + J (L_\cC+L_\cC^2) |\cA| T^{2/3} \right) \ln(dJ|\cA||\cN||\cS|T/\delta) \right)
    \end{align*}
\end{corollary}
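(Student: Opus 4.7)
The plan is to derive the corollary by directly applying the general bound of Theorem \ref{thm:swap-regret-zero-margin} to the predictions produced by \textsc{Decision-Infeasibility-Calibration-Relaxed} and then optimizing the tolerance $\eta$.

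First, I would invoke Theorem \ref{thm:unbiased-algorithm-zero-margin} to guarantee that the predictions $p_1,\ldots,p_T$ produced by \textsc{Decision-Infeasibility-Calibration-Relaxed} are simultaneously $(\cN,\cS,\eta,\alpha)$-decision calibrated and $(\cN,\cS,\eta,\beta)$-infeasibility calibrated with probability at least $1-\delta$, for the explicit concave rates $\alpha,\beta$ stated in that theorem. This activates Theorem \ref{thm:swap-regret-zero-margin}, which yields the concrete intermediate bound
\[
\Reg_\swap(u,\bc,0,S) \le O\!\left( \Bigl( L_\cU |\cA| |S|^{1/4} + L_\cU \sqrt{|\cA||S|} + \tfrac{J L_\cC |\cA| |S|^{1/4}}{\eta} + \tfrac{J L_\cC^2 |\cA|}{\eta^2} \Bigr) \ln\tfrac{dJ|\cA||\cN||\cS|T}{\delta} \right).
\]

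Next, I would set $\eta = T^{-1/3}$ and simplify the two $\eta$-dependent terms. The second of these becomes $J L_\cC^2 |\cA| T^{2/3}$. For the first, using $|S|^{1/4} \le T^{1/4}$ gives
\[
\tfrac{J L_\cC |\cA| |S|^{1/4}}{\eta} \;\le\; J L_\cC |\cA| \, T^{1/4}\, T^{1/3} \;=\; J L_\cC |\cA| \, T^{7/12} \;\le\; J L_\cC |\cA|\, T^{2/3},
\]
since $7/12 < 2/3$. Combining both contributions yields the consolidated term $J(L_\cC + L_\cC^2) |\cA|\, T^{2/3}$, which, together with the unchanged $\eta$-free terms $L_\cU |\cA| |S|^{1/4}$ and $L_\cU \sqrt{|\cA||S|}$, matches the stated bound.

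Since every ingredient is already proved in Theorem \ref{thm:swap-regret-zero-margin} and Theorem \ref{thm:unbiased-algorithm-zero-margin}, there is no real analytical obstacle; the only point requiring minor care is the algebraic absorption of the $T^{7/12}$ term into the dominant $T^{2/3}$ term, and noting that the bound already holds for arbitrary $\eta$, so we are free to optimize $\eta$ uniformly across subsequences.
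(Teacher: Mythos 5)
Your proposal is correct and is exactly the route the paper intends: the corollary follows by instantiating Theorem \ref{thm:swap-regret-zero-margin} with the calibration rates from Theorem \ref{thm:unbiased-algorithm-zero-margin}, setting $\eta = T^{-1/3}$ (the paper's ``$\lambda = T^{-1/3}$'' is a typo for $\eta$), and absorbing the resulting $J L_\cC |\cA|\, T^{7/12}$ term into $J L_\cC |\cA|\, T^{2/3}$. The only minor caveat is that $\eta$ is a parameter of the agents' decision rule fixed in advance rather than something ``optimized'' after the fact, but since you choose it as a single global value this does not affect the argument.
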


We then achieve low constrained swap adaptive regret by instantiating our framework with the collection of all contiguous intervals, $\cS = \{[t_1,t_2]: 1 \le t_1 \le t_2 \le T\}$.
\begin{corollary} \label{cor:swap-adaptive-regret-zero-margin}
    Let $\cN$ be a set of agents, where each agent is equipped with a utility function $u: \cA \times \cY \to [0,1]$ and $J$ constraint functions $\{c_j: \cA \times \cY \to [-1,1]\}_{j \in [J]}$. 
    Let $\eta>0$ be the feasibility tolerance.
    Suppose each agent plays $\eta$-constrained best responses to $p_t$ to choose action $a_t$. 
    The sequence of predictions $p_1,\ldots,p_T$ produced by \textsc{Decision-Infeasibility-Calibration-Relaxed} guarantees that with probability at least $1-\delta$, the constrained swap adaptive regret of any agent is bounded by:
    \begin{align*}
        \Reg_{\swap-\adapt}(u,\bc,0) \le O\left( \left( ( L_\cU + J L_\cC + J L_\cC^2) |\cA| T^{2/3} \right) \ln(dJ|\cA||\cN||\cS|T/\delta) \right)
    \end{align*}
\end{corollary}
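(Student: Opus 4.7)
The plan is to specialize Corollary~\ref{cor:swap-regret-zero-margin} to the collection of all contiguous intervals $\cS = \{[t_1,t_2] : 1 \le t_1 \le t_2 \le T\}$. By the definition of constrained swap adaptive regret, $\Reg_{\swap-\adapt}(u,\bc,0) = \max_{S \in \cS}\Reg_\swap(u,\bc,0,S)$, so it suffices to obtain a uniform bound on per-subsequence swap regret and then substitute the worst-case length $|S|\le T$.

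First I would invoke Corollary~\ref{cor:swap-regret-zero-margin} with this choice of $\cS$ and with $\eta = T^{-1/3}$ (matching the setting used to derive that corollary). Crucially, the high-probability guarantee of Theorem~\ref{thm:unbiased-algorithm-zero-margin} is already uniform over every $S \in \cS$, every $(u,\bc)\in\cN$, every $a\in\cA$, and every $j\in[J]$, so a single event of probability at least $1-\delta$ suffices and no additional union bound is required. The cost of having $|\cS| = \Theta(T^2)$ subsequences enters only logarithmically, through the $\ln(dJ|\cA||\cN||\cS|T/\delta)$ factor.

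Next I would upper bound each of the three summands in the per-subsequence bound using $|S|\le T$: the first summand becomes $L_\cU|\cA|T^{1/4}$, the second becomes $L_\cU\sqrt{|\cA|T}$, and the third is already $J(L_\cC + L_\cC^2)|\cA|T^{2/3}$. Since $T^{1/4} = o(T^{2/3})$ and $T^{1/2} = o(T^{2/3})$, the third term dominates up to constants, and collecting the coefficients yields the stated $O\bigl((L_\cU + JL_\cC + JL_\cC^2)|\cA|T^{2/3}\ln(dJ|\cA||\cN||\cS|T/\delta)\bigr)$ bound.

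The argument presents no substantive obstacle; it is a direct specialization of the per-subsequence guarantee, and the only arithmetic to verify is that $T^{2/3}$ dominates the other powers of $T$. The one conceptual point worth flagging is that the exponential improvement over \cite{BLR25} in the dependence on $|\cS|$ is exactly what makes the adaptive bound non-trivial: extending from a single static benchmark to all $\Theta(T^2)$ contiguous intervals costs only an $O(\log T)$ factor inside the logarithm, rather than a polynomial-in-$T$ blow-up that would dominate the $T^{2/3}$ rate.
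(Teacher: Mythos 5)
Your proposal is correct and follows exactly the route the paper intends: the corollary is a direct specialization of Corollary~\ref{cor:swap-regret-zero-margin} to $\cS = \{[t_1,t_2] : 1 \le t_1 \le t_2 \le T\}$ with $\eta = T^{-1/3}$, using that the high-probability guarantee is already uniform over all $\Theta(T^2)$ intervals (costing only a $\ln T$ factor) and that $|S|^{1/4}$ and $\sqrt{|\cA||S|}$ are dominated by $|\cA|T^{2/3}$. The paper gives no separate proof of this corollary precisely because it is this immediate specialization, so there is nothing to add.
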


A dynamic benchmark with $\Delta$ changes partitions the entire time horizon into $\Delta+1$ intervals. By summing our per-subsequence regret bound over this specific partition, we obtain the following dynamic regret guarantee.
\begin{corollary} \label{cor:swap-dynamic-regret-zero-margin}
    Let $\cN$ be a set of agents, where each agent is equipped with a utility function $u: \cA \times \cY \to [0,1]$ and $J$ constraint functions $\{c_j: \cA \times \cY \to [-1,1]\}_{j \in [J]}$. 
    Let $\eta>0$ be the feasibility tolerance.
    Suppose each agent plays $\eta$-constrained best responses to $p_t$ to choose action $a_t$. The sequence of predictions $p_1,\ldots,p_T$ produced by \textsc{Decision-Infeasibility-Calibration-Relaxed} guarantees that with probability at least $1-\delta$, the constrained swap dynamic regret of any agent against any piecewise feasible sequence of action modification rule $\vec\phi \in (\cA^\cA)^T$ is bounded by:
    \begin{align*}
        \Reg_{\swap-\dyn}(u,\vec\phi) \le O\left( \left( (L_\cU + J L_\cC + J L_\cC^2) |\cA| T^{2/3} \Delta(\vec\phi) \right) \ln(dJ|\cA||\cN||\cS|T/\delta) \right)
    \end{align*}
    A benchmark sequence $\vec\phi$ is piecewise feasible if on each interval of constancy $I_k$, the corresponding rule $\psi_k$ maps to the set of actions that are feasible over that entire interval, i.e., $\psi_k: \cA \to \cA_{I_k}^{\bc,0}$.
\end{corollary}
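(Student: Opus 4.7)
The plan is to follow the same interval-decomposition strategy used in Corollary \ref{cor:swap-dynamic-regret}, but substituting the zero-margin per-subsequence swap regret bound from Corollary \ref{cor:swap-regret-zero-margin} (equivalently Theorem \ref{thm:swap-regret-zero-margin}) in place of the strictly-feasible one. First I would unpack the definition of piecewise feasibility of $\vec\phi$: the change points of $\vec\phi$ partition $[1,T]$ into $\Delta(\vec\phi)+1$ contiguous intervals $I_0, \ldots, I_{\Delta(\vec\phi)}$, and on each interval $I_k$ the rule is a constant $\psi_k : \cA \to \cA_{I_k}^{\bc,0}$. Because $\psi_k$ maps into the zero-margin benchmark class for the interval $I_k$, I can treat $I_k$ as a subsequence $S \in \cS$ and apply the zero-margin swap regret guarantee of Theorem \ref{thm:swap-regret-zero-margin} to it.

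Next, I would apply Theorem \ref{thm:swap-regret-zero-margin} on each $I_k$, yielding
\begin{align*}
\sum_{t \in I_k} \bigl(u(\psi_k(a_t), y_t) - u(a_t, y_t)\bigr) \le O\!\left(\!\left(L_\cU |\cA| |I_k|^{1/4} + L_\cU \sqrt{|\cA||I_k|} + \tfrac{J L_\cC |\cA| |I_k|^{1/4}}{\eta} + \tfrac{J L_\cC^2 |\cA|}{\eta^2}\right) \ln\!\tfrac{dJ|\cA||\cN||\cS|T}{\delta}\right).
\end{align*}
Setting $\eta = T^{-1/3}$ as in Corollary \ref{cor:swap-regret-zero-margin}, I would then sum these per-interval bounds across $k = 0, \ldots, \Delta(\vec\phi)$ and use $\sum_k |I_k| = T$. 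The concavity of $x \mapsto x^{1/4}$ and $x \mapsto x^{1/2}$ (Jensen's inequality) handles the terms involving $|I_k|^{1/4}$ and $\sqrt{|I_k|}$, giving contributions of order $L_\cU|\cA| T^{1/4}\Delta(\vec\phi)^{3/4}$, $L_\cU\sqrt{|\cA|T\Delta(\vec\phi)}$, and $JL_\cC|\cA|T^{7/12}\Delta(\vec\phi)^{3/4}$ respectively.

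The key difference from the strictly-feasible case is the constant per-interval term $\frac{JL_\cC^2|\cA|}{\eta^2}$. Unlike the $\lambda = |I_k|^{-1/4}$ choice in Corollary \ref{cor:swap-dynamic-regret} which allowed this term to scale sublinearly per interval, here $\eta$ must be chosen uniformly (it is the same feasibility tolerance applied by the agent across all rounds), so the constant summed across $\Delta(\vec\phi)+1$ intervals becomes $JL_\cC^2|\cA|T^{2/3}\Delta(\vec\phi)$. For $\Delta(\vec\phi) \ge 1$ and $T$ large, this term dominates all the concavity-driven contributions above, producing the advertised bound
\begin{align*}
\Reg_{\swap-\dyn}(u,\vec\phi) \le O\!\left(\!\left((L_\cU + JL_\cC + JL_\cC^2)|\cA| T^{2/3} \Delta(\vec\phi)\right)\ln\!\tfrac{dJ|\cA||\cN||\cS|T}{\delta}\right).
\end{align*}

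The main obstacle is the observation that $\eta$ is a global parameter of the agent's relaxed decision rule and therefore cannot be tuned per-interval the way $\lambda$ could in the strictly-feasible case. This is exactly why the bound degrades from $\sqrt{T\Delta(\vec\phi)}$ in Corollary \ref{cor:swap-dynamic-regret} to a linear $T^{2/3}\Delta(\vec\phi)$ dependence here; the rest of the proof is mechanical concavity-based summation once this point is recognized.
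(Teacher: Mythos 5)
Your proposal is correct and follows essentially the same route the paper takes for the strictly-feasible analogue (Corollary \ref{cor:swap-dynamic-regret}): decompose $[1,T]$ into the $\Delta(\vec\phi)+1$ intervals of constancy, apply the per-subsequence zero-margin swap regret bound on each, and sum using concavity, with the correct observation that the fixed global tolerance $\eta = T^{-1/3}$ forces the $J L_\cC^2 |\cA|/\eta^2$ term to accumulate linearly in the number of intervals and dominate. Your remark explaining why the bound degrades to $T^{2/3}\Delta(\vec\phi)$ (versus $\sqrt{T\Delta(\vec\phi)}$ when the margin can be tuned per interval) accurately captures the only substantive difference from the strictly-feasible case.
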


\section{Unbiased Prediction Algorithm} \label{app:unbiased-prediction}
In this section we present the $\textsc{Unbiased-Prediction}$ algorithm of \citet{noarov2023highdimensional}.
We first introduce several notations and concepts from \citet{noarov2023highdimensional}. Let $\Pi = \{(x,p,y) \in \cX \times \cY \times \cY\}$ denote the set of possible realized triples at each round. An interaction over $T$ rounds produces a transcript $\pi_T \in \Pi^T$. 
We write $\pi^{<t}_T$ as the prefix of the first $t-1$ triples in $\pi_T$, for any $t \le T$. 
An \textit{event} $E\in\cE$ is a mapping from contexts and predictions to $[0,1]$ , i.e. $E: \cX \times \cY \to [0,1]$. 

The $\textsc{Unbiased-Prediction}$ algorithm makes predictions that are unbiased conditional on a collection of events $\cE$. The algorithm's conditional bias guarantee depends logarithmically on the number of events:

\begin{theorem}\citep{noarov2023highdimensional} \label{thm:unbiased-prediction-algorithm}
For a collection of events $\cE$ and convex prediction/outcome space $\cY\subseteq [0,1]^d$, Algorithm \ref{alg:unbiased-prediction} outputs, on any $T$-round transcript $\pi_T$, a sequence of distributions over predictions $\psi_1,...,\psi_T \in \Delta \cY$ such that for any $E \in \cE$:
    \[
    \left\| \sum_{t=1}^T \E_{p_t\sim\psi_t}[E(x_t, p_t)(p_t - y_t)] \right\|_\infty \leq O\left( \ln(d|\cE|T) + \sqrt{\ln(d|\cE|T) \cdot \sum_{t=1}^T \E_{p_t\sim\psi_t}[E(x_t,p_t)] } \right).
    \]
    The algorithm can be implemented with per-round running time scaling polynomially in $d$ and $|\cE|$.
\end{theorem}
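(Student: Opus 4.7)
}
The plan is to follow the potential-function plus minimax argument of \citet{noarov2023highdimensional}. For every event $E\in\cE$, every coordinate $j\in[d]$, and every sign $s\in\{-1,+1\}$, introduce a signed bias statistic
\[
V^{E,j,s}_t \;=\; \sum_{\tau=1}^{t} E(x_\tau,p_\tau)\cdot s\cdot (p_{\tau,j}-y_{\tau,j}),
\]
and an aggregate (soft-max style) potential $\Phi_t = \sum_{E,j,s} \exp(\eta V^{E,j,s}_t - \eta^2 N^{E}_t)$, where $N^E_t = \sum_{\tau\le t}\E_{p\sim\psi_\tau}[E(x_\tau,p)]$ is the cumulative expected ``firing count'' of $E$ and $\eta$ is a small learning rate to be tuned at the end. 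The exponential is a standard device for getting $\ell_\infty$ bounds across $2d|\cE|$ coordinates with only a $\log(d|\cE|)$ loss, while the $-\eta^2 N^E_t$ offset is the variance correction that will ultimately yield the $\sqrt{\ln(d|\cE|T)\cdot \sum_t \E[E]}$ rate in place of $\sqrt{T\ln(d|\cE|T)}$.

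The core step is to argue that at each round $t$, knowing $x_t$ and the past, the forecaster can choose a distribution $\psi_t\in\Delta\cY$ such that \emph{for every} adversarial outcome $y_t\in\cY$,
\[
\E_{p\sim\psi_t}\!\left[\Phi_t-\Phi_{t-1}\right] \;\le\; 0.
\]
By convexity of $\exp$ and a standard linearization ($e^x\le 1+x+x^2$ for $|x|\le 1$), it suffices to construct $\psi_t$ such that, simultaneously for all $(E,j,s)$, $\E_{p\sim\psi_t}\!\left[ E(x_t,p)\cdot s\cdot (p_j-y_j)\right]\le 0$ for every $y_t\in\cY$. Viewing this as a two-player zero-sum game between the forecaster (choosing $\psi_t\in\Delta\cY$) and the adversary (choosing $y_t\in\cY$), with payoff equal to the worst-case weighted bias increment, Sion's minimax theorem applies because $\cY$ is convex and compact and the payoff is bilinear in $\psi_t$ and $y_t$ once we fix the linearization. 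Solving the inner minimization reduces to a fixed-point condition on $\psi_t$ that is expressible as a convex program of size polynomial in $d$ and $|\cE|$, which gives the claimed per-round running time.

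Granted the per-round inequality $\E[\Phi_t]\le \E[\Phi_{t-1}]$, telescoping yields $\E[\Phi_T]\le \Phi_0 = 2d|\cE|$, and a routine Markov-plus-union-bound argument translates this into a high-probability bound of the form
\[
|V^{E,j,s}_T| \;\le\; O\!\left(\tfrac{1}{\eta}\ln(d|\cE|T) + \eta\, N^E_T\right)
\]
for every $(E,j,s)$ simultaneously. Tuning $\eta$ per event as $\eta \asymp \sqrt{\ln(d|\cE|T)/N^E_T}$ (which can be done adaptively, e.g.\ by a doubling trick on the realized firing count, while keeping the minimax step over a grid of learning rates) recovers the stated bound after converting back from signed coordinate biases to the $\ell_\infty$ norm.

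The hard part is the per-round minimax step: one must show that the fixed-point equation defining $\psi_t$ is solvable and that the resulting distribution can be computed in polynomial time. This is where convexity of $\cY$ is essential — without it, the inner maximization over $y_t$ would not admit a linear (and hence minimax-exchangeable) form — and where the $\exp(\cdot)$ potential structure matters, since it reduces the multi-event guarantee to a single weighted bias inequality to be enforced against the adversary. The concentration step that upgrades the expected-potential bound to a high-probability, per-event bound is then a standard martingale argument (Freedman-style), using that $|V^{E,j,s}_t - V^{E,j,s}_{t-1}|\le 1$ and that the conditional variance is controlled by $\E[E(x_t,p_t)]$, which is exactly what the $-\eta^2 N^E_t$ offset in the potential accounts for.
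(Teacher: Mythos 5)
The paper does not prove this theorem; it is imported verbatim from \citet{noarov2023highdimensional}, with only the algorithm reproduced. So your sketch can only be judged against the known structure of that argument, and at that level you have the right skeleton: a soft-max potential over the $2d|\cE|$ signed test directions with a second-order (variance) offset, whose gradient is exactly the exponential-weights distribution $q_t^{E,i,\sigma}$ appearing in Algorithm \ref{alg:unbiased-prediction}, followed by a per-round minimax step and a telescoping bound $\Phi_T\le \Phi_0=2d|\cE|$.

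There is, however, a concrete misstep in the middle of your argument. You claim it suffices to construct $\psi_t$ such that \emph{simultaneously for all} $(E,j,s)$ and all $y_t\in\cY$ one has $\E_{p\sim\psi_t}[E(x_t,p)\cdot s\cdot(p_j-y_j)]\le 0$. This condition is unachievable: taking the same $(E,j)$ with both signs $s=\pm 1$ forces $\E_{p\sim\psi_t}[E(x_t,p)(p_j-y_j)]=0$ for every $y_j$, which cannot hold unless $\E[E(x_t,p)]=0$. What the linearization $e^x\le 1+x+x^2$ actually requires --- and what the algorithm's $\argmin_{\psi}\max_{y}$ step delivers --- is nonpositivity of the single \emph{$q_t$-weighted} first-order term $\sum_{E,i,\sigma}q_t^{E,i,\sigma}\sigma\,\E_{p\sim\psi_t}[E(x_t,p)(p^i-y^i)]$. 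That the value of this game is $\le 0$ follows from Sion's minimax theorem (the payoff is linear in $y$ and linear in $\psi_t$, and $\cY$ is convex and compact) together with the observation that against any mixed strategy $Y\in\Delta\cY$ the forecaster can play the point prediction $p=\E_{y\sim Y}[y]\in\cY$, which zeroes every term. Your closing paragraph describes exactly this ``single weighted bias inequality,'' so the writeup is internally inconsistent rather than irreparably wrong, but as stated the key per-round step fails. A secondary point: the theorem bounds the deterministic quantities $\sum_t\E_{p_t\sim\psi_t}[\cdot]$, and the algorithm's weights are themselves defined through these expectations, so the potential bound yields the conclusion directly --- no Markov or high-probability step belongs in this proof. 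The passage from $\psi_t$ to realized $p_t$ via Freedman/Azuma is carried out separately in the paper's Corollary \ref{cor:unbiased-prediction-algorithm-subsequence}.
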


\begin{algorithm}[H]
    \For{$t=1$ \KwTo $T$}{
        Observe $x_t$\;
        
        Define the distribution $q_t \in \Delta [2d|\cE|]$ such that for $E \in \cE, i\in[d], \sigma\in \{\pm 1\}$,
        \[
        q_t^{E, i, \sigma} \propto \exp\left( \frac{\eta}{2} \sum_{s=1}^{t-1} \sigma \cdot \E_{p_s\sim\psi_s}[E(x_s, p_s) (p_s^i - y_s^i)] \right);
        \]

        Output the solution to the minmax problem:
        \[
        \psi_t \gets \argmin_{\psi_t' \in \Delta\cY} \max_{y \in \cY} \E_{p_t \sim \psi_t'}\left[\sum_{E, i, \sigma} q_t^{E, i, \sigma} \cdot \sigma \cdot E(x_t, p_t) \cdot (p_t^i - y_t^i) \right];
        \]        
    }    
    \caption{$\textsc{Unbiased-Prediction}$}
    \label{alg:unbiased-prediction}
\end{algorithm}

Theorem \ref{thm:unbiased-prediction-algorithm} is stated as expected error bounds over randomized predictions $\psi_t\in\Delta\cY$. In the following Corollary \ref{cor:unbiased-prediction-algorithm-subsequence}, we state the guarantee based on realized predictions $p_t$ that are sampled from $\psi_t$, and generalize it to our multi-subsequence framework. Our guarantees in Theorem \ref{thm:unbiased-algorithm} directly follow from Corollary \ref{cor:unbiased-prediction-algorithm-subsequence}.

\begin{corollary} \label{cor:unbiased-prediction-algorithm-subsequence}
Let $\cS$ be a collection of subsequences. For a collection of events $\cE$ and convex prediction/outcome space $\cY\subseteq [0,1]^d$, Algorithm \ref{alg:unbiased-prediction} instantiated with the event collection $\{\1[t \in S] \cdot E\}_{S \in \cS, E \in \cE}$ outputs, on any $T$-round transcript $\pi_T$, a sequence of predictions $p_1,...,p_T \in \cY$ satisfying that, with probability at least $1-\delta$:
    \[
        \left\| \sum_{t \in S} E(x_t, p_t)(p_t - y_t) \right\|_\infty \leq O\left( \ln(d|\cE||\cS|T/\delta) \cdot |S|^{1/4} + \sqrt{\ln(d|\cE||\cS|T/\delta) \cdot \sum_{t \in S} E(x_t,p_t) } \right).
    \]
    The algorithm can be implemented with per-round running time scaling polynomially in $d$, $|\cE|$, and $|\cS|$.
\end{corollary}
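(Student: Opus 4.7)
The plan is to reduce the corollary to Theorem \ref{thm:unbiased-prediction-algorithm} via two layers: an enlarged event collection that handles the multi-subsequence quantifier, and a martingale concentration argument that lifts the in-expectation guarantee of Theorem \ref{thm:unbiased-prediction-algorithm} to a realized high-probability guarantee.

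First, I would instantiate Algorithm \ref{alg:unbiased-prediction} with the event collection $\{\1[t \in S] \cdot E : S \in \cS,\ E \in \cE\}$. Each such composite takes values in $[0,1]$ and is evaluated at $(x_t, p_t)$ (the subsequence indicator $\1[t \in S] = h_S(t,x_t)$ is determined by $x_t$), so Theorem \ref{thm:unbiased-prediction-algorithm} applies directly. Since the enlarged collection has size $|\cS| \cdot |\cE|$ and the theorem depends only logarithmically on the number of events, I would obtain, for every $(S,E)$ simultaneously,
$$\left\| \sum_{t \in S} \E_{p_t \sim \psi_t}\!\bigl[E(x_t, p_t)(p_t - y_t)\bigr] \right\|_\infty \le O\!\left(\ln(d|\cE||\cS|T) + \sqrt{\ln(d|\cE||\cS|T) \cdot \sum_{t \in S} \E_{p_t \sim \psi_t}[E(x_t, p_t)]}\right).$$

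The second step is to pass from expectations over $\psi_t$ to realized predictions $p_t$. For each coordinate $i \in [d]$, I would consider the martingale difference sequence $Z_t = \1[t \in S]\bigl(E(x_t, p_t)(p_t^i - y_t^i) - \E_{p_t \sim \psi_t}[E(x_t, p_t)(p_t^i - y_t^i)]\bigr)$, whose increments are bounded in absolute value by $2$ and whose conditional second moment is at most $\E_{p_t \sim \psi_t}[E(x_t, p_t)]$ (using $E \in [0,1]$ and $|p_t^i - y_t^i| \le 1$). A Freedman-type inequality, union-bounded over $(S, E, i, \pm)$ and applied in a time-uniform fashion over prefixes, gives a deviation of at most $O\!\bigl(\sqrt{\ln(d|\cE||\cS|T/\delta) \cdot \sum_{t \in S} \E_{p_t \sim \psi_t}[E(x_t, p_t)]} + \ln(d|\cE||\cS|T/\delta)\bigr)$ with probability $1 - \delta$. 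A parallel Freedman bound on $E(x_t, p_t) - \E_{p_t \sim \psi_t}[E(x_t, p_t)]$ lets me trade the expected mass $\sum_{t \in S} \E[E(x_t, p_t)]$ for the realized mass $\sum_{t \in S} E(x_t, p_t)$ inside the square root, after solving a self-bounding quadratic inequality.

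Combining the algorithm's in-expectation bound with the two concentration bounds by the triangle inequality yields the stated guarantee; the logarithmic dependence on $|\cS|$ of the enlarged event collection is absorbed into the $\ln(d|\cE||\cS|T/\delta)$ factor, and the mild $|S|^{1/4}$ prefactor on the additive logarithmic term arises from consolidating the Freedman residuals with the expected-bias term and from the time-uniform peeling. The main technical obstacle is executing the concentration step uniformly over all subsequences, events, coordinates, and prefixes while keeping only logarithmic union-bound overhead, and carefully inverting the deviation bound so that it preserves the realized-mass form inside the square root rather than the expected one.
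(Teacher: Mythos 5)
Your proposal is correct and follows essentially the same route as the paper: instantiate \textsc{Unbiased-Prediction} on the product event collection $\{\1[t\in S]\cdot E\}_{S\in\cS,E\in\cE}$, invoke Theorem \ref{thm:unbiased-prediction-algorithm} to get the in-expectation bias bound with only a $\ln(|\cS||\cE|)$ overhead, then use Freedman's inequality (union-bounded over coordinates, events, and subsequences) to pass from $\E_{p_t\sim\psi_t}[\cdot]$ to the realized predictions, and finally a second concentration step to replace the expected mass $\sum_{t\in S}\E_{p_t\sim\psi_t}[E(x_t,p_t)]$ by the realized mass inside the square root. The one place you diverge is that second step: the paper uses Azuma--Hoeffding there, incurring a worst-case $O(\sqrt{\ln(\cdot)\,|S|})$ deviation, and it is precisely plugging this $\sqrt{|S|}$ under the outer square root that produces the additive $\ln(\cdot)\cdot|S|^{1/4}$ term in the statement --- not ``time-uniform peeling,'' which is in fact unnecessary since each $S$ is handled by a fixed-endpoint martingale bound. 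Your variance-adaptive Freedman bound with the self-bounding inversion ($V \le R + O(\sqrt{\ln(\cdot)\,V}+\ln(\cdot)) \Rightarrow V \le 2R + O(\ln(\cdot))$) is equally valid and would actually yield the slightly cleaner bound $O(\ln(\cdot) + \sqrt{\ln(\cdot)\sum_{t\in S}E(x_t,p_t)})$ without the $|S|^{1/4}$ term, which is of course still within the stated $O(\cdot)$.
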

\begin{proof}
    Fix any $m \in [d]$, $E \in \cE$, and $S \in \cS$. Consider the sequence $\{E(x_t, p_t)(p_{t,m} - y_{t,m}) - \E_{p_t\sim\psi_t}[E(x_t, p_t)(p_{t,m} - y_{t,m})]\}_{t=1}^T$, where $p_{t,m}$ and $y_{t,m}$ are the $m$-th coordinate of $p_t$ and $y_t$, respectively. It is a sequence of martingale differences, since for any $t \in [T]$: 
    \begin{align*}
        \E\left[ E(x_t, p_t)(p_{t,m} - y_{t,m}) - \E_{p_t\sim\psi_t}[E(x_t, p_t)(p_{t,m} - y_{t,m})] \mid \sigma(\pi_T^{<t},x_t)\right] = 0.
    \end{align*}

    The subsequence of these terms corresponding to rounds $t \in S$, i.e., $\{E(x_t, p_t)(p_{t,m} - y_{t,m}) - \E_{p_t\sim\psi_t}[E(x_t, p_t)(p_{t,m} - y_{t,m})]\}_{t \in S}$, is also a martingale difference sequence, because the selection rule is predictable with respect to the filtration $\sigma(\pi_T^{<t},x_t)$.

    By Freedman's inequality (Lemma \ref{lem:freedman}), we have that with probability at least $1-\frac{\delta}{d|\cE||\cS|}$:
    \begin{align*}
       \MoveEqLeft \left| \sum_{t \in S} E(x_t, p_t)(p_{t,m} - y_{t,m}) - \sum_{t \in S} \E_{p_t\sim\psi_t}[E(x_t, p_t)(p_{t,m} - y_{t,m})] \right| \\
       &\le O \Bigg( \sqrt{ \ln(d|\cE||\cS|\ln(|S|)/\delta) \cdot \sum_{t \in S} \E\left[ \left( E(x_t, p_t)(p_{t,m} - y_{t,m}) - \E_{p_t\sim\psi_t}[E(x_t, p_t)(p_{t,m} - y_{t,m})] \right)^2 \mid \sigma(\pi_T^{<t},x_t) \right] } \\
       &\quad + \ln(d|\cE||\cS|\ln(|S|)/\delta) \Bigg) \\
       &\le O\left( \sqrt{ \ln(d|\cE||\cS|/\delta) \cdot \sum_{t \in S} \E\left[ \left( E(x_t, p_t)(p_{t,m} - y_{t,m}) \right)^2 \mid \sigma(\pi_T^{<t},x_t) \right] } + \ln(d|\cE||\cS|/\delta) \right) \\
       &\le O\left( \sqrt{ \ln(d|\cE||\cS|\ln(|S|)/\delta) \cdot \sum_{t \in S} \E\left[ E(x_t, p_t) \mid \sigma(\pi_T^{<t},x_t) \right] } + \ln(d|\cE||\cS|\ln(|S|)/\delta) \right)
    \end{align*}
    where the second inequality follows from the fact that the conditional variance is less than or equal to the conditional second moment, and the third inequality follows from the fact that $E(x_t,p_t) \in [0,1]$ and $|p_{t,m}-y_{t,m}| \in [0,1]$.

    Using the union bound over all $m \in [d]$, $E \in \cE$, and $S \in \cS$, we have that with probability at least $1-\delta$, for any $E \in \cE$ and any $S \in \cS$:
    \begin{align*}
       \MoveEqLeft \left\| \sum_{t \in S} E(x_t, p_t)(p_{t} - y_{t}) - \sum_{t \in S} \E_{p_t\sim\psi_t}[E(x_t, p_t)(p_{t} - y_{t})] \right\|_\infty \\
       &\le O\left( \sqrt{ \ln(d|\cE||\cS|\ln(|S|)/\delta) \cdot \sum_{t \in S} \E\left[ E(x_t, p_t) \mid \sigma(\pi_T^{<t},x_t) \right] } + \ln(d|\cE||\cS|\ln(|S|)/\delta) \right)
    \end{align*}

    By applying Theorem \ref{thm:unbiased-prediction-algorithm} with the event collection $\{\1[t \in S] \cdot E\}_{S \in \cS, E \in \cE}$ and combining its guarantee with the deviation bound above, we derive that with probability at least $1-\delta$, for any $E \in \cE$ and any $S \in \cS$:
    \begin{align*}
        \MoveEqLeft \left\| \sum_{t \in S} E(x_t, p_t)(p_t - y_t) \right\|_\infty \\
        &\le \left\| \sum_{t=1}^T \E_{p_t\sim\psi_t}[\1[t \in S]E(x_t, p_t)(p_t - y_t)] \right\|_\infty 
        + \left\| \sum_{t \in S} E(x_t, p_t)(p_t - y_t) - \sum_{t \in S} \E_{p_t\sim\psi_t}[E(x_t, p_t)(p_t - y_t)] \right\|_\infty \\
        &\leq O\left( \ln(d|\cE||S|T) + \sqrt{ \ln(d|\cE||S|T) \cdot \sum_{t \in S} \E_{p_t\sim\psi_t}[E(x_t,p_t)] } \right) \\
        &\quad + O\left( \sqrt{ \ln(d|\cE||\cS|\ln(|S|)/\delta) \cdot \sum_{t \in S} \E\left[ E(x_t, p_t) \mid \sigma(\pi_T^{<t},x_t) \right] } + \ln(d|\cE||\cS|\ln(|S|)/\delta) \right) \\
        &\le O \Bigg( \sqrt{ \ln(d|\cE||S|T/\delta) \cdot \sum_{t \in S} \E_{p_t\sim\psi_t}[E(x_t,p_t)] } + \sqrt{ \ln(d|\cE||\cS|T/\delta) \cdot \sum_{t \in S} \E\left[ E(x_t, p_t) \mid \sigma(\pi_T^{<t},x_t) \right] } \\
        &\quad + \ln(d|\cE||\cS|T/\delta) \Bigg).
    \end{align*}

    The next step is to bound the deviation of $\sum_{t \in S} E(x_t,p_t)$ from its expectation $\sum_{t \in S} \E_{p_t\sim\psi_t}[E(x_t,p_t)]$ and from its conditional-expectation-based version $\sum_{t \in S} \E\left[ E(x_t, p_t) \mid \sigma(\pi_T^{<t},x_t) \right]$ for any $S \in \cS$ and $E \in \cE$. We will again apply a martingale concentration inequality.

    Fix any $E \in \cE$ and $S \in \cS$. Consider the sequence $\{E(x_t, p_t) - \E_{p_t\sim\psi_t}[E(x_t, p_t)]\}_{t=1}^T$ and the sequence $\{E(x_t, p_t) - \E\left[ E(x_t, p_t) \mid \sigma(\pi_T^{<t},x_t) \right]\}_{t=1}^T$. Both of them are sequences of martingale differences, since for any $t \in [T]$: 
    \begin{align*}
        & \E\left[ E(x_t, p_t) - \E_{p_t\sim\psi_t}[E(x_t, p_t)] \mid \sigma(\pi_T^{<t},x_t) \right] = 0 \\
        & \E\left[ E(x_t, p_t) - \E\left[ E(x_t, p_t) \mid \sigma(\pi_T^{<t},x_t) \right] \mid \sigma(\pi_T^{<t},x_t) \right] = 0
    \end{align*}

    The subsequence of these terms corresponding to rounds $t \in S$, i.e., $\{E(x_t, p_t) - \E_{p_t\sim\psi_t}[E(x_t, p_t)]\}_{t \in S}$ and $\{E(x_t, p_t) - \E\left[ E(x_t, p_t) \mid \sigma(\pi_T^{<t},x_t) \right]\}_{t \in S}$, are also both martingale difference sequences, because the selection rule is predictable with respect to the filtration $\sigma(\pi_T^{<t},x_t)$.

    By Azuma-Hoeffding inequality (Lemma \ref{lem:azuma}), we have that with probability at least $1-\frac{\delta}{2d|\cE||\cS|}$:
    \begin{align*}
       & \left| \sum_{t \in S} E(x_t, p_t) - \sum_{t \in S} \E_{p_t\sim\psi_t}[E(x_t, p_t)] \right| \le 2\sqrt{2 \ln (4d|\cE||\cS| / \delta) \cdot |S|} \\
       & \left| \sum_{t \in S} E(x_t, p_t) - \sum_{t \in S} \E\left[ E(x_t, p_t) \mid \sigma(\pi_T^{<t},x_t) \right] \right| \le 2\sqrt{2 \ln (4d|\cE||\cS| / \delta) \cdot |S|}.
    \end{align*}

    Using the union bound over all $E \in \cE$ and $S \in \cS$, we have that with probability at least $1-\delta$, for any $E \in \cE$ and any $S \in \cS$:
    \begin{align*}
       & \left| \sum_{t \in S} E(x_t, p_t) - \sum_{t \in S} \E_{p_t\sim\psi_t}[E(x_t, p_t)] \right| \le O\left( \sqrt{ \ln (d|\cE||\cS| / \delta) \cdot |S|} \right) \\
       & \left| \sum_{t \in S} E(x_t, p_t) - \sum_{t \in S} \E\left[ E(x_t, p_t) \mid \sigma(\pi_T^{<t},x_t) \right] \right| \le O\left( \sqrt{ \ln (d|\cE||\cS| / \delta) \cdot |S|} \right)
    \end{align*}

    Finally, substituting the above concentration bound for $\sum_{t \in S} E(x_t, p_t)$ into our high-probability guarantee yields the final result, with probability at least $1-\delta$.
    \begin{align*}
        \MoveEqLeft \left\| \sum_{t \in S} E(x_t, p_t)(p_t - y_t) \right\|_\infty \\
        &\le O \left( \sqrt{ \ln(d|\cE||S|T/\delta) \cdot \sum_{t \in S} \left( E(x_t,p_t) + O\left( \sqrt{ \ln (d|\cE||\cS| / \delta) \cdot |S|} \right) \right) } + \ln(d|\cE||\cS|T/\delta) \right) \\
        &\le O\left( \ln(d|\cE||\cS|T/\delta) \cdot |S|^{1/4} + \sqrt{ \ln(d|\cE||S|T/\delta) \cdot \sum_{t \in S} E(x_t,p_t)} \right)
    \end{align*}
    
\end{proof}

\section{Azuma-Hoeffding's Inequality} \label{app:azuma}
\begin{lemma} \label{lem:azuma}
Let $Z_1, \ldots, Z_T$ be a martingale difference sequence. $\left|Z_t\right| \leq M$ for all $t$. Then with probability at least $1-\delta$:
$$
\left|\sum_{t=1}^T Z_t\right| \leq M \sqrt{2 T \ln \frac{2}{\delta}}
$$
\end{lemma}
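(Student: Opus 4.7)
The plan is to follow the standard Chernoff/martingale-moment-generating-function argument. First I would fix a parameter $\lambda > 0$ and control the conditional moment generating function of each $Z_t$ given the natural filtration $\mathcal{F}_{t-1}$. Since $Z_t$ is $\mathcal{F}_{t-1}$-conditionally mean-zero and takes values in $[-M,M]$, Hoeffding's lemma (a one-line convexity argument applied to $e^{\lambda z}$ on the interval $[-M,M]$) gives
\[
\E\bigl[e^{\lambda Z_t} \mid \mathcal{F}_{t-1}\bigr] \leq e^{\lambda^2 M^2 / 2}.
\]
This is the key ingredient that turns the boundedness assumption $|Z_t|\leq M$ into a sub-Gaussian-style control.

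Next I would unroll the moment generating function of the partial sum $S_T = \sum_{t=1}^T Z_t$ by the tower property. Writing $\E[e^{\lambda S_T}] = \E\bigl[e^{\lambda S_{T-1}} \, \E[e^{\lambda Z_T}\mid \mathcal{F}_{T-1}]\bigr]$ and applying the above bound, an induction on $T$ yields
\[
\E\bigl[e^{\lambda S_T}\bigr] \leq e^{\lambda^2 T M^2 / 2}.
\]
Then a standard Markov/Chernoff step gives, for any $\varepsilon > 0$, $\Pr[S_T \geq \varepsilon] \leq \exp(\lambda^2 TM^2/2 - \lambda \varepsilon)$. Optimizing over $\lambda$ by choosing $\lambda = \varepsilon/(TM^2)$ produces the one-sided Azuma bound $\Pr[S_T \geq \varepsilon] \leq \exp(-\varepsilon^2/(2TM^2))$.

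Finally, I would apply the same argument to the sequence $\{-Z_t\}$, which is itself a bounded martingale difference sequence, to control the lower tail, and then take a union bound to get $\Pr[|S_T| \geq \varepsilon] \leq 2\exp(-\varepsilon^2/(2TM^2))$. Setting the right-hand side equal to $\delta$ and solving yields $\varepsilon = M\sqrt{2T\ln(2/\delta)}$, which is exactly the claimed bound.

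There is essentially no substantive obstacle here; the entire argument is a textbook computation once Hoeffding's lemma is in hand. The only place requiring any care is the conditional version of Hoeffding's lemma, but it is immediate from the unconditional statement applied to the regular conditional distribution of $Z_t$ given $\mathcal{F}_{t-1}$, using that $|Z_t|\leq M$ is an almost-sure (and therefore conditional-almost-sure) bound and that $\E[Z_t\mid \mathcal{F}_{t-1}] = 0$ by the martingale difference property.
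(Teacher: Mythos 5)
Your proof is correct: it is the standard Chernoff/Hoeffding-lemma argument for the Azuma--Hoeffding inequality, and the constants work out exactly as claimed. The paper states this lemma without proof as a standard textbook fact, so there is no alternative approach to compare against; your argument is the canonical one.
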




\section{Freedman’s Inequality} \label{app:freedman}

The following lemma gives a standard form of Freedman's inequality, which can be found in works such as \cite{tropp2011freedman}.
\begin{lemma} \label{lem:freedman-standard}
    Let $(\cF_i)_{i=0}^n$ be a filtration. Let $Z_1, \dots, Z_n$ be a martingale difference sequence with respect to $(\cF_i)_{i=0}^n$. $Z_i \le M$ for all $i$. Let $V_n = \sum_{i=1}^{n} \mathbb{E}[Z_i^2 | \cF_{i-1}]$. Then, for all $\tau \ge 0$ and $v > 0$,
    \[
        P\left( \sum_{i=1}^n Z_i \ge \tau, V_n \le v \right) \le \exp\left( -\frac{\tau^2/2}{v+M\tau/3} \right)
    \]
\end{lemma}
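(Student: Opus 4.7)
The plan is to follow the classical exponential-supermartingale (Bernstein-style) argument. The key tool is the deterministic inequality
\[
    e^{x} - 1 - x \;\le\; \frac{x^2/2}{1 - x/3} \qquad \text{for all } x < 3,
\]
which can be verified by comparing Taylor series term-by-term (writing the left side as $\sum_{k\ge 2} x^k/k!$ and bounding $1/k! \le 1/(2\cdot 3^{k-2})$). Combined with $Z_i \le M$, this yields, for every $\lambda \in [0, 3/M)$ and every $i$, the conditional MGF bound
\[
    \mathbb{E}\!\left[e^{\lambda Z_i} \,\middle|\, \cF_{i-1}\right] \;\le\; \exp\!\left( \frac{\lambda^2/2}{1 - \lambda M/3} \cdot \mathbb{E}[Z_i^2 \mid \cF_{i-1}] \right),
\]
using $\mathbb{E}[Z_i \mid \cF_{i-1}] = 0$ and the elementary inequality applied pointwise to $\lambda Z_i$.

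Next, define $S_k = \sum_{i=1}^k Z_i$ and $g(\lambda) = \frac{\lambda^2/2}{1 - \lambda M /3}$, and consider the process
\[
    W_k(\lambda) \;=\; \exp\!\bigl(\lambda S_k - g(\lambda) V_k\bigr), \qquad V_k = \sum_{i=1}^k \mathbb{E}[Z_i^2 \mid \cF_{i-1}].
\]
The MGF bound above shows $\mathbb{E}[W_k(\lambda) \mid \cF_{k-1}] \le W_{k-1}(\lambda)$, so $W_k(\lambda)$ is a nonnegative supermartingale with $W_0(\lambda) = 1$, and in particular $\mathbb{E}[W_n(\lambda)] \le 1$.

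Now apply a standard Markov argument on the target event. On $\{S_n \ge \tau,\, V_n \le v\}$ we have $W_n(\lambda) \ge \exp(\lambda \tau - g(\lambda) v)$, so Markov gives
\[
    P\!\left(S_n \ge \tau,\, V_n \le v\right) \;\le\; \exp\!\bigl(-\lambda \tau + g(\lambda) v\bigr).
\]
Finally, choose $\lambda = \tau/(v + M\tau/3)$, which lies in $(0, 3/M)$, and a short calculation shows $\lambda \tau - g(\lambda) v = \tfrac{\tau^2/2}{v + M\tau/3}$, yielding the claimed bound.

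The main obstacle is really just the first step: establishing the sharp one-sided MGF bound $\mathbb{E}[e^{\lambda Z}] \le \exp(g(\lambda)\mathbb{E}[Z^2])$ for mean-zero $Z \le M$. Once that inequality is in hand, the supermartingale construction, Markov's inequality, and optimization over $\lambda$ are routine. A minor bookkeeping point is that the supermartingale argument only needs deterministic $v$ (not a stopped version), so the event $\{V_n \le v\}$ can be absorbed directly into the Markov step without stopping-time machinery.
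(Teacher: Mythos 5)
Your proof is correct, and it is the classical exponential-supermartingale argument for Freedman's inequality. Note that the paper does not prove this lemma at all --- it states it as a standard result and cites it to the literature (\citealp{tropp2011freedman}) --- so there is no in-paper derivation to compare against; your write-up supplies exactly the argument such citations point to. One point of rigor worth fixing: the term-by-term Taylor comparison $1/k! \le 1/(2\cdot 3^{k-2})$ justifies $e^{x}-1-x \le \frac{x^2/2}{1-x/3}$ only for $x \ge 0$, since for $x<0$ the series $\sum_{k\ge 2}x^k/k!$ alternates in sign and cannot be bounded term by term. This matters here because the $Z_i$ are only bounded \emph{above} by $M$ and may be arbitrarily negative, so the pointwise inequality must hold for all $x = \lambda Z_i < 3$. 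The repair is one line: for $x\le 0$ use that $(e^x-1-x)/x^2$ is nondecreasing, hence at most its value $1/2$ at $0$, and that $1-x/3\ge 1$ there. With that patch, the rest is sound: the conditional MGF bound follows from $Z_i\le M$ and $\lambda\in[0,3/M)$ (so $1-\lambda Z_i/3 \ge 1-\lambda M/3>0$), $W_k(\lambda)$ is a genuine supermartingale because $V_k$ is predictable, the Markov step on $\{S_n\ge\tau,\,V_n\le v\}$ needs no stopping-time machinery since $v$ is deterministic, and the choice $\lambda=\tau/(v+M\tau/3)$ indeed gives $\lambda\tau-g(\lambda)v=\lambda\tau/2=\frac{\tau^2/2}{v+M\tau/3}$.
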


We derive the following convenient form that we use in our proofs.
\begin{lemma} \label{lem:freedman}
    Let $(\cF_i)_{i=0}^n$ be a filtration. Let $Z_1, \dots, Z_n$ be a martingale difference sequence with respect to $(\cF_i)_{i=0}^n$. $Z_i \le M$ for all $i$. Let $V_n = \sum_{i=1}^{n} \mathbb{E}[Z_i^2 | \cF_{i-1}]$. For any $\delta \in (0,1)$, with probability at least $1-\delta$:
    \[
        \sum_{i=1}^{n} Z_i \le 2\sqrt{V_n \left( \ln(1/\delta)+C_{n,M} \right)} + \left( \frac{2}{3}M+3 \right) \left( \ln(1/\delta)+C_{n,M} \right)
    \]
    where $C_{n,M} = 2\ln(\ln(nM^2)+1) + 2\ln(2)$.
\end{lemma}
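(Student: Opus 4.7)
\begin{proofof}{Lemma \ref{lem:freedman}}
The plan is a standard peeling argument that converts the conditional tail bound of Lemma \ref{lem:freedman-standard} into an unconditional bound in terms of the random variance $V_n$.

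First, I would invert the standard form of Freedman's inequality. For any fixed $v > 0$ and $\delta' \in (0,1)$, solving the quadratic inequality $\tau^2/(2(v + M\tau/3)) \ge \ln(1/\delta')$ shows that
\[
\tau(v,\delta') := 2\sqrt{v \ln(1/\delta')} + \tfrac{2M}{3}\ln(1/\delta')
\]
satisfies $P\bigl(\sum_{i=1}^n Z_i \ge \tau(v,\delta'),\, V_n \le v\bigr) \le \delta'$. This is the one-sided deviation bound parameterized by a deterministic variance ceiling $v$.

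Next, I would peel over geometric levels of the random variable $V_n$. Since $|Z_i| \le M$ implies $\mathbb{E}[Z_i^2 \mid \cF_{i-1}] \le M^2$, we have $V_n \le nM^2$ almost surely. Define the grid $v_k = 2^k$ for $k = 0, 1, \dots, K$ with $K = \lceil \log_2(nM^2) \rceil + 1$, and allocate failure budget $\delta_k = \tfrac{6}{\pi^2} \cdot \delta/(k+1)^2$, so that $\sum_k \delta_k \le \delta$ and $\ln(1/\delta_k) \le \ln(1/\delta) + 2\ln(k+1) + \ln(\pi^2/6)$. Applying the one-level bound at each $(v_k, \delta_k)$ and union-bounding, with probability at least $1-\delta$, for every $k$ either $V_n > v_k$ or $\sum_i Z_i < \tau(v_k, \delta_k)$.

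On this event, let $k^\star$ be the smallest $k$ with $V_n \le v_{k^\star}$, so $v_{k^\star} \le 2\max(V_n, 1) \le 2(V_n + 1)$ and $k^\star \le K$. Plugging in gives
\[
\sum_{i=1}^n Z_i \le 2\sqrt{2(V_n+1)\bigl(\ln(1/\delta) + C'_{n,M}\bigr)} + \tfrac{2M}{3}\bigl(\ln(1/\delta) + C'_{n,M}\bigr),
\]
where $C'_{n,M} = 2\ln(\ln(nM^2)+1) + \ln(\pi^2/6) + O(1)$ matches the stated $C_{n,M}$ up to absolute constants. To reach the exact form of the target bound, I would absorb the $\sqrt{2}$ factor and the additive $+1$ inside the square root into the linear term using $\sqrt{2(V_n+1)\,x} \le \sqrt{V_n \cdot x} + (\sqrt{2}-1)\sqrt{V_n \cdot x} + \sqrt{2x}$ and then AM--GM ($2ab \le a^2 + b^2$) applied to $(\sqrt{2}-1)\sqrt{V_n \cdot x}$, routing the cross term into the coefficient of $\ln(1/\delta) + C_{n,M}$. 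This inflates the linear coefficient from $\tfrac{2}{3}M$ to $\tfrac{2}{3}M + 3$ as desired.

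The main obstacle is the bookkeeping of constants in the final step: the peeling naturally introduces a $\sqrt{2}$ multiplicative slack in the variance term, and the target bound insists on a clean leading constant of $2$ in front of $\sqrt{V_n(\ldots)}$, so all the slack must be pushed into the additive term via AM--GM. This is elementary but needs to be executed carefully to land on exactly the constants $2$ and $\tfrac{2}{3}M + 3$ with $C_{n,M} = 2\ln(\ln(nM^2)+1) + 2\ln 2$.
\end{proofof}
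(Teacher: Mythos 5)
Your proposal follows the same skeleton as the paper's proof: invert the conditional tail bound of Lemma \ref{lem:freedman-standard} to get a deviation $\tau(v,\delta')$ for a deterministic variance ceiling $v$, peel over geometric levels $v_k = 2^k$ with a $1/k^2$-weighted union bound (valid since $V_n \le nM^2$), locate the bracketing level for the random $V_n$ on the good event, and clean up the logarithms into $C_{n,M}$. All of that matches.

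There is, however, a concrete flaw in your final constant-absorption step. You start from the loose inversion $\tau(v,\delta') = 2\sqrt{v\ln(1/\delta')} + \tfrac{2M}{3}\ln(1/\delta')$, so after peeling with $v_{k^\star} \le 2(V_n+1)$ you face $2\sqrt{2(V_n+1)x}$ with $x = \ln(1/\delta)+C_{n,M}$, and you propose to push the excess $2(\sqrt2 - 1)\sqrt{V_n x}$ into the linear term via AM--GM. But AM--GM applied to $\sqrt{V_n}\cdot\sqrt{x}$ produces a term proportional to $V_n$ itself (with no square root), and the target bound has no such term --- $V_n$ can be as large as $nM^2$, so this is not absorbable into $(\tfrac23 M + 3)x$. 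The multiplicative $\sqrt2$ slack on the $\sqrt{V_n x}$ term therefore cannot be removed by your proposed routing, and your argument as written lands on leading constant $2\sqrt2$ rather than $2$. The fix is exactly what the paper does: use the \emph{tight} inversion $\tau = \sqrt{2v\ln(1/\delta')} + \tfrac{2}{3}M\ln(1/\delta')$ (one verifies directly that $\tau^2/2 - (v+M\tau/3)\ln(1/\delta') > 0$), so that after the factor-$2$ peeling loss one gets $\sqrt{2\cdot 2V_n\ln(1/\delta_k)} = 2\sqrt{V_n\ln(1/\delta_k)}$ exactly; the only leftover is the additive $2\sqrt{\ln(1/\delta_k)}$ from the bottom level, which is legitimately absorbed into $3\ln(1/\delta_k)$ using $\ln(1/\delta_k) \ge \ln(\pi^2/6) > 0.49$. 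With that substitution your argument goes through and yields the stated constants.
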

\begin{proof}
    Let $\tau = \sqrt{2v\ln(1/\delta)} + \frac{2}{3}M\ln(1/\delta)$, it satisfies that $\frac{\tau^2/2}{v+M\tau/3} > \ln(1/\delta)$, because:
    \begin{align*}
        \tau^2/2 - (v+M\tau/3)\ln(1/\delta) 
        &= v\ln(1/\delta) + \frac{2}{9}M^2\ln^2(1/\delta) + \sqrt{2v\ln(1/\delta)} \cdot \frac{2}{3}M\ln(1/\delta) \\
        &\quad - v\ln(1/\delta) - \frac{1}{3}M \left( \sqrt{2v\ln(1/\delta)} + \frac{2}{3}M\ln(1/\delta) \right) \ln(1/\delta) \\
        &= \sqrt{2v\ln(1/\delta)} \cdot \frac{2}{3}M\ln(1/\delta) - \frac{1}{3}M \cdot \sqrt{2v\ln(1/\delta)} \cdot \ln(1/\delta) \\
        &= \frac{1}{3}M \cdot \sqrt{2v\ln(1/\delta)} \cdot \ln(1/\delta) \\
        &> 0
    \end{align*}

    Applying Lemma \ref{lem:freedman-standard} with this choice of $\tau$, we derive that for any $\delta \in (0,1)$ and $v > 0$:
    \begin{align*}
        P\left( \sum_{i=1}^n Z_i \ge  \sqrt{2v\ln(1/\delta)} + \frac{2}{3}M\ln(1/\delta), \; V_n \le v \right) \le \delta
    \end{align*}

    Let $K = \lceil \log_2(nM^2) \rceil$. Let $\delta_k = \frac{6\delta}{\pi^2k^2}$ and $v_k = 2^k$ for $k = 1, \ldots, K$.
    Instantiate the above inequality with $\delta_k$ and $v_k$, we have for any $k \in [K]$:
    \begin{align*}
        P(E_k) \le \delta_k
    \end{align*}
    where $E_k$ denotes the event:
    \begin{align*}
        E_k = \left\{ \sum_{i=1}^n Z_i \ge  \sqrt{2v_k\ln(1/\delta_k)} + \frac{2}{3}M\ln(1/\delta_k), \; V_n \le v_k \right\}
    \end{align*}

    Applying the union bound, we have:
    \begin{align*}
        P\left( \cup_{k=1}^K E_k \right) &\le \sum_{k=1}^K P(E_k) \\
        &\le \sum_{k=1}^K \delta_k \\
        &= \sum_{k=1}^K \frac{6\delta}{\pi^2k^2} \\
        &< \delta
    \end{align*}

    Therefore, with probability at least $1-\delta$, none of the events $E_k$ occurs. We will prove that our desired guarantee holds true conditional on this high-probability event $\cap_{k=1}^K E_k^c$.

    For convenience, let $v_0 = 0$. Then $\{(v_{k-1},v_k]\}_{k=1}^K$ forms a partition of $(0,2^K]$. Since $0 < V_n \le nM^2 \le 2^K$, there must exist $k \in [1,K]$, such that $v_{k-1} < V_n \le v_k$.
    
    Since $E_k$ does not happen, it must be that $\sum_{i=1}^n Z_i < \sqrt{2v_k\ln(1/\delta_k)} + \frac{2}{3}M\ln(1/\delta_k)$.

    If $k \in [2,K]$, then $v_k = 2v_{k-1} < 2V_n$. Hence, $\sum_{i=1}^n Z_i < \sqrt{4V_n\ln(1/\delta_k)} + \frac{2}{3}M\ln(1/\delta_k)$.

    If $k = 1$, then $v_k = 2^k = 2$. Hence, $\sum_{i=1}^n Z_i < 2\sqrt{\ln(1/\delta_k)} + \frac{2}{3}M\ln(1/\delta_k)$.

    Therefore, conditional on the event $\cap_{k=1}^K E_k^c$, which happens with probability at least $1-\delta$, it always holds true that:
    \begin{align*}
        \sum_{i=1}^n Z_i < 2\sqrt{V_n\ln(1/\delta_k)} + 2\sqrt{\ln(1/\delta_k)} + \frac{2}{3}M\ln(1/\delta_k)
    \end{align*}

    By definition of $\delta_k$, we have $\ln(1/\delta_k) = \ln\frac{\pi^2k^2}{6\delta} \ge \ln\frac{\pi^2}{6} > 0.49$. Hence, $\sqrt{\ln(1/\delta_k)} > 0.7$. Then we have:
    \begin{align*}
        \sum_{i=1}^n Z_i &< 2\sqrt{V_n\ln(1/\delta_k)} + 3 \cdot 0.7 \cdot \sqrt{\ln(1/\delta_k)} + \frac{2}{3}M\ln(1/\delta_k) \\
        &< 2\sqrt{V_n\ln(1/\delta_k)} + \left( \frac{2}{3}M+3 \right) \ln(1/\delta_k)
    \end{align*}

    We note that
    \begin{align*}
        \ln(1/\delta_k) &= \ln\frac{\pi^2k^2}{6\delta} \\
        &= 2\ln\frac{\pi k}{\sqrt{6}} + \ln(1/\delta) \\
        &\le 2\ln\frac{\pi K}{\sqrt{6}} + \ln(1/\delta) \\
        &\le 2\ln\frac{\pi (\log_2(nM^2)+1)}{\sqrt{6}} + \ln(1/\delta) \\
        &\le 2\ln(2\ln(nM^2)+2) + \ln(1/\delta) \\
        &= C_{n,M} + \ln(1/\delta)
    \end{align*}

    Therefore, with probability at least $1-\delta$:
    \begin{align*}
        \sum_{i=1}^n Z_i &< 2\sqrt{V_n \left( \ln(1/\delta)+C_{n,M} \right)} + \left( \frac{2}{3}M+3 \right) \left( \ln(1/\delta)+C_{n,M} \right)
    \end{align*}
\end{proof}

\ifarxiv
\else
\section{Use of Large Language Models} \label{app:LLM}
We used Gemini 2.5 Pro as a writing assistant. Its use was focused on improving language and readability, including tasks such as correcting grammar, refining sentence structure, and ensuring stylistic consistency.
\fi

\end{document}